\crefname{equation}{equation}{equations}
\crefname{lemma}{lemma}{lemmata}
\crefname{claim}{claim}{claims}
\crefname{theorem}{theorem}{theorems}
\crefname{proposition}{proposition}{propositions}
\crefname{corollary}{corollary}{corollaries}
\crefname{claim}{claim}{claims}
\crefname{remark}{remark}{remarks}
\crefname{definition}{definition}{definitions}
\crefname{fact}{fact}{facts}
\crefname{question}{question}{questions}
\crefname{condition}{condition}{conditions}
\crefname{algorithm}{algorithm}{algorithms}
\crefname{assumption}{assumption}{assumptions}
\crefname{problem}{problem}{problems}
\newtheorem{theorem}{Theorem}[section]
\newtheorem{lemma}[theorem]{Lemma}
\newtheorem{proposition}[theorem]{Proposition}
\newtheorem{corollary}[theorem]{Corollary}
\newtheorem{claim}[theorem]{Claim}
\newtheorem{definition}[theorem]{Definition}
\newtheorem{fact}[theorem]{Fact}
\theoremstyle{definition}
\newtheorem{assumption}[theorem]{Assumption}
\newtheorem{remark}[theorem]{Remark}
\newcommand{\cons}{\text{\cons}}
\newcommand{\eps}{\epsilon}
\newcommand{\poly}{\mathrm{poly}}
\newcommand{\cov}{\mathbf{Cov}}
\newcommand{\Var}{\mathbf{Var}}
\def\D{\mathcal D}
\def\R{\mathbb R}
\def\op{\text{op}}
\def\N{\mathbb N}
\def\Z{\mathbb Z}
\newcommand{\cA}{\mathcal{A}}
\newcommand{\cE}{\mathcal{E}}
\newcommand{\cN}{\mathcal{N}}
\newcommand{\iid}{{i.i.d.}\ }
\newcommand{\Paren}[1]{\left(#1\right)}
\newcommand{\Brac}[1]{\left[#1\right]}
\newcommand{\abs}[1]{\lvert#1\rvert}
\newcommand{\Set}[1]{\left\{#1\right\}}
\newcommand{\norm}[1]{\lVert#1\rVert}
\newcommand{\normt}[1]{\norm{#1}_2}
\newcommand{\iprod}[1]{\langle#1\rangle}
\newcommand{\hide}[1]{}
\DeclareMathOperator*{\E}{\mathbf{E}}
\newcommand{\littlesum}{\mathop{\textstyle \sum}}
\newcommand{\lp}{\left}
\newcommand{\rp}{\right}
\newcommand{\normal}{\mathcal{N}}
\let\vec\mathbf
\DeclarePairedDelimiter\ceil{\lceil}{\rceil}
\def\colorful{0}
\newcommand{\inote}[1]{\footnote{{\bf [Ilias: {#1}\bf ]}}}
\newcommand{\anote}[1]{\footnote{{\bf [Ankit: {#1}\bf ]}}}
\newcommand{\tnote}[1]{\footnote{{\bf [Thanasis: {#1}\bf ] }}}
\newcommand{\snote}[1]{\footnote{{\bf [Sushrut: {#1}\bf ] }}}
\newcommand{\lnote}[1]{\footnote{{\bf [Sihan: {#1}\bf ] }}}
\newcommand{\inote}[1]{}
\newcommand{\anote}[1]{}
\newcommand{\tnote}[1]{}
\newcommand{\snote}[1]{}
\newcommand{\lnote}[1]{}
\newcommand{\IE}{\text{IE}}
\newcommand{\momb}{Q}
\newcommand{\marginal}{\mathcal G}
\newcommand{\constrain}{\{ \normt{v}^2 = 1 \}}
\newcommand{\batchsc}{ (4kd)^{8k} \momb^{-1} + 1}
\newcommand{\Mbound}{ O((2k)^{2k}/n^{k}) \; Q \; ( \sigma^{2k} + 2 \; \norm{\beta^*}_2^{2k} ) }
\newcommand{\betagap}{R + k \alpha^{-1/k} \sigma Q^{1/k} / \sqrt{n}}
\newcommand{\shortbetagap}{R + 
\frac{k \alpha^{-1/k} \sigma Q^{1/k}}{\sqrt{n}}}
\newcommand{\prunebs}{k \; Q^{2/k} \; \alpha^{-2/k}}
\newcommand{\sosbetagap}{  \lp({k}{ \sqrt{n} }\rp) \; Q^{1/(2k)} \;  (R + \sigma) \; \alpha^{-3/k}}
\title{Batch List-Decodable Linear Regression via Higher Moments}
\author{
Ilias Diakonikolas\thanks{Supported by NSF Medium Award CCF-2107079 and an H.I. Romnes Faculty Fellowship.}\\
University of Wisconsin-Madison\\
{\tt ilias@cs.wisc.edu}
\and
Daniel M. Kane\thanks{Supported by NSF Medium Award CCF-2107547 and NSF Award CCF-1553288 (CAREER).}\\
UC San Diego\\
{\tt dakane@ucsd.edu}
\and
Sushrut Karmalkar\thanks{Supported by NSF under Grant \#2127309 to the Computing Research Association for the CIFellows 2021 Project.}\\
University of Wisconsin-Madison\\
{\tt s.sushrut@gmail.com}
\and
Sihan Liu\\
UC San Diego\\
{\tt sil046@ucsd.edu}
\and
Thanasis Pittas\thanks{Supported by NSF Medium Award CCF-2107079 and NSF Award DMS-2023239 (TRIPODS).}\\
University of Wisconsin-Madison\\
{\tt pittas@wisc.edu}
}
\begin{document}
	
\maketitle

\begin{abstract}%
We study the task of list-decodable linear regression using batches. %
A batch is called clean if the points it contains are i.i.d.\ 
samples from an unknown linear regression distribution. 
For a parameter $\alpha \in (0, 1/2)$, an unknown $\alpha$-fraction 
of the batches are clean 
and no assumptions are made on the remaining batches. 
The goal is to output a small list of vectors at least one of which 
is close to the true regressor vector in $\ell_2$-norm. \cite{DasJKS22} gave an efficient algorithm for this task, under natural distributional assumptions, 
with the following guarantee. 
Under the assumption that the batch size $n$ 
satisfies $n \geq \tilde{\Omega}(\alpha^{-1})$ 
and the number of batches 
is $m = \poly(d, n, 1/\alpha)$, 
their algorithm runs in polynomial time and 
outputs a list of $O(1/\alpha^2)$ vectors at least one of which 
is $\tilde{O}(\alpha^{-1/2}/\sqrt{n})$ close to the target regressor. Here we design a new polynomial-time algorithm 
for this task with significantly stronger guarantees 
under the assumption that the low-degree moments 
of the covariates distribution are 
Sum-of-Squares (SoS) certifiably bounded. 
Specifically, for any constant $\delta>0$, as long as the batch size is 
$n \geq \Omega_{\delta}(\alpha^{-\delta})$
and the degree-$\Theta(1/\delta)$ moments of the covariates are SoS certifiably bounded, 
our algorithm uses $m = \poly((dn)^{1/\delta}, 1/\alpha)$ batches,
runs in polynomial-time, 
and outputs an $O(1/\alpha)$-sized list of vectors one of which is 
$O(\alpha^{-\delta/2}/\sqrt{n})$ close to the target. That is, %
our algorithm achieves substantially 
smaller minimum batch size 
and final error,  
while achieving the optimal list size. 
Our approach leverages higher-order moment information 
by carefully combining the SoS paradigm 
interleaved with an iterative method and a novel list  
pruning procedure for this setting. 
In the process, we give an SoS proof 
of the Marcinkiewicz-Zygmund inequality 
that may be of broader applicability. 
\end{abstract}

\setcounter{page}{0}

\thispagestyle{empty}

\newpage

\section{Introduction}
In several modern applications of data analysis, 
including
federated learning \cite{wang2021field}, 
sensor networks \cite{WaxZ89}, 
and crowdsourcing \cite{SteVC16},
it is typically infeasible to collect 
large datasets from a single source. 
Instead, samples are collected in batches from multiple sources. 
Unfortunately, it is often hard to find sources that provide many samples, i.e., that have large-size batches.
A standard example is a movie recommendation system
using rates collected from users.
Here, an individual user is often unlikely to provide rating scores for a large number of movies, frequently resulting in data batches with relatively small sizes.
Even less favorable, in such crowdsourcing settings, 
it is also the case that a \emph{majority} of the participants 
might be unreliable~\cite{SteKL17,SteVC16,CSV17}. 
Such practical scenarios serve as motivation for this work.

Formally, we study 
the task of linear regression
under the assumption that we are given access to the model through small batches of samples collected from different sources. 
Importantly, as motivated by our running example, we consider the setting where \emph{most} batches might not be collected from reliable sources.
Our formal setup is encapsulated in the following definition. 

\begin{definition}[List-Decodable Linear Regression using Batches]\label{def:glr} 
Let $D_{\beta^{*}}$ be the distribution on pairs $(X,y) \in \R^{d+1}$ such that  $y = {\beta^*}^\top X + \xi$, for $\xi \sim \cN(0,\sigma^2)$ and $X \sim \marginal$ that are drawn independently from each other.
Suppose we are given $m$ batches of size $n$ each, where for each batch, with probability $\alpha$ the batch consists entirely of \iid samples from $D_{\beta^{*}}$ and with probability $1-\alpha$  it is
drawn from some arbitrary distribution.
The goal is to output a list $L$ of vectors in $\R^d$ with $|L| \leq O(1/\alpha)$ and the guarantee that there is a $\widehat \beta \in L$ such that $\norm{\widehat \beta - \beta^*}_2$ is small. 
\end{definition}

For the vanilla setting of linear regression, with batch 
size $n=1$ and no outliers, the classical least-squares 
estimator is essentially optimal. Unfortunately, even a 
single outlier is enough to force the least-squares 
estimator to deviate arbitrarily. 
To address this discrepancy,~\cite{Hub64,Rousseeuw:1987} 
proposed classical robust estimators that could handle a 
constant fraction of outliers. However, these estimators are {computationally intractable (i.e., have runtime exponential in the dimension)}. 
Starting with the works of \cite{DKKLMS16, LaiRV16}, there have been a flurry of results
designing computationally efficient estimators in high dimensions 
which are robust to a small constant fraction 
of arbitrary outliers in the data.
See \cite{diakonikolas2023algorithmic} for an overview of this field.

The regime where a {\em majority} of the data might be outliers, known as list-decodable setting, was initially examined for mean estimation, where~\cite{CSV17} demonstrated the first polynomial-time algorithm under natural distributional assumptions.
Their algorithm computes a small list of hypotheses
with the guarantee that one element in the list is close to the target. Generating a list of candidates, as opposed to a single solution, is information-theoretically necessary in this regime (intuitively because the outliers can mimic legitimate data points).
The size of the list typically scales polynomially with the inverse of the inlier fraction, $\alpha$. 
The problem of list-decodable linear regression {(with batches of size $n=1$)}
was first studied in~\cite{KarKK19,RagYau19-list}.
Unfortunately, {the algorithms obtained in both of these works had 
sample and computational complexities scaling {\em exponentially } in $1/\alpha$, 
specifically of the form $d^{\poly(1/\alpha)}$ for $d$ dimensions.}
Interestingly, it was subsequently shown~\cite{DiaKPPS21} that such a dependence 
may be inherent {(for Statistical Query algorithms and low-degree polynomial tests---two powerful, yet restricted, models of computation)}.

{Motivated by this hardness result, \cite{DasJKS22} proposed the batch version of the list-decodable linear regression problem (\Cref{def:glr}). The hope was that by introducing (sufficiently large) batches, the exponential complexity dependence on $1/\alpha$ can be eliminated. Before we summarize their results, some comments are in order regarding 
\Cref{def:glr}.} First, in the extreme case where the batch size is $n=1$,  we 
recover the standard list-decodable setting. Second, in the other extreme 
where $n = \Omega(d)$, the problem becomes straightforward, 
since one batch contains sufficient information to recover the target regression 
vector. 
As discussed in our running example, 
the batch size, which corresponds to the number of samples collected from a single source, is rarely large enough in real-world applications with high-dimensional data. 
This leaves the regime of $1<n \ll d$ as the most 
meaningful. \cite{DasJKS22} showed that using $m = \poly(d, n, 1/\alpha)$ 
batches of size $n \geq \tilde \Omega(1/\alpha)$, it is possible to 
efficiently recover a list of size $O(1/\alpha^{2})$ containing an element 
$\hat{\beta}$ with $\| \hat{\beta} - \beta \|_2 = O(\sigma/ \sqrt{n\alpha})$. 
Their algorithm runs in fully polynomial time, thus escaping 
the exponential dependence on $1/\alpha$.

The linear dependence on $1/\alpha$ in the minimum batch size $n$ is 
inherent in the approach of~\cite{DasJKS22}. Motivated by the practical 
applications of the batch setting, here we ask whether efficient algorithms are 
possible that succeed with significantly smaller batch size and/or with better error guarantees:
\begin{center}
    \emph{Is there a {computationally efficient} algorithm for list-decodable linear regression in the batch setting with significantly improved batch size and/or error guarantees?}
\end{center}
{Here we answer this question in the affirmative. In particular, 
we provide an 
algorithm that for any constant $\delta>0$, it runs in polynomial time 
and succeeds with minimum batch size $n=\Theta_\delta(\alpha^{-\delta})$ 
achieving error $O_\delta(\sigma \alpha^{-\delta/2}/\sqrt{n})$. 
As a note regarding notation, we will switch from using the parameter $\delta>0$ to using $k =  \lceil 1/\delta \rceil $ throughout the paper.
}

\subsection{Our Results}
\label{subsec:our_results}
\noindent Throughout our work, we assume that the clean covariate distribution satisfies the following conditions.
\begin{assumption}
\label{ass:dist}
Let $X$ be the clean covariates distribution from \Cref{def:glr}.
We assume that
\begin{enumerate}[leftmargin=*]
\item $X$ is $L_4$-$L_2$ hypercontractive, i.e., for any $u \in \R^d$, 
it holds $\E \lp[ (u^\top X)^4 \rp] \leq O(1) \; \E\lp[ (u^\top X)^2 \rp]^2$.

\item $X$ has identity second moment, i.e., $\E[ X X^\top ] = \vec I$.

\item There exists $\momb \geq 1$ \footnote{Since $X$ has 
identity covariance, this implies that the bound $Q$ has to 
be at least $1$. } and an integer $\Delta$ such that for all 
integer $t \in [\Delta]$ the degree-$2t$ moments of $X$ are 
SoS certifiably bounded by $\momb$ (see \Cref{def:sos-certifiably-bounded-moments} for the formal definition).
\end{enumerate}
\end{assumption}
\noindent We note that assumptions 1 and 2 are common in the context of robust linear regression (see, e.g.,  \cite{DasJKS22,cherapanamjeri2020optimal}).
Assumption 3 is made so that the algorithm can take 
advantage of higher-order moment information from the 
distribution and is satisfied by a wide range of structured 
distributions, e.g., all strongly logconcave distributions.

Our main result is the following theorem.

\begin{theorem}[Main Algorithmic Result]\label{thm:main}
{
Let $\alpha \in (0, 1/2)$, $\sigma{>}0$, $k \in \mathbbm Z^+$ and $\beta^* \in \R^d$.
Assume that $\sigma{ \leq} R$, $\|\beta^*\|_2 {\leq} R$, and $k \leq \Delta / 2$. 
There is an algorithm that takes as input $\alpha, \sigma, R, k$, draws $m= \tilde O \lp( \lp((kd)^{O(k)} / \alpha  +   \alpha^{-3} \rp)
\log \lp(\frac{R}{\sigma}  \rp)  \rp)$
batches of size $n=O(k^2 Q^{2/k} \alpha^{-6/k})$ from the distribution of \Cref{def:glr}, 
and returns a list of estimates of size $O(\alpha^{-1})$, such that, with high probability, there exists at least one estimate $\hat \beta$ satisfying
$
\norm{\hat \beta - \beta^*}_2 = 
O \lp(  kQ^{1/k} \sigma \alpha^{-3/k}/\sqrt n
\rp)
$.}
\end{theorem}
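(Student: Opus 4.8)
The plan is to run an iterative refinement scheme that maintains a list of at most $O(1/\alpha)$ candidate regressors and, over $T=O(\log(R/\sigma))$ rounds, geometrically shrinks the distance of the best candidate to $\beta^*$ from the initial bound $R$ down to the claimed floor $r_\infty := O(kQ^{1/k}\sigma\alpha^{-3/k}/\sqrt n)$. We initialize the list with $\{0\}$, so some candidate is within $\norm{\beta^*}_2\le R$ of $\beta^*$. Each round has two parts: (a) a gradient-style improvement applied to every current candidate, implemented via a list-decodable mean-estimation subroutine on the \emph{batch gradients}, which is where \Cref{ass:dist} (the SoS-certifiable degree-$2k$ moments of $X$) and the SoS Marcinkiewicz--Zygmund inequality stated earlier are used; and (b) a list-pruning step that collapses the resulting $O(1/\alpha^2)$ candidates back to $O(1/\alpha)$ while provably retaining a near-optimal one. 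Drawing a fresh set of $\tilde O((kd)^{O(k)}/\alpha+\alpha^{-3})$ batches for each of the $T$ rounds accounts for the stated $m$, and polynomial runtime follows because $k=\Theta(1/\delta)$ is a constant and every SoS relaxation we use has degree $O(k)$.

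For the one-step improvement, fix a candidate $\beta$ with $\norm{\beta-\beta^*}_2\le r$ and set $\rho:=\beta^*-\beta$. For each batch $i$ form $g_i:=\tfrac1n\sum_j (y_{i,j}-\beta^\top X_{i,j})X_{i,j}=\tfrac1n\sum_j\big(X_{i,j}X_{i,j}^\top\rho+\xi_{i,j}X_{i,j}\big)$. For a clean batch $\E[g_i]=(\E[XX^\top])\rho=\rho$ (using the identity-covariance assumption), and each summand of $g_i-\rho$ is a mean-zero vector whose $2k$-th directional moments are bounded: for unit $u$, $\E[(u^\top((XX^\top-I)\rho+\xi X))^{2k}]=O_k(Q)\,(\norm{\rho}_2+\sigma)^{2k}$, by the SoS-certifiable moments of $X$ and Gaussianity of $\xi$. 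Feeding $n$ such i.i.d.\ contributions into the SoS Marcinkiewicz--Zygmund inequality yields, in a degree-$O(k)$ SoS-certifiable sense, $\E[(u^\top(g_i-\rho))^{2k}]=O_k(Q)\,(\norm{\rho}_2+\sigma)^{2k}/n^{k}$; i.e.\ the clean-batch gradients form a cloud of radius $O(kQ^{1/k}(\norm{\rho}_2+\sigma)/\sqrt n)$ around $\rho$ with SoS-certifiably bounded central $2k$-th moments. Running the subroutine below on $\{g_i\}$ with inlier fraction $\alpha$ returns $O(1/\alpha)$ vectors, one of which (call it $v$) lies within $O(kQ^{1/k}(\norm{\rho}_2+\sigma)\alpha^{-3/k}/\sqrt n)$ of $\rho$; then $\beta+v$ is within the same distance of $\beta^*$. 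Choosing $n=\Omega(k^2Q^{2/k}\alpha^{-6/k})$ forces the $r$-dependent part to be at most $r/2$, so one round either halves the error or has already reached $r_\infty$; summing the geometric series over $T=O(\log(R/\sigma))$ rounds lands at $r_\infty=O(kQ^{1/k}\sigma\alpha^{-3/k}/\sqrt n)$, matching the theorem. (The exponent $3/k$ combines the $\alpha^{-O(1/k)}$ rate of SoS list-decoding with the extra $\alpha^{-2/k}$ loss incurred by a preprocessing step that discards batches of anomalously large gradient norm.)

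The subroutine is the standard Sum-of-Squares relaxation for list-decodable mean estimation, specialized here: search over degree-$O(k)$ pseudo-distributions on a $0/1$ weight vector $w$ selecting an $\alpha$-fraction of batches, subject to the SoS axioms asserting that the $w$-reweighted empirical distribution of $\{g_i\}$ has central $2k$-th moments $O_k(Q)(r+\sigma)^{2k}/n^{k}$; a feasible pseudo-distribution exists because the clean batches realize it, \emph{precisely} by the SoS Marcinkiewicz--Zygmund moment bound above, and rounding the pseudo-moments produces $O(1/\alpha)$ updates of the stated accuracy. After applying this to each of the $O(1/\alpha)$ current candidates we hold $O(1/\alpha^2)$ candidates, so we prune: associate to each candidate $\beta'$ its \emph{support} $S(\beta'):=\{i:\text{batch }i\text{ has small empirical regression error under }\beta'\}$. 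Every clean batch lies in $S(\beta')$ whenever $\beta'$ is within the current radius of $\beta^*$, so such a $\beta'$ has $|S(\beta')|\ge\alpha m$; we then greedily keep candidates with pairwise nearly disjoint supports. Since the supports are subsets of $[m]$ and each kept one has size $\ge\alpha m/2$, at most $O(1/\alpha)$ survive, while a near-optimal candidate always survives (its support is large, and any candidate that would evict it must itself be close to $\beta^*$, by the shared-support argument together with hypercontractivity). This is the new list-pruning procedure, and it is what keeps the list size at $O(1/\alpha)$ across all rounds.

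Assembling the rounds: after $T=O(\log(R/\sigma))$ rounds the best candidate is within $r_\infty$ of $\beta^*$, the list never exceeds $O(1/\alpha)$, and correctness holds with high probability by a union bound over the $T$ rounds with fresh batches each round; the two terms of $m$ come from the degree-$O(k)$ concentration of the $d$-dimensional batch gradients over the $\approx\alpha m$ clean batches (the $(kd)^{O(k)}/\alpha$ term) and from accurately estimating supports in the pruning step (the $\alpha^{-3}$ term), with the $\log(R/\sigma)$ factor from the number of rounds, and $n=O(k^2Q^{2/k}\alpha^{-6/k})$ from the contraction condition (the $k^2$ absorbing the constants in the SoS Marcinkiewicz--Zygmund and hypercontractivity bounds). \textbf{The main obstacle} is twofold: first, establishing the SoS Marcinkiewicz--Zygmund inequality and using it to transfer the SoS-certifiable moment bounds on $X$ and $\xi$ to SoS-certifiable bounds on the \emph{batch} gradients with the essential $n^{-k}$ gain --- without this one only gets $n^{-1/2}$-type concentration and cannot beat batch size $\tilde\Omega(1/\alpha)$; and second, making the iteration and the pruning coexist, i.e.\ showing that the ``good lineage'' of candidates is never pruned away over all $\log(R/\sigma)$ rounds, that the error floor is not triggered prematurely, and that the reweighted-moment bounds required by the SoS subroutine remain valid at every scale $r$ encountered during the refinement.
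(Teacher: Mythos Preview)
Your proposal is essentially correct and follows the same architecture as the paper: initialize with $\{0\}$, iterate for $O(\log(R/\sigma))$ rounds, and in each round (i) recenter and run SoS list-decodable mean estimation on the batch statistics $g_i=\tfrac1n\sum_{(X,y)\in B_i}(y-\beta^\top X)X$ using SoS-certifiable $2k$-th moment bounds obtained via the SoS Marcinkiewicz--Zygmund inequality, then (ii) prune back to $O(1/\alpha)$. The contraction analysis, the batch-size requirement, the sample-complexity accounting, and the identification of the two main obstacles all match the paper.

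Two points where your account diverges from the paper are worth flagging. First, your pruning sketch uses an \emph{absolute} test (``batch $i$ has small empirical regression error under $\beta'$'') followed by a greedy disjoint-support selection. The paper argues that an absolute test of this flavor is suboptimal and instead uses a \emph{comparative} test: a candidate $\beta$ is retained only if there is a soft cluster of $\gtrsim\alpha$-fraction of batches on which $\beta$ has smaller empirical $\ell_2$ error than \emph{every} far-away competitor $\beta'\in L$. This comparison is what forces the witnessing clusters of any two far-apart survivors to be nearly disjoint (a batch cannot simultaneously favor $\beta_1$ over $\beta_2$ and $\beta_2$ over $\beta_1$), yielding the $O(1/\alpha)$ bound cleanly; with your absolute test, adversarial batches whose covariates are orthogonal to $\beta_1-\beta_2$ can sit in both supports, so the disjointness step needs more care than you indicate. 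Second, your parenthetical explanation of the $\alpha^{-3/k}$ exponent (``combines the $\alpha^{-O(1/k)}$ rate of SoS list-decoding with an extra $\alpha^{-2/k}$ loss from a preprocessing step that discards large-norm batches'') is not what happens in the paper: there is no such preprocessing step, and the $\alpha^{-3/k}$ comes directly from the black-box guarantee of the SoS list-decodable mean estimator (Kothari--Steurer), which returns a vector within $O(M^{1/(2k)}\alpha^{-3/k})$ of the inlier mean.
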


\hide{\begin{remark}
We remark that when $R$ is large, there is a preprocessing step that can remove the $R$ dependency from our algorithm's runtime and sample complexity. 
We give the formal argument in [REF].
\end{remark}}

{Some remarks are in order. 
\Cref{thm:main} provides a substantial qualitative 
improvement over the bounds of \cite{DasJKS22} 
by succeeding for a dramatically smaller batch size 
while at the same time improving the estimation error. 
Concretely, our algorithm can use batch 
size of $n=O(k^2 \alpha^{-6/k})$ 
for any $k \in \Z_+$ of our choice, 
while \cite{DasJKS22} was only able to work with $n=\Omega(1/\alpha)$. 
We note that these improvements are possible due to our
stronger distributional assumptions that allow us to leverage
higher moments.

Conceptually, we view the capability of our algorithm to work with a \emph{flexible} batch size 
as a valuable feature---especially in real-world applications 
where the batch size corresponds to quantities 
that are not controllable by algorithm designers, 
i.e., the number of datapoints contributed by each provider.
Our result essentially shows that there 
is a smooth tradeoff between the batch size provided 
and the computational resources required.

More generally, our algorithm can cover the entire regime of 
$C \log^2(1/\alpha) \leq n \leq C/\alpha$, if we do not 
necessarily restrict $k$ to be an absolute constant. A limitation 
is that reaching the lower end of the regime 
would require $k$ to be  super-constant, 
namely $k \sim \log^2(1/\alpha)$, which would result in quasi-polynomial runtime. 
Interestingly, even for that lower regime of $n$, \Cref{thm:main} 
gives the first non-trivial (i.e., sub-exponential time) 
algorithm for the underlying task.
    
  Complementing our upper bounds, we point out that the super-polynomial dependence for  extremely small values of $n$ might be inherent. Via a simple reduction from the non-batch to the batch-setting (combined with the lower bound of \cite{DiaKPPS21}), we give evidence that the computational resources used in the algorithm of \Cref{thm:main} do not suffice for $n$ significantly smaller than $\log(1/\alpha)$. See Appendix~\ref{app:reduction} for the relevant discussion.
    
}

\subsection{Technical Overview} 
\label{subsec:our_techniques}

\paragraph{Prior Techniques}
We start by reviewing the algorithm of \cite{DasJKS22}, which uses 
a batch size of $n = \tilde{\Omega}(1/\alpha)$. 
For simplicity, consider the case where the covariates are standard normal. 
The high-level idea in \cite{DasJKS22} is to search for approximate stationary points\footnote{A vector $\beta \in \R^d$ is called a $\xi$-approximate stationary point of some function $f: \R^d \mapsto \R$ if it holds $\norm{ \nabla f(\beta) }_2 \leq \xi$. } 
of the $L_2$-loss, $f(\beta) = \frac{1}{2} \; \E_{ (X,y) } \lp[  ( \beta^\top X - y )^2 \rp].$
Without outliers, the expected gradient precisely equals $\beta - \beta^*$. 
If we recover a $\xi$-approximate stationary point for the inlier distribution,  
we can estimate $\beta^*$ up to an error of $O(\xi)$, given enough samples. 
With outliers, the method exploits an upper bound on the covariance of the gradient distribution  of the inliers to detect outliers and to control their influence. 
They then use the multi-filter approach for list-decodable estimation~\cite{DiaKK20-multifilter} 
to find a subset of the samples with the covariance matrix of the gradients being upper bounded by $O(1) \; \norm{\beta - \beta^*}^2_2 / n \vec I$, with an $\alpha$ overlap with the inliers. 
This ensures that a $\xi$-approximate stationary point under the corrupted sample distribution will still be a 
$(\xi + \norm{ \beta - \beta^* }_2 / \sqrt{n\alpha})$-approximate stationary point under the inlier distribution. This means that $\beta$ approximates $\beta^*$ up to an error of $\xi + \norm{ \beta - \beta^* }_2 / \sqrt{n\alpha}$.
Unfortunately, this results in  $\norm{ \beta - \beta^*}_2 \leq O(\xi)$ \emph{only when $n \gg 1/\alpha$}, since $\sqrt{\alpha n}$ needs to be larger than 1.

\medskip

\noindent In the remainder of this section, we outline the ideas of our approach.

\paragraph{Iterative Estimation of the Regressor}
Our main idea is to incorporate higher moment information into the estimator to alleviate the requirement on batch sizes.
There are two main challenges in exploiting higher moment information.
First, existing multi-filter approaches in the literature do not exploit higher moments, thus they are not easily modified. 
Second, existing higher-moment filters are not designed for iterative use.
They can generate a list of potential candidates but cannot progressively refine sample clusters for cleaner data segmentation, as the gradient descent method by \cite{DasJKS22} requires. 

To address this, our approach adopts a similar framework to~\cite{diakonikolas2019efficient} for robust linear regression with a small fraction of outliers, but in a non-batch setting. 
Here is an overview of their algorithm:
They begin by estimating the mean of the product of the covariate $X$ with the label $y$. 
In the outlier-free setting, this expectation equals to the true regressor $\beta^*$, and the covariance matrix can be bounded above by $O \lp( \norm{\beta^*}_2^2\rp) \vec I$ (assuming $\norm{\beta^*}_2 \gg \sigma$). 
They then use an algorithm for robust mean estimation for bounded-covariance distributions to derive an initial estimate $\hat \beta$ with a bounded error relative to $\beta^*$. 
They then improve the error by bootstrapping this approach. 
To do this, they adjust the labels via the transformation $y' = y - \hat \beta^\top X$. This reduces the problem of learning $\beta^*$ to another robust linear regression instance whose solution
has much smaller norm. Repeating this process iteratively allows them to refine their estimate to a final error of $O(\sqrt{\eps} \sigma)$.

In our setting, a natural strategy is to replace the robust mean estimation algorithm with one designed for list-decodable mean estimation, 
since the fraction of corruptions is larger than $1/2$.
Suppose that we have an algorithm $\mathcal A$ which 
produces a list of candidate regressors $\{\beta_i\}_{i=1}^{m}$ such that at least one of them satisfies 
$\norm{ \beta_i - \beta^* }_2 \leq \norm{\beta^*}_2 / 2$.
By applying the transformation $y' = y - \beta^\top_i X$, 
we can create $m$ distinct linear regression instances 
such that the regressor of one of these will have a significantly smaller norm. 
This allows us to compute more accurate estimates in the next iteration.

\paragraph{Beyond Second Moments}
As mentioned in the last paragraph above, the natural approach is to iteratively use a list-decodable mean estimation algorithm like the one from \cite{DiaKK20-multifilter} in order to estimate the mean of the random variable $W=\frac{1}{|B|}\sum_{(X,y) \in B} y X$ (where $B$ is an inlier batch) and reduce the error by a factor of 2 in each iteration. 
List-decodable mean estimators that rely only on second moment information have error behaving like $\sqrt{\|\cov(W)\|_{\op}}/\sqrt{\alpha}$, 
where $\sqrt{\|\cov(W)\|_{\op}} = O \lp( \norm{\beta^*}_2 / \sqrt{n}\rp) $ 
is the maximum standard deviation of $W$ along any direction. 
This already reveals the problem with this approach: 
for the error to become less than $\|\beta^*\|_2/2$, 
we need batch size $n \gg 1/\alpha$.

We overcome this (Proposition~\ref{cor:beta-estimate}) 
by using a list-decodable mean estimator that uses higher moment information, 
like Theorem 5.5 from \cite{kothari2017better}, or Theorem 6.17 from \cite{diakonikolas2023algorithmic}. 
These algorithms are based on the Sum-of-Squares hierarchy 
and their guarantee is that  whenever the higher moments of the inliers 
are ``SoS-certifiably bounded'' by $M$, then the estimation error is $O(M^{1/(2k)}\alpha^{-3/k})$.

However,
to leverage the above SoS-based algorithm, 
we require sharp SoS bounds on moments of
the batched regressor estimator $W = \frac{1}{|B|} \sum_{(X,y) \in B} y X$, 
which is a sum of i.i.d. random variables,
while our distributional assumption only posits that the covariate $X$ has certifiably bounded moments.
The fact that $W$ should also have bounded moments (but not necessarily SoS certifiable) follows 
from the famous Marcinkiewicz Zygmund Inequality. 
Unfortunately, to the best of our knowledge, 
an SoS proof of this inequality does not exist in the literature.
We give the first SoS proof of the inequality 
using combinatorial arguments (cf. Lemma~\ref{cor:sos-iid-moment}). 
We believe that this technical lemma may be of broader applicability.

Given the SoS moment bounds on $W$, 
the SoS-based list-decoding algorithm allows us to construct 
a list of size $O(1/\alpha)$ such that one of the estimates is 
$
O_k (  ( (\norm{\beta^*}_2^{2k}/n^{2k})^{1/(2k)} \alpha^{-3/k} )
$-close to $\beta^*$.
Hence, we will have some estimate 
$\beta_i$ satisfying $\norm{\beta_i - \beta^\ast}_2 \leq \norm{\beta^\ast}/2$
whenever $n \gg \alpha^{-3/k}$, which is a significant relaxation from the condition $n \gg 1/\alpha$ required 
by both the first approach and the approach of \cite{DasJKS22}.

\paragraph{List Size Pruning}
Having gotten the right estimate for one step, we bootstrap this to design an iterative algorithm such that the final list will contain an element that is sufficiently close. 
A significant challenge arises during the iterative phase of our list decoding algorithm. 
Initially, we generate a list of $O(1/\alpha)$ hypotheses, with the guarantee that at least one of them is near $\beta^*$. 
For each hypothesis, iterating further produces another $O(1/\alpha)$ hypotheses for each of the original hypotheses. 
Without careful management, this process can lead to an increase in the number of hypotheses that scales exponentially with the number of iterations, rendering the algorithm's complexity infeasible.

We overcome this (Proposition~\ref{prop:prune}) with techniques 
inspired by Theorem A.1 in \cite{DiaKK20-multifilter} (see also Exercise 5.1 in \cite{diakonikolas2023algorithmic}), 
which performs list-size reduction for list-decodable mean estimation. 
The general principle behind these methods is to  check whether each hypothesis in the list has a $\Theta(\alpha)$-fraction subset of the samples 
associated with it such that the hypothesis ``explains'' these samples. 
This results in certain ``consistency'' tests, on the basis 
of which we can prune elements of the list. 
The tests are designed such that 
(i) $\beta^*$ and the subset of inlier samples should pass the consistency tests, and that 
(ii) for any pair of sufficiently separated hypotheses, if they both pass the tests,  
their corresponding sets cannot have a large overlap. 
Given property (ii), the argument from \cite{DiaKK20-multifilter} shows that we can find a small cover of the set of plausible hypotheses.

In the list pruning step for list-decodable mean estimation, one usually leverages the fact that the inlier samples cluster closely around the learned mean (see, e.g., \cite{diakonikolas2018list, DiaKK20-multifilter}). This ensures survival of the optimal mean from the pruning procedure. 
One may want to generalize the test to the linear regression setting by asserting that $X y$ should concentrate around the candidate regressor $\beta$.
However, such a test turns out to be sub-optimal for linear regression \footnote{Intuitively, this is because such a test fails to take into account the 
influence of the size of $\beta$ on the concentration of the inlier samples.}. 
Instead, we design the following ``cross-candidate'' test: we keep $\beta$ only if 
$\beta$ demonstrates a smaller empirical $\ell_2$ error for an $\alpha$-fraction of 
selected batches in comparison to \emph{any other regressor significantly distant 
from $\beta$.} 
One may wonder whether the best regressor in the list can still survive the test, as 
there may be multiple equally good candidate regressors 
in the list with respect to the same cluster of batches. 
However, we note that the regressor is only compared to \emph{distant} regressors. 
Consequently, they must all be far from the optimal regressor (by the triangle 
inequality), ensuring the survival of the best regressor. 
This is formally shown in \Cref{lem:pruning-error-bound}.

\subsection{Related Work}
\label{subsec:related_work}
In this section we discuss related works from list-decodable linear regression and robust learning from batches. The problem of mixed linear regression is related very closely to our work as well, but due to space restrictions, we defer the relevant discussion to Appendix~\ref{app:mixed_ll}. 

\paragraph{List-decodable Linear Regression}
The list-decoding framework was first introduced in the context of machine learning in \cite{CSV17}. 
They derived the first polynomial time algorithm for list-decodable mean estimation
when the covariance is bounded. 
Later work considered the problem of list-decodable \emph{linear regression} in the non-batch setting \cite{KarKK19, RagYau19-list}. 
Unfortunately the runtime and sample complexity had an exponential dependence on $1/\alpha$,
this was later shown to be necessary for SQ algorithms \cite{DiaKPPS21}.

\paragraph{Robust Learning from Batches}
The problem of learning discrete distributions from untrusted batches was introduced 
in \cite{QiaoV18}, which gave exponential-time solutions. 
Progress was made by \cite{CheLS20} and \cite{jain2020optimal}, 
achieving quasi-polynomial and polynomial runtimes, respectively, with the latter also obtaining optimal sample complexity. 
Further developments by \cite{jain2021robust} and \cite{ChenLM20} expanded this work 
to one-dimensional structured distributions. 
\cite{DasJKS22} was the first to study the problem of list-decodable linear regression in the batch setting.
Compared to \cite{DasJKS22}, our method demonstrates substantial improvements in the error
and the required batch size, when the covariates are i.i.d. samples from $\cN(0, I)$. 
This can be attributed to our algorithm's ability to efficiently utilize higher moment information,
allowing for smaller batch sizes of $\Omega(k \alpha^{-6/k})$ and achieving an error of $O_{k, \sigma}(\sigma \alpha^{-3/k}/\sqrt{n})$, marking a significant improvement over a batch size of $\Omega(\alpha^{-1})$ and error of $O(\sigma/\sqrt{\alpha n})$, achieved in 
\cite{DasJKS22}.

\paragraph{Organization}
In Section~\ref{sec:prelims}, we define our notation and state some basic definitions about SoS programming. 
In Section~\ref{sec:SoS-alg}, we describe 
the main parts of our algorithm in Sections~\ref{subsec:generation},  and~\ref{subsec:pruning}; we then put things together to prove our main theorem in~\Cref{subsec:main}.

\section{Preliminaries}
\label{sec:prelims}
\noindent {\bf Notation}
We use $X \sim D$ to denote that a random variable $X$ is distributed according to the distribution $D$. 
We use $\cN(\mu, \Sigma)$ for the Gaussian distribution with mean $\mu$ and covariance matrix $ \Sigma$. 
For a set $S$, we use $X \sim S$ to denote that $X$ is distributed uniformly at random from $S$.
 We write $a \ll b$ to denote that $\alpha \leq c \cdot b$ for a sufficiently small  absolute constant $c>0$.
 We use $a(n) = O_k(b(n))$ to denote that there is a constant $C$ such that for all $n > C$, $a(n) \leq C_k \cdot b(n)$ for a constant $C_k$ that can arbitrarily depend on $k$.

\hide{
\subsection{Problem Formulation}
In this section, we formally define the problem we solve. 
\begin{definition}[Linear Regression using Batches]\label{def:glr} 
Let $\sigma > 0$, and $\beta \in \R^d$. 
Define $P_{\beta, \sigma}$ to be the distribution over 
$(X,y) \in \R^d \times \R$ such that $X \sim \cN(0,I)$, and $y = {\beta^*}^\top X + \xi $, 
where $\xi \sim \cN(0,\sigma^2)$ is independent of $X$. 
Let $n \in \mathbbm Z^+$.
We denote by $D_{\beta^*, \sigma, n}$ the distribution of a batch of samples $B:=\{(X_i,y_i)\}_{i=1}^n$, where $(X_i, y_i)$ are i.i.d samples from $P_{\beta^*, \sigma}$.
We often omit the subscript $\sigma, n$ when its value is clear in the context.
\end{definition}
We focus on the scenario where the untrusted batches are sampled under \emph{Huber Contamination}.
\begin{definition}[Contamination Model]\label{def:contamination}
Let $\alpha \in (0, 1/2)$, and $\beta^* \in \R^d$.
We assume the learner is given a sampling oracle to the corrupted distribution $\tilde D_{\beta^*, \sigma, n, \alpha} = \alpha D_{\beta^*, \sigma, n} + 
(1 - \alpha) \mathcal H$\,, where $\mathcal H$ is an arbitrary distribution over batches.
\end{definition}
Finally, we define the problem we study in this paper. 
\begin{definition}[List-decodable linear regression with Batches]\label{def:question}
We say that an algorithm $\cA$ solves the problem of list-decodable linear regression with batches to an error $\eta$, if, given $\alpha \in (0, 1/2)$ (the probability that a batch is uncorrupted), $R \in \R^+$ (an upper bound on $\beta^*$) and $m \in \N$ (the number of batches from $\tilde D_{\beta^*, \sigma, n, \alpha}$),  the algorithm $\cA$ recovers a list $L$ of vectors in $\R^d$ such that $|L| \leq O(1/\alpha)$ and there is a $\widehat \beta \in L$ such that $\norm{\widehat \beta - \beta^*} \leq \eta$. 
We will use $N$ to denote the final sample complexity, i.e. $N = mn$. 
\end{definition}
}

\paragraph{Sum-of-Squares Preliminaries}
The following notation and preliminaries are specific to the SoS part of this paper.  We refer to \cite{BarakSteurerNotes} for a more complete treatment of the SoS framework.

\begin{definition}[Symbolic Polynomial]
	A degree-$k$ symbolic polynomial $p$ with input dimension $d$ is a collection of indeterminates $\widehat{p}(\alpha)$, 
	one for each multiset $\alpha \subseteq [d]$ of size at most $k$.
	We think of it as representing a 
 degree-$k$ polynomial $p \, : \, \R^d \rightarrow \R$ whose coefficients are themselves 
	indeterminates via $p(x) = \sum_{\alpha \subseteq [d], |\alpha| \leq k} \widehat{p}(\alpha) x^\alpha$.
\end{definition}

\begin{definition}[SoS Proof]\label{def:sos-proof}
Let $x_1,\ldots,x_n$ be indeterminates and $\mathcal A$ be a set of polynomial equalities 
$\{ p_1(x) = 0, \cdots, p_{w}(x) = 0 \}$.
An SoS proof of the inequality $r(x) \geq 0$ consists of two sets of polynomials $\{r_i(x)\}_{i \in [m]} \cup \{\bar r_i(x)\}_{i \in [w]}$ such that $r(x) = \sum_{i=1}^m r_i^2(x) + 
\sum_{i=1}^w p_i(x) \bar r_i(x)
$.
If the polynomials $\{r_i^2(x)\}_{i=1}^m  \cup 
\{\bar r_i(x) p_i(x)\}_{i=1}^w$ all have degree at most $K$, we say that this proof is of degree $K$ and write $
\mathcal A
\sststile{K}{} r(x) \geq 0$.  When we want to emphasize that $x$ is the indeterminate in a particular SoS proof, we write  
$\mathcal A \sststile{K}{x} r(x) \geq 0$.
When $\cA$ is empty, we omit it from the notation.
\end{definition}

\section{SoS Based Algorithm for List-Decodable Linear Regression with Batches}
\label{sec:SoS-alg}
Our algorithm iteratively updates a list of candidates, ensuring that, in every iteration, at least one candidate from the list is close to the target regressor.
It does so by iteratively applying two subroutines. 
In Subsection~\ref{subsec:generation}, we discuss a list-decoding subroutine that, given batch sample queries, generates a list of candidates containing some near-optimal 
regressor. In Subsection~\ref{subsec:pruning},
we discuss a pruning subroutine that ensures that the size of our list remains bounded. Finally, in Subsection~\ref{subsec:main} we combine these components into the main algorithm (\Cref{alg:batch-rl}) and prove our main theorem.

\subsection{Single Iteration: Approximate Estimation of \texorpdfstring{$\beta^*$}{Lg}}
\label{subsec:generation}
In this section, we construct an efficient SoS-based list-decoding algorithm to estimate the regressor $\beta^*$, assuming that $\norm{\beta^*}_2 \leq R$.
Specifically, this can be used to perform crude list-decodable estimation of the optimal regressor. 

In the final algorithm, we will bootstrap this method to generate our final list with improved error guarantee.

\begin{proposition}
\label{cor:beta-estimate}
Let $\alpha \in (0, 1/2)$, $\delta \in (0,1)$, $m,n,k \in \mathbbm Z_+$, $\sigma, R > 0$, $\beta^* \in \R^d$.
Assume 
$\norm{\beta^*}_2 \leq R$ and $k \leq \Delta/2$.
Then, there exists an algorithm that takes 
$\alpha, k, \delta,\sigma, R$ in the inputs, it draws 
$m = O\big(  \batchsc \big) \alpha^{-1} \log(1/\delta)$ many batches  from the corrupted batch distribution of \Cref{def:glr}, 
runs in time $\poly( d^{k} m )$, and
outputs $O( \log(1/\delta) \alpha^{-1})$ many estimations such that there exists at least one estimation $\hat \beta$ satisfying
$\norm{\hat \beta - \beta^*}_2 \leq 
O \lp(  \sosbetagap \rp)$ with probability at least $1-\delta$ over the randomness of the batches.
\end{proposition}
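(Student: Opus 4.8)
The plan is to reduce a single run of this subroutine to one invocation of a black-box Sum-of-Squares list-decodable mean estimator, applied to the distribution of \emph{batch means}, and then to amplify the success probability by repetition. For a batch $B=\{(X_i,y_i)\}_{i=1}^n$ define $W_B=\tfrac1n\sum_{i=1}^n y_iX_i$. If $B$ is clean then the $(X_i,y_i)$ are i.i.d.\ from $D_{\beta^*}$, and since $\E[XX^\top]=\vec I$ and $\xi$ is mean-zero and independent of $X$ we get $\E[W_B]=\beta^*$; moreover the $W_B$ of distinct clean batches are i.i.d. If we draw $m'=O(\batchsc)\,\alpha^{-1}$ batches, then a Chernoff bound shows that enough of them (namely $\Omega(\batchsc)$) are clean with high probability, and list-decodable mean estimation is robust even to adaptively chosen corruptions, so it suffices to feed the $m'$ points $\{W_B\}$ to such an estimator---provided we can control the low-degree moments of a clean $W_B$.

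The technical heart is an SoS-certifiable bound on the degree-$2k$ central moments of a clean $W_B$, which I would obtain in two steps. First, a \emph{single-sample} bound: using Assumption~\ref{ass:dist}---the SoS-certifiable bound $Q$ on the degree-$2t$ moments of $X$ for all $t\le\Delta$ (we only need up to degree $4k$, which is legitimate since $k\le\Delta/2$) together with the $L_4$--$L_2$ inequality---and the exact Gaussian moments of $\xi$, one shows
\[
\sststile{2k}{v}\quad \E_{(X,y)\sim D_{\beta^*}}\big[\,\langle v,\ yX-\beta^*\rangle^{2k}\,\big]\ \le\ O(1)^{k}\, Q\,\big(\sigma^{2k}+\norm{\beta^*}_2^{2k}\big)\cdot\norm{v}_2^{2k}.
\]
Here the regression structure enters through the split $\langle v,\,yX-\beta^*\rangle=\langle\beta^*,X\rangle\langle v,X\rangle-\langle v,\beta^*\rangle+\xi\langle v,X\rangle$, where the noise term contributes $\E[\xi^{2k}]\cdot\E[\langle v,X\rangle^{2k}]$ and the mean-zero bilinear term is bounded via an SoS Cauchy--Schwarz argument against the degree-$4k$ moments of $X$. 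Second, since $W_B-\beta^*=\tfrac1n\sum_{i=1}^n(y_iX_i-\beta^*)$ is an average of $n$ i.i.d.\ mean-zero vectors, the SoS Marcinkiewicz--Zygmund inequality of Lemma~\ref{cor:sos-iid-moment} upgrades the single-sample bound to $\sststile{2k}{v}\ \E[\langle v,\,W_B-\beta^*\rangle^{2k}]\le M\norm{v}_2^{2k}$ with $M=\Mbound$.

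Given this moment bound I would invoke the SoS list-decodable mean estimation algorithm (Theorem~5.5 of \cite{kothari2017better}, or Theorem~6.17 of \cite{diakonikolas2023algorithmic}) on the point set $\{W_B\}$. Standard degree-$2k$ empirical-moment concentration---which is why $m'$ needs the $(4kd)^{8k}$-type factor---ensures that the empirical degree-$2k$ moments of the clean batch means are themselves SoS-certifiably bounded by $O(M)$, so the algorithm returns, with probability at least $2/3$, a list of $O(1/\alpha)$ vectors one of which, $\hat\beta$, satisfies $\norm{\hat\beta-\beta^*}_2\le O(M^{1/(2k)}\alpha^{-3/k})$. Since $M^{1/(2k)}=O\big(k\,Q^{1/(2k)}(\sigma+\norm{\beta^*}_2)/\sqrt n\big)$ and $\norm{\beta^*}_2\le R$, this is $O(\sosbetagap)$; the underlying SoS program has degree $O(k)$ in $d$ variables over $m'$ points, hence runs in $\poly(d^k m')$ time.

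Finally, to reach failure probability $\delta$, I would run the above $T=O(\log(1/\delta))$ times on independent fresh batches and return the union of the $T$ lists: this uses $m=O(\batchsc)\,\alpha^{-1}\log(1/\delta)$ batches, outputs $O(\alpha^{-1}\log(1/\delta))$ estimates, and with probability at least $1-\delta$ at least one of the runs is good, so the union contains the desired $\hat\beta$. I expect the main obstacle to be the single-sample SoS moment bound with the \emph{correct} (linear) dependence on $\norm{\beta^*}_2$: a careless SoS treatment of the cross term $\langle\beta^*,X\rangle^{2k}\langle v,X\rangle^{2k}$ (e.g.\ via AM--GM) produces a spurious $\norm{\beta^*}_2^{4k}$, which would cost an extra factor of $R$ in the final error, and keeping it at $\norm{\beta^*}_2^{2k}$ requires carrying out Cauchy--Schwarz inside the SoS proof. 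The other substantial ingredient, already isolated as Lemma~\ref{cor:sos-iid-moment}, is the SoS proof of the Marcinkiewicz--Zygmund inequality itself.
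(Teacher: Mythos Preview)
Your proposal is correct and follows essentially the same approach as the paper: reduce to list-decodable mean estimation of the batch averages $W_B$, establish an SoS-certifiable degree-$2k$ central-moment bound on $W_B$ via a single-sample bound plus the SoS Marcinkiewicz--Zygmund inequality, pass to empirical moments, invoke the Kothari--Steinhardt estimator, and amplify by repetition. The only cosmetic difference is that the paper handles the cross term $\langle v,X\rangle^{2k}\langle\beta^*,X\rangle^{2k}$ not by SoS Cauchy--Schwarz but by first factoring out $\norm{\beta^*}_2^{2k}$ and then applying SoS AM--GM to the normalized product---which sidesteps exactly the spurious $\norm{\beta^*}_2^{4k}$ blow-up you flagged; your Cauchy--Schwarz route gives the same $\norm{\beta^*}_2^{2k}$ scaling.
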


A standard way of estimating the regressor is to consider the random variable $y X$.
When there are no outliers, $y X$ gives an unbiased estimator of $\beta^*$. 
In the batch setting, a natural estimator is to use the batch average $Z_B:= \frac{1}{n}\sum_{(X,y) \sim B}
y X$.
The main idea behind \Cref{cor:beta-estimate} is that we can leverage the property that the batch average estimator $Z_B$ has \emph{SoS-certifiably bounded central moments} when $B$ consists of \iid samples from the uncorrupted linear regression distribution $D_{\beta^*}$ (cf. \Cref{def:glr}).
\begin{definition}[SoS-Certifiably Bounded Central Moments]
\label{def:sos-certifiably-bounded-moments}
Let $M > 0$, $k$ be an even integer, and $D$ be a distribution with mean $\mu$. 
We say that $D$ has $(M,k,K)$-certifiably bounded moments
if 
$
\constrain \sststile{K}{v} \E_{X \sim D} \big[ 
( v^{\top} ( X - \mu ))^{k}
\big] \leq M.
$
We say a set of points $T$ has $2k$-th central moments SoS-certifiably bounded by $M$ if the empirical distribution over these points does so.
\end{definition} 

Observe that $Z_B$ is the sum of $n$ \iid copies of $yX$, and $X$ has SoS-certifably bounded moments by \Cref{ass:dist}.

Applying the Marcinkiewicz-Zygmund Inequality, which controls the moments of \iid random variables by their individual moments, will almost immediately yield that $Z_B$ also has bounded central moments. 
To further show that the bound is SoS-certifiable, we thus require an SoS version of this moment inequality, which is provided below.
\begin{restatable}[SoS Marcinkiewicz-Zygmund Inequality]{lemma}{MZInequality}
\label{cor:sos-iid-moment}
Let $v \in \R^d$, $X_1, \cdots, X_n$ be \iid random real vectors in $\R^d$, and $p: \R^d \times \R^d \mapsto \R$ be a degree-$t$ polynomial.
Assume that 
$$
\constrain
\sststile{ 2k t }{ v }
\E\lp[ ( p(v, X_i) - \E[p(v, X_i)] )^k \rp] \leq M
$$ for some number $M > 0$.
Then the degree-$k$ central moment of the sum of $p(v, X_i)$ is also SoS-certifiably bounded:
\begin{align}
\label{eq:sos-iid-moment}
&\constrain
\sststile{ k t }{ v }
\E \lp[ \lp(  \sum_{i=1}^n \lp( p(v, X_i) - \E[p(v, X_i)] \rp) \rp) ^k \rp] \nonumber \\
&\leq (kn)^{k/2} \;  M.
\end{align}    
\end{restatable}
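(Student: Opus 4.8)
The plan is to prove the SoS Marcinkiewicz--Zygmund inequality by a direct combinatorial expansion of the $k$-th power of the sum, mirroring the classical probabilistic proof but tracking everything at the level of SoS proofs in the indeterminate $v$. Write $Y_i = Y_i(v) := p(v,X_i) - \E[p(v,X_i)]$, so that each $Y_i$ is a degree-$t$ polynomial in $v$ (with coefficients that are functions of $X_i$) and $\E[Y_i] = 0$. We want to control $\E\big[(\sum_{i=1}^n Y_i)^k\big]$. Expanding the power gives
\[
\E\Bigl[\Bigl(\sum_{i=1}^n Y_i\Bigr)^k\Bigr] = \sum_{i_1,\dots,i_k \in [n]} \E[Y_{i_1}\cdots Y_{i_k}].
\]
Because the $X_i$ are i.i.d.\ and the $Y_i$ are centered, any term in which some index appears exactly once vanishes: if $i_j$ is unique among $i_1,\dots,i_k$, then by independence $\E[Y_{i_1}\cdots Y_{i_k}] = \E[Y_{i_j}]\cdot\E[\prod_{\ell\ne j}Y_{i_\ell}] = 0$. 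So only multi-indices in which every value that occurs, occurs at least twice contribute; such multi-indices use at most $\lfloor k/2\rfloor$ distinct values. The number of such multi-indices is at most $(kn)^{k/2}$ up to constants (choose the partition pattern of $[k]$ into $\le k/2$ blocks, a function of $k$ only, and then assign distinct elements of $[n]$ to the blocks, at most $n^{k/2}$ ways). This is the source of the $(kn)^{k/2}$ factor.

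The key technical step is to bound each surviving term $\E[Y_{i_1}\cdots Y_{i_k}]$ by $M$ at the SoS level. Group the product by distinct index: if the distinct indices are $j_1,\dots,j_r$ with multiplicities $a_1,\dots,a_r$ (each $a_s \ge 2$, $\sum a_s = k$), then by independence $\E[Y_{i_1}\cdots Y_{i_k}] = \prod_{s=1}^r \E[Y_{j_s}^{a_s}] = \prod_{s=1}^r \E[Y_1^{a_s}]$ (identically distributed). Now I would use two SoS facts. First, the hypothesis gives $\constrain \sststile{2kt}{v} \E[Y_1^k] \le M$. Second, for each even $a_s \le k$ I need $\constrain \sststile{}{v} \E[Y_1^{a_s}] \le M^{a_s/k}$ (when $a_s$ is odd I need to handle $\E[Y_1^{a_s}]$, which has no definite sign, via Cauchy--Schwarz-type SoS manipulations pairing it with another odd factor, or by bounding $|\E[Y_1^{a_s}]| \le \E[Y_1^{2\lceil a_s/2\rceil}]^{\text{power}}$). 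The bound $\E[Y_1^{a}] \le M^{a/k}$ for even $a\le k$ follows from the SoS version of power-mean/H\"older: from $\E[Y_1^k]\le M$ one gets $\E[Y_1^a] = \E[(Y_1^2)^{a/2}] \le \E[(Y_1^2)^{k/2}]^{a/k} \le M^{a/k}$, where the middle inequality is the standard SoS fact that $\E[Z^{p}] \le \E[Z^{q}]^{p/q}$ for $Z = Y_1^2 \succeq 0$ and $p \le q$, which holds as a low-degree SoS proof in $v$ (this is the one place I'd invoke a named lemma about SoS H\"older/interpolation of moments; it is standard, e.g.\ in the Barak--Steurer notes). Multiplying over $s$: $\prod_s \E[Y_1^{a_s}] \le \prod_s M^{a_s/k} = M^{\sum a_s/k} = M$, as an SoS proof of degree $O(kt)$ in $v$ (since each $Y_1$ is degree $t$, $\E[Y_1^{a_s}]$ is a polynomial of degree $a_s t$ in $v$, and the product has degree $kt$; all intermediate SoS certificates stay within a constant factor of this).

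Assembling: sum the per-term bound $M$ over the at-most-$(kn)^{k/2}$ (up to a $k$-only constant, which I'll absorb or track explicitly — the statement has a clean $(kn)^{k/2}$, so I should be careful that the counting of partition patterns times assignments is genuinely $\le (kn)^{k/2}$; the pattern count is at most $k^{k/2}$-ish and assignments at most $n^{k/2}$, and $k^{k/2}\cdot n^{k/2} = (kn)^{k/2}$, so this works out exactly if I bound the number of set partitions of $[k]$ into blocks of size $\ge 2$ by $k^{k/2}$, which holds since there are $\le k/2$ blocks and assigning each of $k$ elements to one of $\le k/2$ blocks gives $(k/2)^k \le k^{k/2}\cdot$\,\dots\ — I'd double-check this constant, possibly it needs the slightly cruder $(k n)^{k/2}$ with a hidden absolute constant that the paper's $O(\cdot)$ notation elsewhere would absorb, but the statement as written has no $O$, so I'll aim for the tight count or note the exact combinatorial bound used). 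Since a sum of finitely many nonnegative SoS-provable inequalities is SoS-provable at the same degree, we get $\constrain \sststile{kt}{v} \E[(\sum Y_i)^k] \le (kn)^{k/2} M$, as claimed.

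I expect the main obstacle to be the odd-multiplicity terms: when some $a_s$ is odd, $\E[Y_1^{a_s}]$ is not a square and has no sign, so I cannot directly bound it by a power of $M$ within SoS. The clean fix is to reorganize the product over the odd-multiplicity indices and apply an SoS Cauchy--Schwarz: $\prod_{s:\,a_s \text{ odd}} \E[Y_1^{a_s}] \le \prod_{s:\,a_s\text{ odd}} \E[Y_1^{a_s+1}]^{a_s/(a_s+1)}$-type step, or more simply pair each odd factor with the fact that the whole product $\E[Y_{i_1}\cdots Y_{i_k}]$ can be bounded in absolute value by $\prod_s \E[Y_1^{2a_s}]^{1/2}$ via repeated SoS Cauchy--Schwarz on the $k$ factors, and then use $2a_s \le 2k$, invoking the hypothesis at degree $2k$ instead — this is exactly why the hypothesis is stated with a degree-$2kt$ SoS proof (giving room for $\E[Y_1^{2k}]$-type quantities, hence the $2k$ rather than $k$ in the hypothesis). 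I would need to make sure the bookkeeping of which moment bound (degree $k$ vs.\ degree $2k$) is needed where is consistent with the stated hypothesis, and that the resulting SoS proof degree is $kt$ as claimed for the conclusion (the conclusion's lower degree $kt$ is fine because the final certificate only needs to ``read off'' already-established scalar inequalities and sum them, which does not blow up the degree beyond the degree of the polynomial $\E[(\sum Y_i)^k]$ itself).
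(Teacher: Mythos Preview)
Your combinatorial skeleton---expand $(\sum_i Y_i)^k$, drop all multi-indices with a singleton by the mean-zero property, and count the survivors as at most $(kn)^{k/2}$---matches the paper exactly. The divergence, and the gap, is in how you bound each surviving term $\E[\prod_i Y_i^{w_i}]$ by $M$.

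You factor by independence into $\prod_s \E[Y_1^{a_s}]$ and then try to bound each factor by $M^{a_s/k}$ via an ``SoS H\"older/power-mean'' step $\E[Z^{p}] \le \E[Z^{q}]^{p/q}$. This is not a valid SoS statement in $v$: the right-hand side $\E[Z^{q}]^{p/q}$ is a fractional power of a polynomial in $v$, hence not a polynomial at all, so there is nothing for an SoS proof to certify. Your proposed rescue for the odd-$a_s$ case---invoke $\E[Y_1^{2k}]$ because the hypothesis is stated at SoS degree $2kt$---misreads the hypothesis: the $2kt$ is the \emph{degree of the SoS certificate} for the $k$-th moment bound, and gives you no control over the $2k$-th moment. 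So the per-term bound does not go through as written.

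The paper sidesteps all of this by applying the SoS AM-GM inequality \emph{inside} the expectation, before any factorization: for integers $w_i \ge 0$ with $\sum_i w_i = k$ (even), one has as a degree-$k$ SoS identity in the $y_i$'s that $\prod_i y_i^{w_i} \le \sum_i (w_i/k)\, y_i^k$. Substituting $y_i = Y_i(v)$ and taking expectation (which preserves SoS in $v$) gives
\[
\constrain \sststile{2kt}{v}\quad \E\Bigl[\prod_i Y_i^{w_i}\Bigr] \;\le\; \sum_i \frac{w_i}{k}\,\E[Y_i^k] \;\le\; \sum_i \frac{w_i}{k}\, M \;=\; M,
\]
using only the $k$-th moment hypothesis. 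No parity distinctions, no products of polynomials to multiply, no higher moments. Summing over the at most $(kn)^{k/2}$ surviving multi-indices finishes the proof.
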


Combining the above SoS inequality with the fact that
$yX = \lp( \beta*^\top X + \xi \rp) X$ is a degree-$2$ polynomial in $X$, which has SoS certifiably bounded moments, and $\xi$, which follows the Gaussian distribution, then gives essentially a population version of the moment bound.

The SoS moment bound on the empirical distribution over samples then follows by a careful analysis on the concentration properties of the empirical moments of $Z_B$.
See Appendix~\ref{sec:certifiably_bounded} for the detailed argument.
\begin{restatable}[SoS Moment Bound]{lemma}{SOSMOMENT}
\label{lem:sos-moment-bound}
Let $\alpha \in (0, 1/2)$, $\sigma{>}0$, $k \in \mathbbm Z^+$, $\beta^* \in \R^d$. 
Let $T$ be a set of $m$ batches drawn according to the distribution $D_{\beta^*}$ defined in \Cref{def:glr}, 
and batch size $n$. 
Assume that the clean covariates distribution $X$ satisfies \Cref{ass:dist} and $k \leq \Delta/2$.
Define $Z_B = \frac{1}{n} \sum_{  (X,y) \in B } X y$.
Suppose $m \gg \lp( \batchsc \rp) \alpha^{-1}$.
Then the following holds with probability at least $0.9$:
(a) $\{ Z_B \mid B \in T \}$ has 
$(M, 2k, 4k)$-certifiably bounded moments for some  
$M = \Mbound$, and 
(b) $ \cov_{ B \sim T }[ Z_B ] \preceq O (  (\normt{\beta^*}^2 + \sigma^2)/n ) \vec I $.
\end{restatable}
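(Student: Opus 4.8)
The plan is to split the argument into a population-level bound and an empirical concentration step, exactly mirroring the two parts (a) and (b) of the statement. For part (a), I would first establish the moment bound at the population level: treating $Z_B = \frac1n\sum_{(X,y)\in B} yX$ as a sum of $n$ i.i.d. copies of $yX = (\beta^{*\top}X + \xi)X$, I want to invoke the SoS Marcinkiewicz--Zygmund inequality (Lemma~\ref{cor:sos-iid-moment}) with $p(v,X) = v^\top (yX)$, which is a degree-$t=2$ polynomial in the underlying randomness $(X,\xi)$. For this I first need a bound of the form $\constrain \sststile{4k}{v} \E[(v^\top yX - v^\top\beta^*)^{2k}] \le M_0$ for a single sample. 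I would expand $v^\top yX - \E[v^\top yX] = (\beta^{*\top}X)(v^\top X) - v^\top\beta^* + \xi(v^\top X)$, split the $2k$-th power via the SoS triangle inequality $(a+b)^{2k} \le 2^{2k-1}(a^{2k}+b^{2k})$, and bound each piece: the $\xi$-part factors as $\E[\xi^{2k}]\cdot\E[(v^\top X)^{2k}]$, where $\E[\xi^{2k}] = (2k-1)!!\,\sigma^{2k}$ and $\E[(v^\top X)^{2k}] \le Q$ SoS-certifiably by Assumption~\ref{ass:dist}(3); the $\beta^*$-part is a degree-$4$ polynomial in $X$ whose $k$-th power is degree $4k \le 2\Delta$, again SoS-bounded using hypercontractivity / certifiable moments and $\norm{\beta^*}_2 \le R$ (here one uses Cauchy--Schwarz inside SoS to separate $(\beta^{*\top}X)^{2k}$ from $(v^\top X)^{2k}$). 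This yields $M_0 = O((2k)^{2k})(\sigma^{2k} + \norm{\beta^*}_2^{2k})\,Q$. Plugging into Lemma~\ref{cor:sos-iid-moment} gives the population bound on $\E[(v^\top(Z_B - \beta^*))^{2k}]$ with the stated $(kn)^k/n^{2k} = (k/n)^k$ scaling, matching $M = O((2k)^{2k}/n^k)\,Q\,(\sigma^{2k} + 2\norm{\beta^*}_2^{2k})$ up to the normalization by $n$ in the definition of $Z_B$.

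For part (b), the covariance bound, I would compute $\cov(Z_B) = \frac1n \cov(yX)$ directly: $\cov(yX) \preceq \E[y^2 XX^\top] = \E[(\beta^{*\top}X)^2 XX^\top] + \sigma^2\E[XX^\top]$, where the second term is $\sigma^2 \vec I$ by Assumption~\ref{ass:dist}(2), and the first term has operator norm $O(\norm{\beta^*}_2^2)$ by $L_4$-$L_2$ hypercontractivity (Assumption~\ref{ass:dist}(1)): for any unit $u$, $u^\top\E[(\beta^{*\top}X)^2XX^\top]u = \E[(\beta^{*\top}X)^2(u^\top X)^2] \le \sqrt{\E[(\beta^{*\top}X)^4]\E[(u^\top X)^4]} = O(\norm{\beta^*}_2^2)$. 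Dividing by $n$ gives $\cov(Z_B) \preceq O((\norm{\beta^*}_2^2 + \sigma^2)/n)\vec I$ at the population level.

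The remaining and, I expect, most delicate step is the \emph{empirical} version: showing that with the stated number of batches $m \gg (\batchsc)\alpha^{-1}$, the empirical moments and empirical covariance of $\{Z_B : B \in T\}$ are close to their population counterparts, and in particular that the empirical moment bound holds \emph{SoS-certifiably} (degree $4k$). The natural route is a matrix/tensor concentration argument: the degree-$2k$ moment tensor $\E_{B\sim T}[(Z_B - \bar\beta)^{\otimes 2k}]$, viewed as a $d^k \times d^k$ matrix, concentrates around its expectation, and an SoS-certifiable bound on the population side transfers to the empirical side up to the spectral deviation — this is a standard but technical lemma (e.g.\ as in the robust statistics literature, cf.\ the moment concentration arguments behind Theorem~6.17 of \cite{diakonikolas2023algorithmic}); the $\momb^{-1}$ and $(4kd)^{8k}$ factors in the sample bound are exactly what is needed to control the variance of the empirical $2k$-th moment via the $4k$-th moment (which exists by $k \le \Delta/2$) and to union-bound over a net of directions of size $d^{O(k)}$. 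I would carry this out by: (i) bounding $\Var_{B\sim T}$ of each moment coordinate using the population $4k$-th moment, (ii) applying a truncation + Bernstein argument per coordinate, (iii) union-bounding over $d^{O(k)}$ coordinates, and (iv) converting the resulting entrywise closeness of moment tensors into an SoS-degree-$4k$ inequality via a direct sum-of-squares decomposition of the deviation polynomial (this last conversion is where the degree doubling from $2k$ to $4k$ enters). The covariance concentration in (b) is the easy case of the same argument (Bernstein for the $d\times d$ matrix $Z_B Z_B^\top$, using that $\E[(v^\top Z_B)^4]$ is bounded from part (a)). I would defer the full details of the concentration computation to the appendix as indicated, and in the main proof just state the two population bounds and cite the concentration lemma.
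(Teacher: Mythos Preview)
Your plan is correct and matches the paper's proof almost exactly: expand $v^\top(yX-\beta^*)$ into the three pieces $(v^\top X)(\beta^{*\top}X)$, $\xi\,v^\top X$, and $v^\top\beta^*$, bound each via the SoS triangle inequality together with SoS AM-GM and the certifiable degree-$4k$ moment assumption on $X$, apply the SoS Marcinkiewicz--Zygmund lemma to pass to the batch average, and then transfer to the empirical distribution by entrywise concentration of the degree-$2k$ moment tensor combined with the Frobenius-to-spectral conversion (the paper's Lemma~\ref{claim:upper-bound} and Lemma~\ref{lem:pop-to-empirical}). Two minor corrections to your sketch: the paper uses plain Chebyshev per tensor coordinate (no truncation or Bernstein is needed with the stated $m$), and the SoS degree $4k$ already enters at the \emph{population} level---bounding $\E[(v^\top X)^{2k}(\beta^{*\top}X)^{2k}]$ via AM-GM requires the certifiable $4k$-th moments of $X$---rather than from the empirical conversion; also, a net-over-directions argument would \emph{not} give an SoS-certifiable bound, so your coordinate-wise plan (i)--(iv) is the right one and the earlier ``net of directions'' remark should be dropped.
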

Once we have that the central moments of $Z_B$ are certifiably bounded, \Cref{cor:beta-estimate} follows from the following SoS-based list-decodable mean-estimation algorithm:
\begin{lemma}[Theorem 5.5 from \cite{kothari2017better}]
\label{lem:sos-filter}
Let $S$ be a set of points in $\R^d$ containing a subset $S_{\text{good}}$ with $|S_{\text{good}}| \geq \alpha |S|$. Moreover, assume that 
$S_{\text{good}}$ 
has $(M, 2k, K)$-certifiably bounded moments
for some positive integers $k, K$ and $M > 0$. 
Then there exists an algorithm that, given $S, k, K, M$ and $\alpha$, runs in time $\poly( d^{K}, |S| )$, and with probability $0.9$, returns a list of $O(1/\alpha)$ many vectors containing some $\hat \mu$ with $\norm{ \hat \mu - \mu_{ S_{\text{good}} } }_2 = O \lp( M^{ 1/(2k) } \alpha^{-3/k} \rp)$.
\end{lemma}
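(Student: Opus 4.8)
This lemma is a restatement of Theorem 5.5 of \cite{kothari2017better} (see also the textbook treatment in \cite{diakonikolas2023algorithmic}), so the cleanest ``proof'' is a pointer to that reference; below I outline the Sum-of-Squares argument one would reconstruct. The plan is to write an SoS relaxation that searches for an $\alpha$-fraction sub-multiset of $S$ whose $2k$-th central moments are certifiably bounded, observe that the true good set $S_{\text{good}}$ witnesses feasibility, and then round the resulting pseudo-distribution into a short list of candidate means.

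\emph{The SoS program and feasibility.} Write $S = \{x_1,\dots,x_N\}$ with $N = |S|$, introduce weight indeterminates $w = (w_1,\dots,w_N)$ and the axiom set $\cA_w := \{\, w_i^2 = w_i\ \text{for } i\in[N],\ \sum_i w_i = \alpha N\,\}$, so that any solution is the $0/1$ indicator of a subset of size $\alpha N$. Let $\mu_w := \frac{1}{\alpha N}\sum_i w_i x_i$ denote the symbolic empirical mean of the selected subset, and let $v$ be a fresh vector indeterminate. One looks for a degree-$K$ pseudo-expectation $\pE$ in $w$ satisfying $\cA_w$ together with the belief that the selected subset has bounded moments, namely
\[
\constrain\ \sststile{K}{v}\ \pE\!\left[\, \tfrac{1}{\alpha N}\textstyle\sum_i w_i \langle v,\, x_i - \mu_w\rangle^{2k} \,\right] \le 2^{O(k)} M .
\]
This SDP is feasible: the point distribution supported on $w^\star := \mathds{1}_{S_{\text{good}}}$ satisfies $\cA_w$, and since $S_{\text{good}}$ has $(M,2k,K)$-certifiably bounded moments, it comes with an honest degree-$K$ SoS proof of the displayed inequality with constant $M$. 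Solving the SDP takes $\poly(d^K, N)$ time.

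\emph{Overlap implies mean closeness.} The technical heart is a purely SoS fact: if $A$ and $B$ are subsets of size $\alpha N$ whose $2k$-th central moments are SoS-bounded by $M'$ and whose overlap is a $\gamma$-fraction of each, then $\normt{\mu_A - \mu_B} = O\big((M')^{1/(2k)}\,\poly(1/\gamma)\big)$, with a degree-$O(k)$ SoS proof in the weights describing $A$ (holding $B$ fixed). One triangulates $\mu_A - \mu_B$ through $\mu_{A\cap B}$, controls $\langle v, \mu_A - \mu_{A\cap B}\rangle$ by a power-mean/Hölder inequality that is SoS-derivable from the $2k$-th moment bound on $A$ (and symmetrically for $B$), and finally specializes $v$ to the direction of $\mu_A - \mu_B$. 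Instantiating $A$ by the SoS-selected subset under $\pE$ and $B$ by $S_{\text{good}}$, one gets that whenever $\pE[\langle w, w^\star\rangle]$ is an $\Omega(\alpha)$-fraction of $\alpha N$, the pseudo-mean $\pE[\mu_w]$ lies within $O(M^{1/(2k)}\alpha^{-3/k})$ of $\mu_{S_{\text{good}}}$. To build the list, iterate: output $\hat\mu_j := \pE[\mu_w]$ and re-solve with the extra axioms $\normt{\mu_w - \hat\mu_\ell}^2 \ge c^2(M^{1/(2k)}\alpha^{-3/k})^2$ for all $\ell<j$, stopping when infeasible. As long as no prior $\hat\mu_\ell$ lies within $c\,M^{1/(2k)}\alpha^{-3/k}$ of $\mu_{S_{\text{good}}}$, the point distribution on $w^\star$ is still feasible, so the process cannot stop before producing a candidate that is $O(M^{1/(2k)}\alpha^{-3/k})$-close to $\mu_{S_{\text{good}}}$; and a packing argument (each surviving round certifies an $\alpha N$-sized subset with pseudo-overlap $O(\alpha)\cdot\alpha N$ with every already-claimed region, while $S$ cannot host more than $O(1/\alpha)$ essentially disjoint $\alpha$-fractions) caps the list at $O(1/\alpha)$. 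Any randomness in rounding the pseudo-distribution to concrete vectors can be amplified to failure probability at most $0.1$.

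\emph{Main obstacle.} The delicate points are (i) the precise SoS encoding in the first step, where one must verify that the feasibility witness $w^\star$ genuinely carries a degree-$K$ certificate of the moment bound --- this is exactly what $(M,2k,K)$-certifiability provides, but one must lift the proof from the indeterminate $v$ alone to the joint $(v,w)$ setting at the point $w = w^\star$; and (ii) pinning down the exponent $\alpha^{-3/k}$, since the Hölder step, the conversion of pseudo-overlap into a usable fraction, and the list-exclusion step each contribute a factor that is polynomial in $1/\alpha$ with exponent of order $1/k$, and one must check they compose to exactly $\alpha^{-3/k}$ rather than something larger. The remaining steps are routine bookkeeping.
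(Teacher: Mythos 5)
The paper gives no proof of this lemma: it is imported verbatim as a black-box citation to Theorem~5.5 of \cite{kothari2017better} (with the textbook treatment in \cite{diakonikolas2023algorithmic} as an alternative source), which is exactly what you do in your opening sentence. Your sketch of the underlying SoS argument---feasibility of the weighted moment-constrained program witnessed by $\mathds{1}_{S_{\text{good}}}$, the overlap-implies-mean-closeness identifiability step, and iterative rounding into an $O(1/\alpha)$-sized list---is a faithful outline of the standard proof, and you correctly flag the only genuinely delicate points (lifting the degree-$K$ certificate to the joint $(v,w)$ setting and tracking the exact $\alpha^{-3/k}$ exponent), so nothing further is needed here.
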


\begin{proof}[Proof of \Cref{cor:beta-estimate}]
Suppose we take  
$m \gg \lp( \batchsc \rp) \alpha^{-1}$ many batches.
Then  $m \; \Omega(\alpha) \gg \batchsc $ many of these batches are of inlier type with high constant probability. We denote the set of these batches by $G$.
Define $Z_B = \frac{1}{n} \sum_{(X,y) \in B} X y$.
Let $D_{\beta^*}^{\otimes n}$ be the distribution of a clean batch of size $n$ whose samples are all \iid from $D_{\beta^*}$ (cf. \Cref{def:glr}).
As shown in the proof of Lemma~\ref{lem:sos-moment-bound}, we have $\E_{ B \sim D_{\beta^*}^{\otimes n} }[Z_B] = \beta^*$ and $\cov_{ B \sim D_{\beta^*}^{\otimes n} }[Z_B]
\preceq O\lp({\lp(\sigma^2 + R^2\rp)}/{n}\rp) \vec I.$
Since $|G| \gg \batchsc$, by Markov's inequality,
it holds that
\begin{align}
\label{eq:vector-bound}
\left \|\littlesum_{B \in G} Z_B/ |G| - \beta^* \right \|_2
\leq O((\sigma + R)/\sqrt{n})
\end{align}
with high constant probability.
Besides, since $|G| \gg \batchsc$,
Lemma~\ref{lem:sos-moment-bound} shows that $
\mathcal Z :=\{Z_B \mid B \in G\}$ has $2k$-th central moments
SoS-certifiably bounded by 
\begin{align}
\label{eq:k-moment-bound}
M = \Mbound
\end{align}
with high constant probability.
Moreover, the covariance of $\mathcal Z$ can be bounded from above by 
\begin{align}
\label{eq:cov-bound}
\cov_{ Z \sim \mathcal Z } [Z] 
\preceq O \lp((\sigma^2 + R^2)/n \rp) \vec I
\end{align}
with high constant probability.
By the union bound, \Cref{eq:vector-bound}, \Cref{eq:k-moment-bound}, and \Cref{eq:cov-bound} hold simultaneously with high constant probability.
Conditioned on that, Lemma~\ref{lem:sos-filter} thus allows us to estimate the mean of 
$\{Z_B \mid B \in G\}$ up to accuracy 
$
O \lp( (k/\sqrt{n})  \; Q^{1/(2k)}
\lp( \sigma + R \rp)  \; \alpha^{-3k}
\rp).
$
Our estimate is then $
O \lp( (k/\sqrt{n})  \; Q^{1/(2k)}
\lp( \sigma + R \rp)  \; \alpha^{-3k}
\rp)
$ close to $\beta^*$ by \Cref{eq:vector-bound}, and
the triangle inequality.
This concludes the proof of \Cref{cor:beta-estimate}. Finally we can boost the probability of success to $1-\delta$ by running the above procedure $\log(1/\delta)$ many times and combining the lists obtained.
\end{proof}

\subsection{Pruning Routine}
\label{subsec:pruning}
In this subsection, we show that there is an algorithm,~\textsf{Pruning}, which reads a list $L$ containing a candidate close to $\beta^*$, and returns a sub-list $L' \subseteq L$ of size $O(1/\alpha)$  also containing a candidate close to $\beta^*$.  
\begin{restatable}[Pruning Lemma]{proposition}{PRUNE}
\label{prop:prune}
Let $\alpha \in (0, 1/2)$, $\delta \in (0,1)$, $k,n \in \mathbbm Z_+$, $\sigma, R > 0$, and $\beta^* \in \R^d$.
Let $L \subset \R^d$ be a list of candidate regressors, and $\beta \in L$ be a regressor such that $\norm{\beta - \beta^*}_2 < R$. 
Assume that the batch size $n$ satisfies that
$n \gg \prunebs$ and $k \leq \Delta / 2$.
Then there exists an algorithm \textsf{Pruning} that takes the list $L$, 
and the numbers $\alpha, \delta, R$ as input, 
draws $m = O \lp( \min \lp(\log(|L|), d^2\rp) \; \log(1/\delta) \;  \alpha^{-3} \rp)$ many batches from the corrupted batch distribution of \Cref{def:glr},
runs in time $\poly(d m |L|)$,
and outputs at most $O(1/\alpha)$ candidate regressors $L' \subseteq L$ such that there is at least one regressor $\beta \in L'$ satisfying $\norm{\beta - \beta^*}_2^2 \leq O\lp( \betagap \rp)$ with probability at least $1 - \delta$ over the randomness of the batches drawn.
\end{restatable}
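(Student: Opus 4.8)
\medskip

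\noindent\textbf{Proof plan for \Cref{prop:prune}.}
The plan is to realize the ``cross-candidate'' consistency test sketched in the technical overview and then apply a packing argument in the style of Theorem~A.1 of \cite{DiaKK20-multifilter}. Fix a separation threshold $\tau = \Theta\lp( \shortbetagap \rp)$ (with a large absolute constant), and for a batch $B$ and a vector $\gamma \in \R^d$ let $\mathrm{err}_B(\gamma) = \tfrac1n\sum_{(X,y)\in B}(y-\gamma^\top X)^2$ be the empirical squared loss of $\gamma$ on $B$. The algorithm draws $m$ batches from the corrupted distribution and \emph{keeps} a candidate $\beta \in L$ iff there is a set $S_\beta$ of at least $(\alpha/4)\,m$ of the drawn batches such that on every $B \in S_\beta$ one has $\mathrm{err}_B(\beta) < \mathrm{err}_B(\beta')$ for \emph{every} $\beta' \in L$ with $\norm{\beta-\beta'}_2 > \tau$. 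Let $P \subseteq L$ be the kept candidates; the algorithm then outputs a maximal $\tau$-separated subset $C \subseteq P$ (built greedily), which lies in $L$ and, as argued below, has size $O(1/\alpha)$ and contains a candidate close to $\beta^*$.

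\medskip

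\noindent\textbf{Survival of the good candidate.} First I would show the distinguished $\beta \in L$ with $\norm{\beta-\beta^*}_2 < R$ is kept. On a clean batch, $\mathrm{err}_B(\gamma)$ is the average of $n$ i.i.d.\ copies of $\lp( (\beta^*-\gamma)^\top X + \xi \rp)^2$, with mean $\norm{\beta^*-\gamma}_2^2 + \sigma^2$ by the identity-second-moment part of \Cref{ass:dist}. Using $L_4$-$L_2$ hypercontractivity together with the (SoS, hence also ordinary) Marcinkiewicz--Zygmund moment bounds of \Cref{cor:sos-iid-moment} and \Cref{lem:sos-moment-bound} applied at degree $\Theta(k)$ --- legitimate since $k \le \Delta/2$, and with $n \gg \prunebs$ ensuring that the multiplicative fluctuation of the quadratic terms stays $o(1)$ even at the $(1-\Theta(\alpha))$-quantile --- one gets that for a $(1-\alpha/8)$-fraction of clean batches and simultaneously for all $\gamma \in L$, $\mathrm{err}_B(\gamma) = \norm{\beta^*-\gamma}_2^2 + \sigma^2 \pm O_k\lp( \norm{\beta^*-\gamma}_2 \cdot \tfrac{\sigma Q^{1/k}}{\alpha^{1/k}\sqrt n} + \tfrac{Q^{1/k}}{\alpha^{1/k}n}(\norm{\beta^*-\gamma}_2^2 + \sigma^2) \rp)$. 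The union bound over candidates costs a $\log|L|$ factor, or it can be replaced by a net over the $O(d^2)$-dimensional statistic $(\gamma,\gamma\gamma^\top)$ on which $\mathrm{err}_B$ depends; this is the source of the $\min(\log|L|, d^2)$ in the sample size. For $\beta' \in L$ with $\norm{\beta-\beta'}_2 > \tau$ we have $\norm{\beta^*-\beta'}_2 > \tau - R$, so on these good clean batches $\mathrm{err}_B(\beta) \le R^2 + \sigma^2 + (\text{fluct})$ while $\mathrm{err}_B(\beta') \ge (\tau-R)^2 + \sigma^2 - (\text{fluct})$, and the choice $\tau = \Theta(R + \sigma Q^{1/k}\alpha^{-1/k}/\sqrt n)$ makes the latter strictly larger (here $n \gg \prunebs$ is used again to absorb $\norm{\beta^*-\beta'}_2^2 Q^{1/k}/(\alpha^{1/k}n)$ into $\tfrac12(\tau-R)^2$). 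A Chernoff bound gives $\ge (\alpha/2)m$ clean batches among the $m$ drawn, hence $\ge (\alpha/4)m$ good clean ones, so $\beta$ is kept with $S_\beta$ equal to this set.

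\medskip

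\noindent\textbf{Disjoint witnesses, list size, and error.} Next I would note the key separation property: if $\beta_1,\beta_2 \in P$ with $\norm{\beta_1-\beta_2}_2 > \tau$, then a batch $B \in S_{\beta_1}\cap S_{\beta_2}$ would satisfy both $\mathrm{err}_B(\beta_1) < \mathrm{err}_B(\beta_2)$ (as $\beta_2$ is $\tau$-far from $\beta_1$) and $\mathrm{err}_B(\beta_2) < \mathrm{err}_B(\beta_1)$, a contradiction; hence $S_{\beta_1}\cap S_{\beta_2} = \emptyset$. This is purely combinatorial and does not care whether $B$ is clean or adversarial, so the corrupted batches in the sample are harmless. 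Therefore the witness sets of the elements of $C$ are pairwise disjoint, each of size $\ge (\alpha/4)m$, and all live among the $m$ drawn batches, so $|C| \le 4/\alpha = O(1/\alpha)$; by maximality every kept candidate --- in particular $\beta$ --- is within $\tau$ of some $\widehat\beta \in C$, giving $\norm{\widehat\beta-\beta^*}_2 \le \tau + R = O\lp( \shortbetagap \rp)$, i.e.\ the claimed error. All data-dependent events (the Chernoff bounds on the clean count and on the conditional fraction of well-concentrated clean batches, uniformly over the list, together with the concentration event above) hold with probability $1-\delta$ once $m = O\lp( \min(\log|L|, d^2)\,\log(1/\delta)\,\alpha^{-3} \rp)$, and the computation --- evaluating $\mathrm{err}_B(\gamma)$ over all batch/candidate pairs, the pairwise distance checks, and the greedy selection --- runs in $\poly(d\,m\,|L|)$ time.

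\medskip

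\noindent\textbf{Main obstacle.} The crux is the survival step: one must calibrate $\tau$ and the strictness of the test so that the good candidate provably beats \emph{every} $\tau$-far competitor on a $\Theta(\alpha)$-fraction of clean batches, while keeping the test tight enough that the separation property still yields genuinely disjoint witness sets. This forces a quantitatively sharp concentration analysis of the batch-averaged squared loss at the $(1-\Theta(\alpha))$-quantile --- in particular control of the cross term $\tfrac1n\sum (\beta^*-\gamma)^\top X\,\xi$ through the higher-moment bounds --- and it is exactly this cross term that produces the $k\alpha^{-1/k}\sigma Q^{1/k}/\sqrt n$ contribution to the final error. The remaining pieces are either a one-line contradiction (the separation property) or the standard cover argument from \cite{DiaKK20-multifilter}.
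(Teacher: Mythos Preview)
Your plan is very close in spirit to the paper's: both prune via a ``cross-candidate'' empirical-loss test and then take a maximal $\tau$-separated subset. The key divergence is that your test asks for a \emph{single} witness set $S_\beta$ of size $\geq (\alpha/4)m$ on which $\beta$ beats \emph{every} $\tau$-far $\beta'\in L$ on \emph{every} batch in $S_\beta$; the paper's test (the system $\IE(\beta;L,T,R)$) is per-competitor: it only asks that, within the soft cluster $\mathcal W$, each far $\beta'$ individually causes at most an $\alpha/20$-fraction of losses, and different $\beta'$ may be responsible for different bad batches. Your stronger test buys you exact disjointness of witness sets (hence the clean $|C|\leq 4/\alpha$), whereas the paper settles for approximate disjointness $(\mathcal W_i\cap\mathcal W_j)(T)\leq 0.1\alpha(\mathcal W_i(T)+\mathcal W_j(T))$ and then runs an inclusion--exclusion packing argument.

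The price you pay shows up in the survival step, and this is where there is a genuine gap. To certify that the good $\beta$ passes your test you need, on a single clean batch $B$, the event $\{\forall\,\beta'\in L_{\mathrm{far}}:\ \mathrm{err}_B(\beta')>\mathrm{err}_B(\beta)\}$ to hold with probability $\geq 1-\Theta(\alpha)$. But the anti-concentration piece --- that $\tfrac1n\sum_i((\beta'-\beta^*)^\top X_i)^2$ is bounded below --- is a Chernoff event in the direction $(\beta'-\beta^*)/\|\beta'-\beta^*\|_2$, and to hold \emph{simultaneously} for all $|L|$ competitor directions on one batch would force $n\gg\log|L|$, which is not assumed. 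Your proposed net/VC fix does not rescue this: with $n\ll d$ the empirical second-moment matrix $\tfrac1n\sum X_iX_i^\top$ is singular, so there are always unit directions with zero quadratic form, and a net over $(\gamma,\gamma\gamma^\top)$ has size exponential in $d$, again requiring $n\gg d$. The paper sidesteps this entirely: it only proves the per-$\beta'$ bound $\Pr_B[\mathrm{err}_B(\beta')\leq\mathrm{err}_B(\beta)]\leq\alpha/40$, and then the $\min(\log|L|,d^2)$ factor enters at the level of the \emph{empirical fraction over the $m$ batches} (union bound over $L$, or VC for the $O(d^2)$-dimensional threshold $\mathbb 1\{Z_{\beta'}(B)>0\}$) --- not at the per-batch level. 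If you relax your test to the per-competitor form, your survival argument goes through as you wrote it, and you recover the paper's proof with the inclusion--exclusion list-size bound in place of exact disjointness.
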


The \textsf{Pruning} algorithm involves two phases.
Initially, it filters regressors $\beta \in L$ by keeping those matching a certain set of solvable linear inequalities. 
Then it selects a subset of the remaining regressors, ensuring each pair is adequately distant. 
Lemmas \ref{lem:pruning-list-size-bound} and \ref{lem:pruning-error-bound} respectively
show that the refined list is not excessively large
 and that it contains a vector near the true regressor $\beta^*$, if such a candidate exists in the original list $L$. 
The proof of Proposition~\ref{prop:prune} follows from the above two lemmas. \looseness=-1

For each regressor, we now describe the set of linear inequalities used in the pruning process involving a \emph{weighting function} $\mathcal W$ over the set of batches $T$. 
At a high level, a weighting function can be interpreted as a ``soft cluster'' for each candidate regressor $\beta$, and the inequalities aim to identify a soft cluster for each candidate regressor $\beta$ by ensuring:
(i) at least an $\alpha$-fraction of batches are included in the cluster, and 
(ii) $\beta$ has a smaller empirical $\ell_2$ error in comparison to any other regressor $\beta'$ that is significantly distant from $\beta$, based on the following conditions involving the constant $c$, radius $R$, standard deviation $\sigma$, and batch size $n$.
We denote this set of linear inequalities by $\IE(\beta; L, T, R)$, i.e.,  $\IE(\beta; L, T, R)$ is the following set of inequalities in the 
variable(s) $\mathcal W: T \mapsto [0, 1]$:
\begin{align}
&\textstyle\sum_{B \in T}  \mathcal W(B) \geq  0.9 \alpha |T|, \label{Eq2Cond0}
\\
     &\forall \beta' \in L \text{ satisfying }
     \norm{\beta' - \beta}_2 \geq c \lp( \shortbetagap \rp) \nonumber \\
     &\text{ for some sufficiently large constant } c
     \, , \nonumber \\
     &\textstyle \sum_{B \in T}
     \mathbbm 1 \{ 
     \textstyle\sum_{ (X,y)\in B } \lp( y-X^\top \beta\rp)^2  \nonumber \\
     &\leq 
     \textstyle\sum_{ (X,y)\in B } \lp( y-X^\top \beta'\rp)^2 
 \}
     ~\mathcal W(B)
     \leq \frac{\alpha}{20} \textstyle\sum_{B \in T}  \mathcal W(B) \;. \label{Eq2Cond1}
\end{align}
We now provide some intuition about why there cannot be too many regressors whose associated linear inequalities are 
satisfiable subject to the constraint that they are all sufficiently separated.
At a high level, this is because condition (ii) enforces the soft clusters associated with two sufficiently separated candidate regressors must have small intersection as two candidate regressors cannot \emph{simultaneously} do better than the other in terms of their empirical errors on \emph{the same batch}.
We now precisely state the lemmas. For proofs, please see Appendix~\ref{app:pruning}. 
\begin{restatable}[List Size Bound]{lemma}{LISTSIZE}
\label{lem:pruning-list-size-bound}
Let $R > 0$, and
$L$ be a list of candidate regressors.
Let $T$ be a set of batches.
Let $L' \subseteq L$ be a sublist of candidate regressors satisfying the following conditions: (1)
$\IE(\beta; L, T, R)$ has solutions for each $\beta \in L'$, and 
(2) $\norm{\beta_1 - \beta_2}_2 \geq c \lp( \betagap \rp)$ for any two $\beta_1, \beta_2 \in L'$.
Then it holds the size of $L'$ is at most $O(1/\alpha)$.
\end{restatable}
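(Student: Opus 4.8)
The plan is to run a packing/covering argument against a cluster of clean batches, closely following Theorem A.1 in \cite{DiaKK20-multifilter}. First I would fix a sufficiently large set $T$ of batches so that, with the promised probability, the clean batches $G \subseteq T$ form at least a $0.9\alpha$-fraction of $T$, and moreover $\beta^*$ itself ``explains'' the clean batches well: that is, for any regressor $\beta'$ far from $\beta^*$, an overwhelming fraction of clean batches $B$ satisfies $\sum_{(X,y)\in B}(y - X^\top\beta^*)^2 < \sum_{(X,y)\in B}(y - X^\top\beta')^2$. This concentration statement is essentially the event whose quantitative form is established in \Cref{lem:pruning-error-bound} (in the appendix), so I would invoke that; the key point is that the clean batches behave consistently with the true regressor, and a far-away $\beta'$ incurs a strictly and detectably larger empirical $\ell_2$ loss on almost all of them.

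Next, for each $\beta \in L'$ let $\mathcal{W}_\beta$ be a feasible solution to $\IE(\beta; L, T, R)$ and let $A_\beta = \{B \in T : \mathcal{W}_\beta(B) \text{ is large}\}$ be (a hard version of) the associated soft cluster; by \eqref{Eq2Cond0} the total mass $\sum_B \mathcal{W}_\beta(B) \geq 0.9\alpha|T|$. The core claim is the packing property: if $\beta_1, \beta_2 \in L'$ are $c(\shortbetagap)$-separated, then their soft clusters have small overlap, i.e.\ $\sum_{B \in T} \min(\mathcal{W}_{\beta_1}(B), \mathcal{W}_{\beta_2}(B))$ is at most a small constant times $\alpha|T|$. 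The reason is condition \eqref{Eq2Cond1}: on any single batch $B$, at most one of the two events ``$\beta_1$ beats $\beta_2$'' and ``$\beta_2$ beats $\beta_1$'' can fail to hold (the two empirical losses are comparable real numbers, so one is $\le$ the other), hence for every batch at least one of the two indicator conditions in \eqref{Eq2Cond1} is triggered. Summing \eqref{Eq2Cond1} for $\beta_1$ (with witness $\beta_2 \in L'$, which is far from $\beta_1$ by assumption (2)) and for $\beta_2$ (with witness $\beta_1$) and combining with $\sum_B \min(\mathcal W_{\beta_1}, \mathcal W_{\beta_2}) \le \sum_B \mathbf 1\{\beta_1 \text{ beats } \beta_2\}\mathcal W_{\beta_1}(B) + \sum_B \mathbf 1\{\beta_2 \text{ beats } \beta_1\}\mathcal W_{\beta_2}(B)$ (using that on each batch one of the two indicators is $1$) bounds the overlap by $\tfrac{\alpha}{20}(\sum_B \mathcal W_{\beta_1} + \sum_B \mathcal W_{\beta_2})$.

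Finally I would convert this bounded-overlap property into a bound on $|L'|$ by a mass-counting argument: since each $\mathcal W_\beta$ has mass $\ge 0.9\alpha|T|$ and every pairwise overlap is $\le \tfrac{\alpha}{10}|T|$ say, an inclusion–exclusion / averaging argument (as in the cited theorem, e.g.\ bounding $\sum_B (\sum_{\beta \in L'} \mathcal W_\beta(B))$ from above using that the batches $B$ appearing in many clusters contribute large pairwise overlaps, or a double-counting of pairs) forces $|L'| = O(1/\alpha)$. More concretely: if $|L'| > C/\alpha$ for large $C$, then by averaging some batch $B$ lies in $\ge C/2$ of the hard clusters, but then picking two separated $\beta_1,\beta_2$ among those and noting $B$ contributes to both of their mass while the total co-occurrence is bounded yields a contradiction after summing over pairs. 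The main obstacle I anticipate is making the packing step fully rigorous while working with \emph{soft} weighting functions rather than sets — one must be careful that the ``at most one indicator fails per batch'' reasoning interacts correctly with the fractional weights, and that the witness regressor used in \eqref{Eq2Cond1} is guaranteed (via hypothesis (2)) to satisfy the separation precondition $\norm{\beta' - \beta}_2 \ge c(\shortbetagap)$; getting the constants in \eqref{Eq2Cond1} (the $\alpha/20$) to line up with the desired $O(1/\alpha)$ list size is routine but needs care.
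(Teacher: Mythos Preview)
Your core argument (the second and third paragraphs) is correct and is essentially the paper's proof: for any two well-separated $\beta_1,\beta_2\in L'$, on every batch at least one of the two indicators in \eqref{Eq2Cond1} fires, so $\sum_B \min(\mathcal W_{\beta_1},\mathcal W_{\beta_2})$ is bounded by the sum of the two left-hand sides of \eqref{Eq2Cond1}, hence by $\tfrac{\alpha}{20}\big(\mathcal W_{\beta_1}(T)+\mathcal W_{\beta_2}(T)\big)$; then an inclusion--exclusion / mass-counting step forces $|L'|=O(1/\alpha)$. The paper phrases the overlap bound by contradiction, whereas your direct inequality $\min(\mathcal W_{\beta_1},\mathcal W_{\beta_2})\le \mathbf 1_{\{\beta_1\text{ beats }\beta_2\}}\mathcal W_{\beta_1}+\mathbf 1_{\{\beta_2\text{ beats }\beta_1\}}\mathcal W_{\beta_2}$ is slightly cleaner, but the idea is identical.

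Your first paragraph, however, is a misreading of the statement and should be dropped entirely. This lemma is a \emph{deterministic} combinatorial claim: $T$ is an arbitrary fixed set of batches, there is no randomness, no clean subset $G$, and no role for $\beta^*$ or for \Cref{lem:pruning-error-bound}. The only inputs are conditions (1) and (2). All the work you describe there --- fixing $T$ large, invoking concentration so that $\beta^*$ ``explains'' the clean batches --- belongs to \Cref{lem:pruning-error-bound} (the companion lemma guaranteeing feasibility for a $\beta$ near $\beta^*$), not here. Once you excise that paragraph, what remains is exactly the right proof.
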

The next lemma we need shows that the list after an application of Lemma~\ref{lem:pruning-list-size-bound} contains an element close to $\beta^*$.
To get some intuition, let us fix some $\beta$ that is close to $\beta^*$ and some $\beta'$ that is far from $\beta$.
By the triangle inequality, $\beta'$ therefore should be far from $\beta^*$ as well.
As the square loss of a candidate regressor can be viewed as a surrogate for the distance between the regressor and the optimal, it follows that $\sum_{ (X,y) \in B }  \lp( y - X^T \beta \rp)^2$ must be significantly less than $\sum_{ (X,y) \in B }  \lp( y - X^T \beta' \rp)^2$ over all inlier batches in expectation.
Due to $L_2$-$L_4$ hypercontractivity of $X$, we can show that $\lp( y - X^T \beta' \rp)^2$ must be weakly anti-concentrated. 

On the other hand, due to the bounds on the higher-order moments of $X$, we can show that $\lp( y - X^T \beta \rp)^2$ must be sufficiently concentrated around its mean.
Combining the two observations then we show that \Cref{Eq2Cond1} must hold with high probability over the inlier distributions. 
Therefore, setting $\mathcal W$ to be the indicator variables for inlier batches must constitute a valid solution to the set of inequalities $\IE(\beta;L,T,R)$ constructed.

Formally, we establish the following lemma, whose
full proof can be found in Appendix~\ref{app:pruning}.
\begin{restatable}[Error Bound]{lemma}{ERROR}
\label{lem:pruning-error-bound}
Let $\alpha \in (0, 1/2)$, $\delta \in (0,1)$, $n, K \in \mathbbm Z_+$, $\sigma, R > 0$, and $\beta^* \in \R^d$.
Let $L$ be a list of candidate regressors of size $K$
, and $\beta \in L$ be a regressor such that $\normt{\beta -  \beta^*} < R$.
Let $n \gg \prunebs$ be the batch size parameter.
Suppose $T$ is a set of  $m  \gg \min \lp(\log(K), d^2\rp) \; \log(1/\delta) \;  \alpha^{-3}$ many batches of size $n$ drawn from the corrupted batch distribution of \Cref{def:glr}.
With probability at least $1 - \delta$ over the randomness of $T$, we have that 
the system $\IE(\beta; L, T, R) $ has solutions.
\end{restatable}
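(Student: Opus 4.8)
The plan is to prove that $\IE(\beta;L,T,R)$ is feasible by exhibiting an explicit feasible weighting, namely the clean-batch indicator $\mathcal W^\star(B) = \mathbbm 1\{B \text{ is a clean batch}\}$, and checking that it satisfies \Cref{Eq2Cond0} and \Cref{Eq2Cond1} with probability at least $1-\delta$ over the draw of $T$. Write $T_{\mathrm{in}} \subseteq T$ for the clean batches; conditioned on the identity of $T_{\mathrm{in}}$, its members are i.i.d.\ clean batches drawn from $D_{\beta^*}$ of \Cref{def:glr}. Constraint \Cref{Eq2Cond0} is just $|T_{\mathrm{in}}| \ge 0.9\alpha|T|$, which holds with probability $\ge 1-\delta/3$ by a Chernoff bound since $\E[|T_{\mathrm{in}}|] = \alpha m$ and $m \gg \alpha^{-1}\log(1/\delta)$. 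With $\mathcal W = \mathcal W^\star$, \Cref{Eq2Cond1} asks that for each competitor $\beta'\in L$ with $\normt{\beta'-\beta}\ge c\cdot(\shortbetagap)$, the number of clean batches $B$ satisfying $\mathrm{gap}_{\beta'}(B)\le 0$, where $\mathrm{gap}_{\beta'}(B):=\sum_{(X,y)\in B}\bigl[(y-X^\top\beta')^2 - (y-X^\top\beta)^2\bigr]$, is at most $\tfrac{\alpha}{20}|T_{\mathrm{in}}|$.

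The crux is a per-clean-batch bound: for every such $\beta'$, a single clean batch $B$ has $\Pr[\mathrm{gap}_{\beta'}(B)\le 0] \le \alpha/40$. Setting $u = \beta-\beta^*$, $u' = \beta'-\beta^*$ and using $y = (\beta^*)^\top X + \xi$, a batch $B = \{(X_i,\xi_i)\}_{i\le n}$ has
\[
\mathrm{gap}_{\beta'}(B) \;=\; \sum_i \bigl[(X_i^\top u')^2 - (X_i^\top u)^2\bigr] \;-\; 2\sum_i \xi_i X_i^\top(u'-u),
\]
with mean $n(\normt{u'}^2 - \normt{u}^2)$. Since $\normt{u} < R \le \normt{\beta'-\beta}/c$, the triangle inequality gives $\normt{u'} \ge \tfrac12\normt{\beta'-\beta}$ and $\normt{u'}^2 \ge \tfrac{c^2}{4}R^2$, so the mean gap is at least $\tfrac{n}{2}\normt{u'}^2$. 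I would then lower bound $\mathrm{gap}_{\beta'}(B)$ by intersecting three events, each arranged to hold with probability $\ge 1-\alpha/120$: (i) $\tfrac1n\sum_i(X_i^\top u')^2 \ge \tfrac12\normt{u'}^2$; (ii) $\tfrac1n\sum_i(X_i^\top u)^2 \le 2R^2$; and (iii) $\bigl|\tfrac2n\sum_i\xi_i X_i^\top(u'-u)\bigr| \le \tfrac14\normt{u'}^2$. On their intersection, $\mathrm{gap}_{\beta'}(B) \ge n\bigl(\tfrac14\normt{u'}^2 - 2R^2\bigr) > 0$ once $c$ is a large enough absolute constant.

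Events (i) and (ii) each state that the average of the i.i.d.\ nonnegative variables $(X_i^\top v)^2$ (with $v\in\{u',u\}$) concentrates around its mean $\normt{v}^2$. I would bound the central $2k$-th moment of this average via the classical Marcinkiewicz-Zygmund inequality together with the covariate moment bound $\E[(X^\top v)^{2k}]\le Q\normt{v}^{2k}$ (available because $k\le\Delta/2$), and apply Markov; the probability of a relative deviation of $\tfrac12$ comes out to $2^{O(k)}Q(k/n)^{\Theta(k)}$, which is below $\alpha/120$ exactly when $n\gg\prunebs$. (Alternatively, (i) follows from the $L_4$-$L_2$ hypercontractivity of $X$ via Paley-Zygmund.) For (iii), the summands $\xi_i X_i^\top \widehat w$, with $w = u'-u = \beta'-\beta$ and $\widehat w = w/\normt{w}$, are mean-zero with $2k$-th moment at most $(2k-1)!!\,\sigma^{2k}Q$ by independence of $\xi$ from $X$; Marcinkiewicz-Zygmund and Markov give $\bigl|\tfrac1n\sum_i\xi_i X_i^\top\widehat w\bigr| \lesssim k\sigma Q^{1/(2k)}\alpha^{-1/(2k)}/\sqrt n$ with probability $\ge 1-\alpha/120$, hence $\bigl|\tfrac2n\sum_i\xi_i X_i^\top w\bigr| \lesssim \normt{w}\cdot k\sigma Q^{1/(2k)}\alpha^{-1/(2k)}/\sqrt n \le \tfrac14\normt{u'}^2$ by $\normt{u'}\ge\tfrac12\normt{w}$ and $\normt{w}\ge c\cdot(\shortbetagap) \gg k\sigma Q^{1/(2k)}\alpha^{-1/(2k)}/\sqrt n$. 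This is exactly why the separation threshold in \Cref{Eq2Cond1} is $c\cdot(\shortbetagap)$. I expect step (i) to be the main obstacle: a second-moment/Chebyshev argument (equivalently $L_4$-$L_2$ alone) gives failure probability only $O(1/n)$ and hence requires $n\gg1/\alpha$ --- the barrier of \cite{DasJKS22} --- so higher moments of the covariates are indispensable here, and the exponent $\alpha^{-2/k}$ in $\prunebs$ is born from this step.

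Finally, I would promote the per-batch bound to a uniform statement over the qualifying $\beta'$. For a fixed $\beta'$, since the clean batches are i.i.d.\ (conditioned on $T_{\mathrm{in}}$), a Chernoff bound gives $\sum_{B\in T_{\mathrm{in}}}\mathbbm 1\{\mathrm{gap}_{\beta'}(B)\le 0\}\le\tfrac{\alpha}{20}|T_{\mathrm{in}}|$ except with probability $\exp(-\Omega(\alpha|T_{\mathrm{in}}|)) = \exp(-\Omega(\alpha^2 m))$; a union bound over the at most $|L|$ competitors costs a $\log|L|$ factor, giving the $\log|L|\cdot\alpha^{-3}\log(1/\delta)$ term. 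When $|L|$ is large one replaces the union bound by uniform convergence: for fixed $\beta,\beta'$ the map $B\mapsto\mathrm{gap}_{\beta'}(B)$ is an affine function of the batch statistic $\bigl(\sum_i X_iX_i^\top,\ \sum_i y_iX_i\bigr)$, so the events $\{\mathrm{gap}_{\beta'}(B)\le 0\}$ form a family of halfspaces in a space of dimension $O(d^2)$, which has VC dimension $O(d^2)$, and a relative Vapnik-Chervonenkis bound yields the $d^2\cdot\alpha^{-3}\log(1/\delta)$ alternative; taking the minimum gives the stated $m$. Collecting the failure probabilities of \Cref{Eq2Cond0}, of the three per-batch events, and of the uniform-convergence step shows $\mathcal W^\star$ is feasible for $\IE(\beta;L,T,R)$ with probability at least $1-\delta$.
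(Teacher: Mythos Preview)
Your proof is correct and follows essentially the same route as the paper: exhibit the clean-batch indicator as a feasible weighting, decompose the per-batch gap into the three terms $\sum_i(X_i^\top u')^2$, $\sum_i(X_i^\top u)^2$, and the cross term $2\sum_i\xi_iX_i^\top(u'-u)$, bound each with failure probability $\le\alpha/120$, and then lift to a uniform statement via a union bound over $L$ or a VC argument in $O(d^2)$ dimensions.

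One small discrepancy in emphasis is worth flagging. In the paper the lower bound on $\sum_i(X_i^\top u')^2$ (your event~(i)) is \emph{not} the bottleneck: it is handled purely by Paley--Zygmund (from $L_4$--$L_2$ hypercontractivity) plus a Chernoff bound on the number of indices with $(X_i^\top u')^2\ge\tfrac12\|u'\|^2$, which needs only $n\gg\log(1/\alpha)$. The step that genuinely forces higher moments, and is the source of the exponent $\alpha^{-2/k}$ in $\prunebs$, is the \emph{upper} tail of $\sum_i(X_i^\top u)^2$ (your event~(ii)); lower tails of sums of nonnegative variables are cheap via anti-concentration, upper tails are not. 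Your symmetric treatment of (i) and (ii) via higher-moment Marcinkiewicz--Zygmund also works (and in fact with the $2k$-th moment gives a slightly milder requirement $n\gg kQ^{1/k}\alpha^{-1/k}$), but your commentary that ``(i) is the main obstacle'' and that $\alpha^{-2/k}$ is ``born from this step'' inverts where the difficulty actually lies.
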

Proposition~\ref{prop:prune} now follows by an application of the above lemmas.
\begin{proof}[Proof of Proposition~\ref{prop:prune}]
The pruning procedure proceeds in two steps.
First, it filters out the $\beta$ such that $\IE(\beta; L, T, R)$ has no solution. 
Second, among the remaining regressors, it adds them into the output list as long as it is not 

$ c \; \lp( \betagap \rp)$
-close to some existing regressor in the output list for sufficiently large constant $c$.
Then the size of the output list is at most $O(1/\alpha)$ by Lemma~\ref{lem:pruning-list-size-bound}.

\noindent It remains to show that there exists some $\beta'$ close to $\beta^*$ in the output list if there exists some $\beta$ satisfying $\norm{\beta - \beta^*}_2 < R$ in the input list.
By Lemma~\ref{lem:pruning-error-bound}, $\beta$ will not be filtered out with probability at least $1 - \delta$ in the first step.
In the second step, either $\beta$ is added to the output list or there must be some $\beta'$ satisfying $\norm{\beta' - \beta}_2 \leq O\lp( \betagap\rp)$. 
Hence, we must have $
\norm{\beta' - \beta^*}_2 \leq
O\lp( \betagap \rp)$ by the triangle inequality.
This concludes the proof. %
\end{proof}

\subsection{Putting Things Together}
\label{subsec:main}

\begin{algorithm}[]
\caption{Batch-List-Decode-LRegression (Informal)} 
\label{alg:batch-rl-informal}
\begin{algorithmic}[1]
\State \textbf{Input}: Batch sample access to the linear regression instance, $\sigma, R$ as specified in \Cref{thm:main}.
\State Initialize a list $L = \{ 0 \}$ to hold candidate regressors.

\For{$t = 0, \cdots, \log(R/\sigma )$}
    \State Initialize an empty list $L'$ to store refined candidate regressors.
    \For{candidate regressor $\hat \beta \in L$}
        \State Take sufficiently many batched samples $T$.
        \State For each $(X,y)$ in $T$, compute the residue $(X,y {-} {\hat \beta}^{\top} X)$. 
        \State Denote the resulting new set of batches as $T'$.
        \State Learn a new list of regressors by running the algorithm from \Cref{cor:beta-estimate} on $T'$.
        \State Add the results to $L'$.
    \EndFor
    \State Replace $L$ with $L'$.
    \State Run algorithm from \Cref{prop:prune} to prune the list $L$ into one with size $O(1/\alpha)$.
\EndFor

\State \textbf{return} $L$.
\end{algorithmic}
\end{algorithm}

In this section, we prove our main theorem, 
starting with a high-level overview of the algorithm, 
which mirrors the structure of the robust linear regression algorithm 
from \cite{diakonikolas2019efficient} that can tolerate a small constant fraction of outliers. 
The process begins with estimating $\beta^*$, as outlined in Corollary~\ref{cor:beta-estimate}, 
to obtain a list $L$. 
We then create new linear regression instances by transforming each sample $(X,y)$ 
into $(X, y - \beta^{\top} X)$ for every $\beta$ in $L$. 
This ensures that for at least one transformed instance, the norm of the optimal regressor decreases significantly. 
Applying Corollary~\ref{cor:beta-estimate} to these instances and merging the resulting 
lists yields a list containing a candidate regressor that is closer to $\beta^*$. 
We iterate this process until we get a list with an element that is sufficiently close to $\beta^*$.
One issue is that the list size will increase exponentially in terms of the number of iterations. 
To counter this, Proposition~\ref{prop:prune} is employed to prune the list to an optimal size while maintaining the error of the best candidate regressor to within a constant factor. 
The pseudocode of an informal version of the algorithm is provided in \Cref{alg:batch-rl-informal}.
See \Cref{alg:batch-rl} in the appendix for the formal version.

\begin{proof}[Proof of \Cref{thm:main}]
Suppose $\norm{\beta^*}_2 \leq R$.
We claim that with high constant probability the list $L$ will include a candidate $\beta$ such that $\norm{\beta - \beta^*}_2 \leq O(\sigma)$ after all but the last iteration with respect to $t$.
This is trivially true if $R = O(\sigma)$.
Otherwise, if $R \gg \sigma$, we use induction to argue that
for all $t = 0, \cdots, \max( \log(c_0R/\sigma), 0) - 1$, 
with probability at least $\lp( 1 - 2 \tau \rp)^t$, 
the list $L$ will contain some candidate regressor $\beta$ such that $\norm{\beta - \beta^*}_2 \leq R \; 2^{-t}$ after the $t$-th iteration.
Note that this implies the above claim for $t = \max( \log(c_0R/\sigma), 0) - 1$, and for all $t$ considered in the inductive hypothesis we have $\sigma \ll R 2^{-t}$.

Conditioned on the existence of such a $\beta$ in the list after the $(t-1)$-th round.
By \Cref{cor:beta-estimate},
when we execute line~\ref{line:sos-learn} in the iteration where $\hat \beta = \beta^{(1)}$, 
we obtain a list of regressors such that with probability at least 
$1 - \tau$ there exists some $\beta^{(2)} \in L_0$ and some small constant $c$ such that 
\begin{align*}
&\normt{ \beta^{(2)} + \beta^{(1)} - \beta } \\
&\leq 
O \lp( \frac{(k \; Q^{1/(2k)} \; \lp( 
R \; 2^{-t+1} + \sigma
\rp) \; \alpha^{-3/k})}{\sqrt n}
\rp)
\leq c \; R \; 2^{-t} 
\, ,    
\end{align*}
where the last inequality is true as long as $n \gg \; k^2 \; \alpha^{-6/k} \; Q^{1/k}$, and $R \; 2^{-t} \gg \sigma$.
The list $L_0$ is now of size $O(\log(1/\tau) / \alpha)$.
Thus, after Line~\ref{line:prune}, by Proposition~\ref{prop:prune}, the list $L$ gets pruned into one with size $O(1/\alpha)$ with probability at least $1 - \tau$.
Moreover, $L$ still contains some candidate $\beta^{(3)}$ such that 
$$
\normt{ \beta^{(3)} - \beta }
\leq O \lp( c \; R \; 2^{-t} + k \alpha^{-1/k} \sigma Q^{1/k} / \sqrt{n} \rp) < R \; 2^{-t} \, ,
$$
as long as $c$ is sufficiently small, $n \gg k^2 \alpha^{-2/k} Q^{2/k}$, and $R 2^{-t} \gg \sigma$.
This concludes the induction.

In the last round, since we have $R \leq O(\sigma)$, with a similar argument, 
we arrive at a list of size $O(1/\alpha)$ that contains some $\beta$ satisfying
\begin{align*}
&\norm{\beta - \beta^*}_2
\leq
O \bigg(  (k\; Q^{1/(2k)} \; \sigma \; \alpha^{-3/k}) /\sqrt n  \\
&+ (k \alpha^{-1/k} \sigma Q^{1/k})/\sqrt{n}
\bigg)
= O \lp( (k/\sqrt n) \; Q^{1/k} \; \sigma \; \alpha^{-3/k}
\rp).    
\end{align*}
It is not hard to see that the total number of samples consumed by the algorithm is at most
\begin{align*}
&O \lp(  \frac{(4kd)^{8k} Q^{-1} }{\alpha}
+ \frac{\log\lp( \log(1/\tau) / \alpha \rp)}{ \alpha^3} \rp) \; \log(1/\tau)~\log(R/\sigma)\\
&= \tilde O \lp( \lp((4kd)^{8k} Q^{-1} / \alpha  +   \alpha^{-3} \rp)
\log \lp(R/\sigma  \rp)  \rp).
\end{align*}
Moreover, the runtime is polynomial in the sample size times $d^{2k}$, 
which is the space complexity required for representing a moment $2k$ tensor.
\end{proof}

\bibliographystyle{alpha}
\bibliography{newrefs}

\appendix
\newpage
\section*{Appendix}  

\paragraph{Organization}
In Appendix~\ref{app:mixed_ll} we discuss work on the problem of mixed linear regression, which is very related to the setting we consider here. 
In Appendix~\ref{app:sos_facts} we state some basic SoS facts. Then, in Appendix~\ref{sec:certifiably_bounded}, we adapt proofs from \cite{DKKPP22} to show that if the original distribution has certifiably bounded moments, then the uniform distribution over a sufficiently large sample also has certifiably bounded moments. We then use this to prove Lemma~\ref{lem:sos-moment-bound}.
In Appendix~\ref{app:pruning} we prove the lemmas required to get our pruning guarantee in Subsection~\ref{subsec:pruning}. 
Finally, in Appendix~\ref{app:reduction} we present a simple reduction of our problem to the problem of list-decodable linear regression in the non-batch setting. 

\section{Related Work on Mixed Linear Regression}
\label{app:mixed_ll}
The mixed linear regression setting is when the data is generated by a mixture of $t$ distributions $D_1, \dots, D_t$, each on $\R^{d} \times \R$ such that $(X, y) \sim D_i$ is equivalent to $y = \beta_i^T X + \xi$ for $X \sim \cN(0, I)$ and $\xi \sim \cN(0, \sigma^2)$ \cite{Deveaux89,JorJac94}. We refer the reader to Section 1.2 of ~\cite{CheLS20} for a detailed summary of prior work for this problem. In the non-batch setting, this problem suffers from an exponential dependence on $t$. This is inherent in moment-based approaches, as shown in~\cite{CheLS20}. The most efficient algorithm for the problem is due to~\cite{DiaKan20-small-cover} which runs in time and needs samples quasi-polynomial in $t$. More recently,~\cite{DK24-implicit} gave a fully polynomial-time learner, for positive constant $\sigma$, that performs density estimation in total variation distance.

In the batch setting, this was first studied for covariates drawn from $\cN(0, I)$ by \cite{KonSKO20,kong2020meta}.
Here, all the samples from each batch belong to a single component. 
\cite{kong2020meta} design an algorithm that requires $O(d)$ batches of size $O(\sqrt{t})$ to solve the problem efficiently (including in terms of the parameter $t$). Subsequently, \cite{KonSKO20} uses the sum-of-squares hierarchy to design a class of algorithms that can trade between the batch size and sample complexity while being robust to a small fraction of outliers.\footnote{This is similar in flavor to what we do in this paper but for the much harder problem of list-decodable linear regression.} Finally, \cite{jain2023linear} greatly generalize the scope by designing an algorithm that can recover the regressors for all components such that at least an $\alpha$ fraction of the batches satisfy a linear-regression model with variance in the noise bounded by $\sigma^2$. Their algorithm works even when the covariates for each component are different, varying, and heavy-tailed. They do this by allowing for batches of nonuniform size. They require $\tilde O(d/\alpha^2)$ batches of size $\geq 2$ and  $\tilde\Omega\min(\sqrt{t},1/\sqrt{\alpha})/\alpha$ batches of size $\tilde\Omega\min(\sqrt{t},1/\sqrt{\alpha})$.  This is very close to the list-decodable setting we study in this paper; however, we do not allow for nonuniform batch sizes. Even so, our algorithm can improve the batch size required by a constant power of the algorithm designer's choice in the exponent.

\section{Further Background on SoS Proofs and Moment Bounds}
\label{app:sos_facts}
It is a standard fact that several commonly used inequalities like the triangle inequality, Cauchy-Schwartz, or AM-GM inequalities have an SoS version.

\begin{fact}[SoS Cauchy-Schwartz and H\"older (see, e.g., \cite{hopkins2018clustering})]\label{fact:sos-holder}
	Let $f_1,g_1,  \ldots, f_n, g_n$ be indeterminates.
	Then, 
	\begin{align*}
	\sststile{2}{f_1, \ldots, f_n,g_1, \ldots, g_n} \Set{ \Paren{\frac{1}{n} \sum_{i=1}^n f_i g_i }^{2} \leq \Paren{\frac{1}{n} \sum_{i=1}^n f_i^2} \Paren{\frac{1}{n} \sum_{i=1}^n g_i^2} } \;.
	\end{align*} 
\end{fact}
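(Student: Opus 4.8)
The plan is to verify the inequality by exhibiting an explicit sum-of-squares certificate and checking that it conforms to the template of Definition~\ref{def:sos-proof}. Since the fact carries no hypotheses (the axiom set $\mathcal A$ is empty, so only terms of the form $\sum_i r_i^2$ are available), it suffices to write the polynomial
\[
D \;\eqdef\; \Paren{\frac{1}{n}\sum_{i=1}^n f_i^2}\Paren{\frac{1}{n}\sum_{j=1}^n g_j^2} \;-\; \Paren{\frac{1}{n}\sum_{i=1}^n f_i g_i}^2
\]
as a nonnegative combination of squares of polynomials in the indeterminates $f_1,\dots,f_n,g_1,\dots,g_n$ and to record the degrees of those polynomials.

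First I would expand the product and the square over index pairs, obtaining $D = \frac{1}{n^2}\sum_{i,j}\bigl(f_i^2 g_j^2 - f_i g_i f_j g_j\bigr)$. The one step that is not purely mechanical is to symmetrize the summand under the transposition $i \leftrightarrow j$: replacing the $(i,j)$-summand by the average of the $(i,j)$- and $(j,i)$-summands turns it into $\tfrac12\bigl(f_i^2 g_j^2 + f_j^2 g_i^2 - 2 f_i f_j g_i g_j\bigr) = \tfrac12\bigl(f_i g_j - f_j g_i\bigr)^2$, so that
\[
D \;=\; \frac{1}{2 n^2}\sum_{i=1}^n\sum_{j=1}^n \bigl(f_i g_j - f_j g_i\bigr)^2 ,
\]
which is manifestly a sum of squares. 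Taking the family $\{r_{ij}\}_{i,j\in[n]}$ with $r_{ij} = \tfrac{1}{n\sqrt2}\bigl(f_i g_j - f_j g_i\bigr)$ as the list of polynomials in Definition~\ref{def:sos-proof}, with no axiom-multiplier terms, gives a valid SoS proof of $D \ge 0$; each $r_{ij}$ has degree $2$ (so each $r_{ij}^2$ has degree $4$). Rereading the identity as $D = \sum_{i,j} r_{ij}^2 \ge 0$ is precisely the asserted derivation $\sststile{2}{f,g}\,\Set{\bigl(\tfrac1n\sum_i f_i g_i\bigr)^2 \le \bigl(\tfrac1n\sum_i f_i^2\bigr)\bigl(\tfrac1n\sum_j g_j^2\bigr)}$.

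The H\"older-type strengthening named in the fact follows in the same spirit: with $A \eqdef \tfrac1n\sum_i f_i^2$, $B \eqdef \tfrac1n\sum_j g_j^2$, $C \eqdef \tfrac1n\sum_i f_i g_i$, the polynomial identity $A^t B^t - C^{2t} = (AB - C^2)\sum_{\ell=0}^{t-1} A^{t-1-\ell} B^{t-1-\ell} C^{2\ell}$ together with the fact that products of SoS polynomials are SoS (here $AB-C^2$ is SoS by the above, and $A$, $B$, $C^2$ are squares) yields a degree-$2t$ certificate for $C^{2t} \le A^t B^t$; the displayed inequality is the case $t = 1$. I do not expect a genuine obstacle: the argument is short and classical, and the single point deserving care is the symmetrization identity, since in raw form the summand $f_i^2 g_j^2 - f_i g_i f_j g_j$ is not itself a square and the SoS structure is invisible — it surfaces only after averaging the $(i,j)$ and $(j,i)$ contributions, producing the Gram-type square $(f_i g_j - f_j g_i)^2$. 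A minor secondary point is simply matching the degree convention in force for SoS proofs, which is why I record the degrees of both $r_{ij}$ and $r_{ij}^2$.
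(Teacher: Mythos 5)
Your certificate is correct: the Lagrange identity
\[
\Paren{\tfrac{1}{n}\littlesum_{i} f_i^2}\Paren{\tfrac{1}{n}\littlesum_{j} g_j^2}-\Paren{\tfrac{1}{n}\littlesum_{i} f_i g_i}^{2}=\frac{1}{2n^2}\sum_{i=1}^n\sum_{j=1}^n\bigl(f_ig_j-f_jg_i\bigr)^2
\]
is the standard SoS proof of this fact, and the symmetrization step you single out is indeed the only non-mechanical move. The paper itself offers no proof — it cites the fact from the literature — so there is nothing to compare against beyond noting that your argument is the canonical one. One small bookkeeping remark: under the paper's own \Cref{def:sos-proof}, the degree of a proof is the maximum degree of the squares $r_{ij}^2$, which here is $4$ (consistent with the inequality itself being between degree-$4$ polynomials), whereas the fact is stated with the annotation $2$; this mismatch is a quirk of the statement's degree convention rather than a defect in your certificate, and you correctly record both degrees. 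Your closing observation extending the identity to a degree-$2t$ H\"older certificate is also sound, though it is not needed for the displayed claim.
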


\begin{fact}[SoS Triangle Inequality]\label{fact:sos-triangle}
	If $k$ is an even integer, 
	$\sststile{k}{a_1, a_2, \ldots, a_n} \Set{ \left(\sum_{i=1}^n a_i \right)^k \leq n^k \Paren{\sum_{i=1}^n a_i^k} }.$ 
\end{fact}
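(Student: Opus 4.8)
The plan is to prove the stronger statement $\sststile{k}{a_1,\ldots,a_n} (\sum_{i=1}^n a_i)^k\le n^{k-1}\sum_{i=1}^n a_i^k$, which yields the inequality as stated because $n^{k-1}\le n^k$ and $(n^k-n^{k-1})\sum_{i=1}^n a_i^k$ is a sum of squares (here $n\ge1$ is a fixed integer and $k$ is even), so adding it to both sides preserves the proof. To get the stronger bound, expand $(\sum_{i=1}^n a_i)^k=\sum_{(i_1,\ldots,i_k)\in[n]^k}a_{i_1}a_{i_2}\cdots a_{i_k}$, a sum of exactly $n^k$ monomials of degree $k$, and bound each monomial using the SoS AM--GM inequality $a_{i_1}\cdots a_{i_k}\le\frac1k\Paren{a_{i_1}^k+\cdots+a_{i_k}^k}$, valid for even $k$ (discussed next). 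Summing over all tuples, a fixed power $a_i^k$ is produced once for each position $j\in[k]$ and each tuple with $i_j=i$, hence with total multiplicity $k\,n^{k-1}$; the right-hand side therefore collapses to $\frac1k\cdot k\,n^{k-1}\sum_i a_i^k=n^{k-1}\sum_i a_i^k$. Every manipulation has degree at most $k$, so this is a degree-$k$ SoS proof.

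It remains to give the degree-$k$ SoS proof of $x_1x_2\cdots x_k\le\frac1k\Paren{x_1^k+\cdots+x_k^k}$ for even $k$. For $k=2$ it is $\frac12(x_1-x_2)^2\ge0$. For general even $k$, split the $k$ factors into two halves and apply the SoS step $x_1\cdots x_k\le\frac12\Paren{(x_1\cdots x_{k/2})^2+(x_{k/2+1}\cdots x_k)^2}$ (i.e.\ $\frac12(u-v)^2\ge0$ with $u=x_1\cdots x_{k/2}$, $v=x_{k/2+1}\cdots x_k$), which reduces the task to bounding the product of $k/2$ perfect squares $x_1^2\cdots x_{k/2}^2$ by $\frac1{k/2}\big((x_1^2)^{k/2}+\cdots+(x_{k/2}^2)^{k/2}\big)$. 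Iterating this halving, after $t$ rounds (where $k=2^t r$ with $r$ odd) one is left with AM--GM of exponent $r$ among $r$ non-negative quantities, each a perfect square in the $a_i$; this has a classical SoS proof of the appropriate degree (cf.\ \cite{BarakSteurerNotes}). When $k$ is a power of two, $r=1$ and this last input is vacuous; equivalently, for $k=2^t$ one can just iterate the SoS Cauchy--Schwarz $(\sum_i a_i)^2\le n\sum_i a_i^2$ (Fact~\ref{fact:sos-holder}, or directly $n\sum_i a_i^2-(\sum_i a_i)^2=\sum_{i<j}(a_i-a_j)^2$) together with the fact that $0\le X\le Y$ with $X,Y$ sums of squares implies $X^m\le Y^m$, via $Y^m-X^m=(Y-X)\sum_{\ell=0}^{m-1}Y^\ell X^{m-1-\ell}$.

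The main (and essentially only) non-routine point is the AM--GM step when the number of factors is not a power of two: the textbook proof, which pads the list with the geometric mean of the terms, is non-polynomial and does not translate into SoS. I would instead invoke the classical fact that AM--GM for $r$ non-negative reals, $d_1\cdots d_r\le\frac1r\sum_{j}d_j^r$, has a degree-$r$ certificate built from a Muirhead-type chain of elementary moves, each of which reduces to an inequality of the shape $(d_a-d_b)^2\Paren{d_a^{c}+d_a^{c-1}d_b+\cdots+d_b^{c}}\ge0$ --- a square times a polynomial with non-negative coefficients --- which becomes an honest sum of squares once each $d_j$ is substituted by the relevant perfect square in the $a_i$. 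All remaining steps (the monomial expansion, the multiplicity count, and the passage from $n^{k-1}$ to $n^k$) are bookkeeping, and all SoS manipulations stay within degree $k$.
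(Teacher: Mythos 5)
Your proof is correct. The paper states this fact without proof (it is listed among the "standard" SoS facts in Appendix~\ref{app:sos_facts}), so there is no in-paper argument to compare against; what you give is the standard derivation of the SoS power-mean/triangle inequality, and it even yields the sharper constant $n^{k-1}$ in place of $n^k$. The expansion into $n^k$ monomials, the termwise application of $a_{i_1}\cdots a_{i_k}\le \frac{1}{k}\sum_j a_{i_j}^k$, and the multiplicity count $k\,n^{k-1}$ are all right, and the final padding by $(n^k-n^{k-1})\sum_i a_i^k$ is legitimately a sum of squares since $k$ is even. The one observation worth making is that your second and third paragraphs reconstruct, somewhat informally, exactly the content of the paper's Fact~\ref{fact:sos-am-gm} (SoS AM--GM, with all weights $w_i=1$ and $n=k$ factors), which the paper takes as given with a citation to Hardy--Littlewood--P\'olya; the "Muirhead-type chain" you allude to is Hurwitz's classical identity, and since each elementary move is a square times a polynomial with non-negative coefficients evaluated at perfect squares, it does produce a genuine degree-$k$ SoS certificate. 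So the argument is sound as written, but you could replace the entire halving-plus-Hurwitz discussion with a one-line invocation of Fact~\ref{fact:sos-am-gm} and lose nothing.
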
 

\begin{fact}[SoS AM-GM Inequality, see, e.g., Chapter 2 of \cite{hardy1952inequalities}] \label{fact:sos-am-gm}
Let $k$ be an even integer, and $\{ w_i \}_{i=1}^n$ be integers such that $\sum_{i=1}^n w_i = k$. Then it holds that
$$
\sststile{k}{ x_1. \ldots, x_n } 
\lp \{ \prod_i x_i^{w_i} \leq \sum_{i=1}^n \frac{w_i}{k} x_i^k \rp\}.
$$
\end{fact}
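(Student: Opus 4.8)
The plan is to reduce the weighted inequality to an unweighted one on fresh indeterminates, and then to prove the unweighted version by a sign‑correcting split (crucially using that $k$ is even) together with a recursive lemma establishing a ``generalized'' AM--GM for sums of squares. For the \emph{reduction}, it suffices to prove $\sststile{k}{}\{\, y_1 y_2\cdots y_k\le \tfrac1k\sum_{j=1}^k y_j^k \,\}$ on $k$ fresh indeterminates $y_1,\dots,y_k$: SoS proofs are closed under substitution of indeterminates by polynomials, so substituting the first $w_1$ of the $y_j$ by $x_1$, the next $w_2$ by $x_2$, and so on (a degree‑preserving substitution by degree‑$1$ polynomials) turns this into a degree‑$k$ SoS proof of $\prod_i x_i^{w_i}\le \sum_i \tfrac{w_i}{k}x_i^k$.

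For the unweighted case, the obstruction is that $y_1\cdots y_k$ may be negative while $\tfrac1k\sum_j y_j^k$ is a nonnegative combination of the squares $y_j^k=(y_j^{k/2})^2$. I would fix this using $k$ even: write $y_1\cdots y_k=AB$ with $A=\prod_{j\le k/2}y_j$ and $B=\prod_{j>k/2}y_j$, and use the identity $\tfrac12(A^2+B^2)-AB=\tfrac12(A-B)^2$ (an SoS proof of degree $k$). Now $A^2=\prod_{j\le k/2}y_j^2$ and $B^2=\prod_{j>k/2}y_j^2$ are products of squares of monomials, so the lemma below, applied to the $k/2$ squares $y_j^2$ with target exponent $k/2$, gives $A^2\le \tfrac2k\sum_{j\le k/2}y_j^k$ and similarly for $B^2$; adding these and halving yields $y_1\cdots y_k\le \tfrac1k\sum_j y_j^k$ in degree $k$.

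The remaining ingredient is the \emph{lemma}: for every positive integer $m$ and polynomials $b_1,\dots,b_m$ each of which is a sum of squares, $\prod_{i=1}^m b_i\le \tfrac1m\sum_{i=1}^m b_i^m$ has an SoS proof of degree $m\cdot\max_i\deg b_i$ (in our application each $b_i$ is literally the square of a monomial, so every nonnegativity used is automatically SoS‑certified, and the invariant ``$b_i$ is a monomial squared'' is preserved throughout the recursion). I would prove this by strong induction on $m$. The case $m=1$ is trivial. If $m$ is even, split $\prod_{i=1}^m b_i=PQ$ into two blocks of $m/2$ factors, use $PQ\le\tfrac12(P^2+Q^2)$, and apply the inductive hypothesis to the $m/2$ squares $b_i^2$ (target exponent $m/2$) to get $P^2=\prod b_i^2\le \tfrac2m\sum b_i^m$ and likewise for $Q^2$; this yields $\prod_i b_i\le \tfrac1m\sum_i b_i^m$. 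If $m$ is odd, apply the case $m-1$ (which is even) to $b_1,\dots,b_{m-1}$, multiply the resulting SoS proof by the square $b_m$ (legitimate since $b_m$ is a sum of squares), and then bound each resulting term by the two‑term weighted AM--GM $b_m b_i^{m-1}\le \tfrac1m b_m^m+\tfrac{m-1}{m}b_i^m$ for nonnegative $b_m,b_i$ --- an SoS inequality because, writing $b_m=u^2$, $b_i=v^2$, its left‑minus‑right side is a nonnegative binary form, hence a sum of squares by Hilbert's theorem. A short computation shows that summing these bounds collapses the coefficient of each $b_i^m$ to exactly $\tfrac1m$. Each step strictly decreases $m$ (for $m\ge2$), and tracking how the degree scales under $b_i\mapsto b_i^2$ shows the final degree is exactly $m\cdot\max_i\deg b_i$, which is $k$ when invoked in the sign‑correction step.

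The main obstacle is the sign handling: since $\prod_i x_i^{w_i}$ (equivalently $y_1\cdots y_k$) can be negative, one cannot invoke ``AM--GM for nonnegative reals'' directly, which is precisely why the argument keeps passing to squares and why ``$k$ even'' is used twice --- in the reduction to the unweighted form and in the split $y_1\cdots y_k=AB$. Secondary points are bookkeeping the degree so that it lands at exactly $k$ (rather than something larger) and supplying the two‑term weighted AM--GM base fact; both are routine once the induction is organized as above.
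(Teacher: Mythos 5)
Your proposal is correct, but there is nothing in the paper to compare it against: the paper states this fact with only a citation to Chapter 2 of Hardy--Littlewood--P\'olya and supplies no SoS proof of its own. Your argument is a legitimate self-contained certificate. The structure mirrors the classical treatment of weighted AM--GM with rational weights (reduce to the unweighted case by replicating variables, then handle the unweighted case by a halving induction in the style of Cauchy's forward--backward argument), and your added value is verifying that each step is SoS-certifiable and that the degree lands at exactly $k$: the substitution $y_j \mapsto x_i$ preserves SoS proofs and degree; the sign issue for the possibly negative monomial $\prod_i x_i^{w_i}$ is correctly absorbed by the split $AB \le \tfrac12(A^2+B^2)$, which is where evenness of $k$ enters; the recursive lemma for products of monomial squares is sound, the coefficient bookkeeping in the odd case does collapse to $\tfrac1m\sum_i b_i^m$, and the degree count $m\cdot\max_i\deg b_i$ checks out (in the two-term step, a degree-$m$ binary form in $(u,v)$ with $\deg u=\deg v=D/2$ becomes a polynomial of degree $mD/2$, whose square has degree $mD$). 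The only stylistic heaviness is invoking Hilbert's theorem on nonnegative binary forms for the two-term weighted AM--GM $u^2v^{2m-2}\le \tfrac1m u^{2m}+\tfrac{m-1}{m}v^{2m}$; this is correct, though an explicit decomposition (e.g., via $\tfrac1m a^m + \tfrac{m-1}{m}b^m - ab^{m-1} = \tfrac1m(a-b)^2\sum_{j=0}^{m-2}(m-1-j)\,a^{j}b^{m-2-j}$ with $a=u^2$, $b=v^2$, each summand a product of squares) would make the certificate fully explicit.
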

Using these inequalities, we can construct SoS proofs for bounds of moments of sum of \iid random variables with SoS certifiably bounded moments. The non-sos version of the inequality is commonly known as the Marcinkiewicz-Zygmund inequality.
\begin{proof}[Proof of \Cref{cor:sos-iid-moment}]
For notational convenience, we define $y_i = p( v, X_i ) - \E[ p(v, X_i) ]$.  
Note that each $y_i$ is a degree-$t$ polynomial in $v$ and $X_i$.
If we expand $\left( \sum_{i=1}^n y_i \right)^k$, we get $n^k$ many monomials of the form $ \prod_{j=1}^k y_{ \sigma_j } $ for some $\sigma_j \in [n]^k$.
If the degree of some $y_i$ is $1$, the expected value of that monomial will be $0$ since 
$\E[ y_i ] = \E[ p( v, X_i ) - \E[ p(v, X_i) ] ] = 0 $.
Hence, $ \E [ \prod_{j=1}^k y_{ \sigma_j } ]$ is non-zero only if the number of variables appeared is at most $k/2$ since otherwise some $y_i$ must have degree-$1$ by the pigeonhole principle.
By a simple counting argument, we have that the number of monomials with non-zero expectations is then at most $ { n \choose (k/2) } \; k^{k/2}.$
Let $ \prod_{i=1}^n y_i^{w_i} $ be one of such monomial with non-zero expectation, where $\sum_{i=1}^n w_i = k$. 
We can bound its expectation from above by
$$
\constrain \sststile{v}{2kt}
\E\lp[ 
\prod_{i=1}^n y_i^{w_i}
\rp]
\leq
\sum_{i=1}^n
\frac{w_i}{k}
\E\lp[ 
y_i^{k}
\rp]
\leq M \, ,
$$
where the first inequality is by \Cref{fact:sos-am-gm}, and the second inequality is by our assumption that 
$ \constrain \sststile{v}{2kt} \E [y_i^k]
= \E\lp[ \lp( p( v, X_i ) - \E[ p(v, X_i) ] \rp)^k  \rp] \leq M$.
Since there are at most $n^{k/2}  \; k^{k/2} $ such monomials with non-zero expectation, 
it then follows that 
$$
\constrain \sststile{v}{2kt}
\E \lp[  \lp(\sum_{i=1}^n y_i \rp)^k  \rp]
\leq (kn)^{k/2} \; M.
$$
\end{proof}

\newpage

\section{Algorithm Pseudocode}

\begin{algorithm}[]
\caption{Batch-List-Decode-LRegression} 
\label{alg:batch-rl}
\begin{algorithmic}[1]
\State \textbf{Input}: Batch sample access to the linear regression instance, and $\alpha, \sigma, R, k$ as specified in \Cref{thm:main}.
\State Initialize $L = \{ 0 \}$.
\State Set failure probability $\tau = 0.001/\log(R/\sigma)$.
\State Let $c_0$ be some sufficiently small constant and $C$ be some sufficiently large constant.

\For{$t = 0, \cdots, \max(\log( c_0 R/\sigma ), 0)$}
    \State Initialize $L_0 = \{0\}$
    \For{candidate regressor $\hat \beta \in L$}
        \For{$r = 0, \cdots, \log(1 / \tau)$}
            \State Take a batch of $C \; (2dk)^{10k} / \alpha$ samples $T$.
            \State For each $(X,y)$, compute $(X,y {-} {\hat \beta}^{\top} X)$. 
            \State Denote the new set of batches as $T'$.
            \State Learn a list $L_1$ of regressors by running the algorithm from \Cref{cor:beta-estimate} on $T'$. \label{line:sos-learn}
            \State Add the candidate regressors $\{\hat \beta' + \hat \beta \mid \hat \beta' \in L_1\}$ into $L_0$.
        \EndFor
    \EndFor
    \State Set $L = L_0$.
    \State Draw $ C \; \min \lp(\log( \log(1/\tau) / \alpha ), d^2 \rp) \log(1/\tau) \alpha^{-3}$ batch of samples $T'$.
    \State Run algorithm from \Cref{prop:prune} on $T'$ to prune the list $L$ with failure probability $\tau$. \label{line:prune}    
\EndFor

\State \textbf{return} $L$.
\end{algorithmic}
\end{algorithm}

\section{Certifiably Bounded Moments of the Regressor Estimator}
\label{sec:certifiably_bounded}
In this subsection, we give the proof of \Cref{lem:pop-to-empirical}. 
We first give several preliminary lemmas regarding the concentration properties of empirical higher order moment tensors of distribution with bounded central moments.
The proof is similar, for example, to Lemma A.4 from \cite{DiaKKPP22-neurips}.

\begin{lemma}\label{lem:basic_linf_consc-full}
Let $D$ be a distribution over $\R^d$ with mean $\mu$ and $t \in \mathbb Z_+$. 
Suppose that $D$ has its covariance bounded from above by $\kappa I$, and its degree-$2t$ central moments bounded by $F$,
i.e., $ \E_{ X \sim D } \lp[ \lp| \vec v^T  (X - \mu) \rp|^{2t} \rp] \leq F $.
Let $X_1, \dots, X_m$ be $m$ i.i.d.\ samples from $D$.
The following inequalities hold with high constant probability.
\begin{align*}
\left \| \E_{i \sim [m]}[(X_i - \mu)^{\otimes t}] -  \E_{X \sim D}[(X - \mu)^{\otimes t}] \right \|_{\infty} \leq 
O \lp( d^{t} \sqrt{t F / m } \rp).
\end{align*}	
Define $\overline \mu = \frac{1}{m} \sum_{i=1}^m X_i$.
We also have that
\begin{align*}
    \| \mu - \overline{\mu} \|_2
    \leq  O\lp( \sqrt{ \kappa d / m} \rp).
\end{align*}
\end{lemma}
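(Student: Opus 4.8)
The plan is to establish each of the two inequalities by a separate moment/concentration argument, both of which ultimately reduce to controlling the variance of an empirical average and applying Chebyshev's inequality (plus a union bound for the first, entrywise statement).

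\textbf{First inequality (entrywise tensor concentration).} Write $Y_i = (X_i - \mu)^{\otimes t}$, so $\E_{i\sim[m]}[Y_i]$ is an average of $m$ i.i.d.\ copies of the random tensor $Y = (X-\mu)^{\otimes t}$. Fix a multi-index $(j_1,\dots,j_t) \in [d]^t$; the corresponding entry of the average is $\frac1m\sum_i \prod_{\ell=1}^t (X_i - \mu)_{j_\ell}$, whose expectation is the corresponding entry of $\E_{X\sim D}[(X-\mu)^{\otimes t}]$. To bound its fluctuation, I would compute the variance of a single coordinate $\prod_{\ell=1}^t (X-\mu)_{j_\ell}$: by the generalized AM-GM (or power-mean) inequality, $\big(\prod_{\ell} |(X-\mu)_{j_\ell}|\big)^2 \le \frac1t \sum_\ell |(X-\mu)_{j_\ell}|^{2t}$, and each $|(X-\mu)_{j_\ell}|^{2t} = |e_{j_\ell}^\top(X-\mu)|^{2t}$ has expectation at most $F$ by the degree-$2t$ central moment bound. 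Hence the second moment of each coordinate of $Y$ is $O(F)$ — actually $\le F$ after averaging over $\ell$ — so the variance of the empirical average of that coordinate is $O(F/m)$. Chebyshev then gives that this coordinate deviates by more than $\lambda\sqrt{F/m}$ with probability $O(1/\lambda^2)$. Since there are $d^t$ coordinates, taking $\lambda = \Theta(d^{t/2}\sqrt t)$ and union-bounding makes the total failure probability a small constant, yielding the claimed $O(d^t\sqrt{tF/m})$ bound on the $\ell_\infty$ norm. (The exact constants and the precise way the $\sqrt t$ enters are routine; the displayed exponent $d^t$ comes from the union bound over $d^t$ entries with Chebyshev giving square-root-of-count savings.)

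\textbf{Second inequality (mean concentration).} Here $\overline\mu - \mu = \frac1m\sum_i (X_i - \mu)$ is an average of i.i.d.\ mean-zero vectors with covariance $\preceq \kappa I$. Then $\E\|\overline\mu - \mu\|_2^2 = \frac1m \tr(\cov(X)) \le \kappa d/m$, and Markov's inequality on the scalar $\|\overline\mu-\mu\|_2^2$ gives $\|\overline\mu - \mu\|_2 = O(\sqrt{\kappa d/m})$ with high constant probability. One can then combine the two events by a union bound, adjusting constants so both hold simultaneously with high constant probability.

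\textbf{Main obstacle.} The only delicate point is bookkeeping the dependence on $t$ and $d$ in the first bound so that it matches the stated $O(d^t\sqrt{tF/m})$: one must be careful that the per-coordinate variance is controlled by $F$ (not by something growing in $d$) via the AM-GM reduction to a single directional moment, and that the union bound over $d^t$ coordinates costs only a $d^{t/2}$ factor (from Chebyshev), not $d^t$. Everything else is a direct application of Chebyshev/Markov to i.i.d.\ averages.
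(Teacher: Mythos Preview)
Your proposal is correct and follows essentially the same route as the paper: fix a single entry of the tensor, bound the second moment of the corresponding degree-$t$ monomial in $X-\mu$ by reducing to the directional $2t$-th moment assumption, apply Chebyshev, and union bound over the $d^t$ entries; for the mean, bound $\E\|\overline\mu-\mu\|_2^2\le \kappa d/m$ and apply Markov. The only cosmetic differences are that the paper bounds the monomial's second moment via $\prod_\ell |(X-\mu)_{j_\ell}|^2 \le \max_\ell |(X-\mu)_{j_\ell}|^{2t} \le \sum_\ell |(X-\mu)_{j_\ell}|^{2t}$ (yielding $O(tF)$) whereas you use AM--GM to get $\le F$, and you correctly observe that the Chebyshev-plus-union-bound really only costs a $d^{t/2}$ factor, so your argument in fact gives the stronger bound $O(d^{t/2}\sqrt{F/m})$, which of course implies the stated $O(d^t\sqrt{tF/m})$.
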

\begin{proof}
Note that each entry within the tensor $\E_{X \sim D}\lp[(X - \mu)^{\otimes t}\rp]$ is 
of the form $\E_{X \sim D} \lp[ T( x - \mu ) \rp]$, where $T:\R^d \mapsto \R$ is some monomial of degree $t$.
Fix some degree $t$ monomial $T:\R^d \mapsto \R$, and consider the random variable
$Y = T( X - \mu )$, where $X \sim D$. 
The corresponding entry within the tensor $\E_{i \sim [m]}\lp[(X_i - \mu)^{\otimes t}\rp]$
has the same distribution as the average of $m$ \iid copies of $Y$.
We will bound from above the variance of $Y$.
We will need the following claim regarding expectations of monomials.
\begin{claim}
\label{clm:monomial-to-central}
Let $t \in Z_+$ be an even integer.
Suppose the distribution $D$ has its $t$-th central moments bounded from above by $M$.
Let $T: \R^d \mapsto \R$ be a monomial of degree $t$.
Then it holds
$$
\E_{ X \sim D } \lp[ T(X - \mu) \rp]
\leq t M.
$$
\end{claim}
\begin{proof}
Suppose $T(X - \mu) = \prod_{i=1}^d (X_i - \mu_i)^{s_i}$, where $\sum_{i=1}^d s_i = t$.
Then  we have
\begin{align*}
\E_{X \sim D} \lp[ T(X - \mu) \rp]
&\leq 
\E_{X \sim D} \lp[ \lp( \max_{i \in [d]: s_i > 0} \lp| X_i - \mu_i \rp|  \rp)^t \rp]\\
&= 
\E_{X \sim D} \lp[ \max_{i \in [d]: s_i > 0} \lp(  \lp| X_i - \mu_i \rp|^t\rp) \rp]\\
&\leq \sum_{i\in[d]: s_i > 0 } 
\E_{X \sim D} \lp[   \lp( X_i - \mu_i \rp)^{t} \rp]
\leq  t M \, ,
\end{align*}
where in the first inequality we bound 
$(X_i - \mu_i)$ from above by $\max_{i \in [d]: s_i > 0} \lp| X_i - \mu_i \rp|$,
 in the second inequality we bound the maximum of a set of non-negative numbers by their sum,
 and in the last inequality we use the fact that there are at most $t$ non-zero $s_i$'s, and that 
$D$ has its $t$-th central moments bounded from above by $M$.
This concludes the proof of \Cref{clm:monomial-to-central}.
\end{proof}
We can therefore bound from above the variance of $Y$ by
\begin{align*}
\Var[Y] \leq \E[Y^2]
= \E[ T^2(X - \mu) ]
\leq O \lp( t  F \rp) \, ,
\end{align*}
where in the last inequality we note that $T^2$ is a degree $2t$ monomial, and thus we can apply \Cref{clm:monomial-to-central}.
Hence, by Chebyshev's inequality, we have that
\begin{align*}
\abs{\E_{i \sim [m]}\lp[ T(X_i - \mu)\rp]
- 
\E_{X \sim D}\lp[ T(X - \mu)\rp]}
\leq O \lp( d^{t} \sqrt{t F / m } \rp) \, ,
\end{align*}
with probability at least $1 - o\lp(d^{-t}\rp)$.
It then follows from the union bound that
\begin{align}
\label{eq:moment-concentration}
\left \| \E_{i \sim [m]}[(X_i - \overline{\mu})^{\otimes t}] -  \E_{X \sim D}[(X - \mu)^{\otimes t}] 
\right \|_{\infty} \leq O \lp( d^{t} \sqrt{t F / m } \rp).
\end{align}
with high constant probability.
Lastly, we bound from above $\normt{ \mu - \bar \mu }$.
Since $D$ has its covariance bouned from above by $\kappa I$, it holds that the random vector $\mu - \bar \mu$ has mean $0$ and covariance bounded from above by $\kappa/m I$. Hence, the expected squared $\ell_2$ norm of the vector is at most $ \kappa d / m $. It then follows from Markov's inequality that $    \| \mu - \overline{\mu} \|_2
    \leq  O \lp( \sqrt{ \kappa d / m} \rp)$  holds with high constant probability.
This concludes the proof of \Cref{lem:basic_linf_consc-full}.
\end{proof}
\hide{
To show Lemma~\ref{lem:sos-moment-bound} we will need the following lemmas. The first is just concentration of tensors for moment bounded distributions (as shown in Lemma 3.5 of~\cite{DKKPP22}). 
\begin{lemma}
[Lemma 3.5 of ~\cite{DKKPP22}]
\label{lem:basic_linf_consc-full}
	Let $D$ be a distribution over $\R^d$ with mean $\mu$. 
 Suppose that for all $s \in [1,\infty)$, $D$ has its $s^{th}$ moment bounded by $(f(s))^s$ for some non-decreasing function $f:[1,\infty) \to \R_+$, in the direction $e_j$, i.e., suppose that for all $j \in [d]$ and $X \sim D$:
$ \E_{ X \sim D } \lp[ \lp|(X-\mu)_j\rp|^s  \rp]
 \leq \lp( f(s) \rp)^s$.
	Let $X_1, \dots, X_m$ be $m$ i.i.d.\ samples from $D$ and define $\overline{\mu}:= \sum_{i=1}^m X_i$. 
 \lnote{If we don't have sub-exponential assumption, I don't think we should have logs.}
 The following are true:
	\begin{enumerate}
	\item If $m \geq \max\left(\frac{1}{\delta^2}, 1\right) C  \left( t \log (d/\gamma)\right)\left(2f(t^2\log(d/\gamma))\right)^{2t} \max\left( 1, \frac{1}{f(t)^{2t}}  \right)$, then with probability $1 -\gamma$, we have that 
\begin{align*}
\left \| \E_{i \sim [m]}[(X_i - \overline{\mu})^{\otimes t}] -  \E_{X \sim D}[(X - \mu)^{\otimes t}] \right \|_{\infty} \leq \delta \;.
\end{align*}

	\item {If
		$m  > C (k/ \delta^2)  \log(d/\gamma) ( f(\log(d/\gamma))  )^2 $,
		 $\left\| \overline{\mu}- \mu \right\|_{2,k} \leq \delta$ with probability at least $1 - \gamma$.
}
		\end{enumerate}
\end{lemma}
}
The next lemma provides a sum of square proof that bounds from above the square of a polynomial in terms of its coefficients. 
\begin{lemma}\label{claim:upper-bound}
Let $p(v) = v^{\otimes t} A v^{\otimes t}  $ for some $d^t \times d^t$ matrix $A$ with $\|A\|_{\infty} \leq a$.  
Then
\[ \sststile{2t}{v} p(v) \leq   
a d^t \|v\|_2^{2t}. 
\]
\end{lemma}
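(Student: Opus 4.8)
The plan is to expand $p(v) = v^{\otimes t} A v^{\otimes t}$ entry-wise and bound each monomial contribution using the SoS AM-GM inequality (\Cref{fact:sos-am-gm}). Writing $A = (A_{\alpha\beta})$ where $\alpha,\beta$ range over multi-indices in $[d]^t$, we have $p(v) = \sum_{\alpha,\beta} A_{\alpha\beta}\, v^\alpha v^\beta$, where $v^\alpha = \prod_{j} v_{\alpha_j}$ is a degree-$t$ monomial. Since $|A_{\alpha\beta}| \le a$, the key is to show that for each pair $(\alpha,\beta)$ the term $A_{\alpha\beta} v^\alpha v^\beta$ admits the SoS bound $\sststile{2t}{v} A_{\alpha\beta} v^\alpha v^\beta \le a \cdot (\text{something summing to } d^t \norm{v}_2^{2t})$.

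First I would observe that $v^\alpha v^\beta$ is a degree-$2t$ monomial in the $d$ variables $v_1,\dots,v_d$, say $v^\alpha v^\beta = \prod_{i=1}^d v_i^{w_i}$ with $\sum_i w_i = 2t$. Applying \Cref{fact:sos-am-gm} with $k = 2t$ gives $\sststile{2t}{v} v^\alpha v^\beta \le \sum_{i=1}^d \frac{w_i}{2t} v_i^{2t}$; since $\frac{w_i}{2t}\le 1$ this is at most $\sum_{i=1}^d v_i^{2t} \le \normt{v}^{2t}$, where the last step uses the SoS fact $\sststile{2t}{v} \sum_i v_i^{2t} \le (\sum_i v_i^2)^t = \normt{v}^{2t}$ (which follows by expanding the nonnegative RHS, or directly from \Cref{fact:sos-triangle}-type reasoning). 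Combining with $-\normt{v}^{2t}\le v^\alpha v^\beta$ (by the same argument applied to $-v^\alpha v^\beta$, or noting both signs are dominated), we get $\sststile{2t}{v} A_{\alpha\beta} v^\alpha v^\beta \le a\, \normt{v}^{2t}$.

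Next I would sum over all $(\alpha,\beta) \in [d]^t \times [d]^t$. There are $d^t \cdot d^t = d^{2t}$ such pairs, which would give a bound of $a\, d^{2t}\normt{v}^{2t}$ — off by a factor $d^t$ from the claim. So the plan must instead group terms more carefully: for each fixed value of the combined monomial $\prod_i v_i^{w_i}$, bound the \emph{sum} of the at most $d^t$ coefficients $A_{\alpha\beta}$ realizing it (there are at most $d^t$ since once $\alpha$ is fixed $\beta$ is determined, and $\alpha$ ranges over $[d]^t$ — actually one should count pairs giving a fixed product) by $a d^t$ in absolute value, then apply AM-GM once per distinct monomial. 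Since the distinct degree-$2t$ monomials, when bounded via AM-GM, contribute terms of the form $\frac{w_i}{2t}v_i^{2t}$ that reorganize into $\normt{v}^{2t}$, careful bookkeeping yields the factor $d^t$ rather than $d^{2t}$.

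The main obstacle is exactly this counting: getting the factor $d^t$ instead of the naive $d^{2t}$. The cleanest route is probably to write $p(v) = \iprod{v^{\otimes t}, A v^{\otimes t}}$, bound $\abs{p(v)} \le \norm{A}_\infty \cdot \norm{v^{\otimes t}}_1^2$ at the real-number level where $\norm{v^{\otimes t}}_1 = (\sum_i \abs{v_i})^t = \norm{v}_1^t$, and then find an SoS proof that $\normt[1]{v}^{2t} = (\sum_i \abs{v_i})^{2t} \le d^t \normt{v}^{2t}$ — but $\abs{v_i}$ is not a polynomial, so one must phrase everything in terms of $v_i^2$. The right statement is $\sststile{2t}{v} (\sum_i v_i^2)^t \cdot (\text{stuff})$; concretely, $\sum_{\alpha,\beta} v^\alpha v^\beta \le$ (by Cauchy–Schwarz / AM-GM) $d^t \sum_\alpha (v^\alpha)^2 = d^t \sum_\alpha \prod_j v_{\alpha_j}^2 = d^t (\sum_j v_j^2)^t$, and this last chain is SoS-provable at degree $2t$ using \Cref{fact:sos-holder} and \Cref{fact:sos-am-gm}. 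Once $\sststile{2t}{v} \sum_{\alpha,\beta} v^\alpha v^\beta \le d^t \normt{v}^{2t}$ is established, multiplying the sign-flexible bound $A_{\alpha\beta} v^\alpha v^\beta \le a\, \tfrac12(v^\alpha v^\beta + (\text{its AM-GM majorant}))$ termwise and summing gives $\sststile{2t}{v} p(v) \le a d^t \normt{v}^{2t}$, as desired.
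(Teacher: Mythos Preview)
Your termwise expansion approach can be made to work, but it is far more circuitous than the paper's argument, and your final paragraph does not actually close the gap you correctly identified. The clean termwise bound you want is
\[
\sststile{2t}{v}\quad A_{\alpha\beta}\,v^\alpha v^\beta \;\le\; \tfrac{a}{2}\bigl((v^\alpha)^2 + (v^\beta)^2\bigr),
\]
which follows by completing the square (this is the two-variable AM--GM with coefficient, and is SoS since $|A_{\alpha\beta}|\le a$). Summing over all $(\alpha,\beta)\in[d]^t\times[d]^t$ and using $\sum_{\alpha\in[d]^t}(v^\alpha)^2 = \bigl(\sum_i v_i^2\bigr)^t = \normt{v}^{2t}$ gives exactly $a d^t\normt{v}^{2t}$. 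Your sentence about ``multiplying the sign-flexible bound $A_{\alpha\beta} v^\alpha v^\beta \le a\,\tfrac12(v^\alpha v^\beta + \text{its AM-GM majorant})$'' is not a correct inequality as written (the term $v^\alpha v^\beta$ can be negative), and the earlier detours through $\norm{v}_1$ and ``distinct monomials'' are unnecessary.

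The paper takes a completely different and much shorter route: it observes that $\|A\|_{\mathrm{op}} \le \|A\|_F \le a d^t$ (since $A$ has $d^{2t}$ entries each bounded by $a$), hence $a d^t I - A \succeq 0$, and therefore $a d^t \normt{v}^{2t} - p(v) = (v^{\otimes t})^\top (a d^t I - A)\, v^{\otimes t}$ is a sum of squares of degree-$t$ polynomials in $v$, obtained by factoring the PSD matrix. This spectral argument avoids all the combinatorics; your approach, once fixed as above, is more elementary in that it uses only scalar AM--GM rather than a matrix factorization, but it is longer and you did not state the key step cleanly.
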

\begin{proof} 
Since the Frobenious norm of $A$ is at most $a d^t$, we have that
$A$ is bounded from above by $a d^t I$ in Lowner order.
Thus, we can write $v^{\otimes t} ( a d^t I) v^{\otimes t} - v^{\otimes t} A v^{\otimes A}$ as a sum of squares by diagonalizing $I$ and $A$.
The lemma then follows by noting that the expression is exactly $a d^t \|v\|_2^{2t} - p(v)  $.
\end{proof}
We can now put these together to get the lemma we need. 

\begin{lemma}
\label{lem:pop-to-empirical}
Let $D$ be a distribution over $\mathbb{R}^d$ with mean $\mu$ and $t$ be a positive even integer.
Assume that (i) the covariance of $D$ is bounded from above by 
$ \kappa I $, (ii) the degree-$2t$ central moments of $D$ is bounded from above by $F > 0$, and (iii) there exists $M > 0$ such that  $D$ has $(M, t, K)$-certifably bounded moments.
Let $S= \{X_1, \ldots, X_m\}$ be a set of $m$ i.i.d.\ samples from $D$, $D'$ be the uniform distribution over $S$, and $\overline{\mu}:=\E_{X \sim D'}[X]$.
If $m \gg (t d)^{4 t} ( F / M^2 )
+  d \kappa M^{-2/t}$, then $D'$ will have $(2^{t+2} M, t, K)$-certifiably bounded moments with probability at least $0.9$.
 \end{lemma}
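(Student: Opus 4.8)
The plan is to exhibit, under the constraint $\constrain$, a degree-$K$ SoS proof that $\E_{X \sim D'}\lp[(v^\top(X - \overline\mu))^t\rp] \le 2^{t+2}M$; this is precisely the assertion that $D'$ has $(2^{t+2}M, t, K)$-certifiably bounded moments. First I would peel off the empirical mean shift. Writing $v^\top(X-\overline\mu) = v^\top(X-\mu) - v^\top(\mu-\overline\mu)$ and applying the SoS triangle inequality (\Cref{fact:sos-triangle} with $n=2$),
\[
\sststile{t}{v}\ \E_{X\sim D'}\lp[(v^\top(X-\overline\mu))^t\rp] \ \le\ 2^t\,\E_{X\sim D'}\lp[(v^\top(X-\mu))^t\rp]\ +\ 2^t\,(v^\top(\mu-\overline\mu))^t .
\]
It then suffices to show, with probability at least $0.9$ over $S$, that the first bracketed expectation admits an SoS-certifiable upper bound of $2M$ and the second term one of $M$, since then the sum is at most $3\cdot 2^t M \le 2^{t+2}M$.

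For the first term I would pass to tensor notation. Since $t$ is even, $(v^\top w)^t = \langle v^{\otimes t}, w^{\otimes t}\rangle$ and $v^{\otimes t} = v^{\otimes (t/2)}\otimes v^{\otimes (t/2)}$, so
\[
\E_{X\sim D'}\lp[(v^\top(X-\mu))^t\rp] - \E_{X\sim D}\lp[(v^\top(X-\mu))^t\rp] \ =\ \lp(v^{\otimes (t/2)}\rp)^\top A\, \lp(v^{\otimes (t/2)}\rp),
\]
where $A$ is the $d^{t/2}\times d^{t/2}$ matrix obtained by reshaping the order-$t$ tensor $\E_{i\sim[m]}[(X_i-\mu)^{\otimes t}] - \E_{X\sim D}[(X-\mu)^{\otimes t}]$, so that $\|A\|_\infty \le O(d^t\sqrt{tF/m})$ with high constant probability by \Cref{lem:basic_linf_consc-full}. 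Then \Cref{claim:upper-bound}, applied with its parameter equal to $t/2$, gives $\sststile{t}{v}\ \lp(v^{\otimes (t/2)}\rp)^\top A\,\lp(v^{\otimes (t/2)}\rp) \le \|A\|_\infty\, d^{t/2}\,\normt{v}^t \le O\lp(d^{3t/2}\sqrt{tF/m}\rp)\normt{v}^t$. Combining this with the hypothesis that $D$ has $(M,t,K)$-certifiably bounded moments, and simplifying $\normt{v}^t$ to $1$ under $\constrain$, yields $\constrain \sststile{K}{v}\ \E_{X\sim D'}[(v^\top(X-\mu))^t] \le M + O(d^{3t/2}\sqrt{tF/m})$, which is at most $2M$ as soon as $m \gg (td)^{4t}(F/M^2)$ with a sufficiently large hidden constant (here I use $(td)^{4t}\ge t\,d^{3t}$).

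For the second term, the SoS Cauchy--Schwarz inequality (\Cref{fact:sos-holder}) raised to the $(t/2)$-th power gives $\constrain \sststile{t}{v}\ (v^\top(\mu-\overline\mu))^t \le \normt{\mu-\overline\mu}^t$, and the second part of \Cref{lem:basic_linf_consc-full} gives $\normt{\mu-\overline\mu}\le O(\sqrt{\kappa d/m})$ with high constant probability; the latter is at most $M^{1/t}$ once $m \gg d\kappa M^{-2/t}$, so the term is at most $M$.

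Finally I would take a union bound over the two events of \Cref{lem:basic_linf_consc-full} (each of which can be made to hold with probability at least $0.95$ by enlarging the constants, hence jointly with probability at least $0.9$) and substitute the two bounds into the triangle-inequality display, obtaining the desired degree-$K$ SoS certificate $\E_{X\sim D'}[(v^\top(X-\overline\mu))^t] \le 2^{t+2}M$ under $\constrain$ (the degree is $\max(K,t)=K$, since $K\ge t$). The hard part will be the tensor step: converting an entrywise ($\ell_\infty$) bound on the empirical-minus-population moment tensor into an SoS-certifiable bound on the associated polynomial via \Cref{claim:upper-bound} loses a factor $d^{t/2}$, and one must verify that the stated sample size $m \gg (td)^{4t}(F/M^2) + d\kappa M^{-2/t}$ comfortably absorbs both this loss in the moment part and the $\kappa d/m$ deviation in the mean part. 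The remaining manipulations---the triangle inequality, Cauchy--Schwarz, raising SoS inequalities to powers, and simplification under $\constrain$---are routine uses of the SoS facts already recorded.
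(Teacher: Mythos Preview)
Your proposal is correct and follows essentially the same approach as the paper: both arguments use \Cref{lem:basic_linf_consc-full} to control the empirical-minus-population moment tensor in $\ell_\infty$ and the empirical mean deviation, convert the tensor bound into an SoS inequality via \Cref{claim:upper-bound} (applied with parameter $t/2$), combine with hypothesis~(iii), and finish with the SoS triangle and Cauchy--Schwarz inequalities. The only cosmetic difference is that you apply the triangle inequality first to peel off the mean shift and then bound each piece, whereas the paper first bounds $\E_{i\sim[m]}[\langle v,X_i-\mu\rangle^t]$ and re-centers at $\overline\mu$ at the end; the resulting constants and sample-size requirements are the same.
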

 \begin{proof}
 From \Cref{lem:basic_linf_consc-full} and that $m \gg (t d)^{4 t} ( F / M^2 )
+  d \kappa M^{-2/t}$, we have that the $\ell_{\infty}$ norm of the difference between the expected and empirical $t$-th tensors 
$(X - \mu)^{\otimes t}$
of $D$ and $D'$ is small, i.e.,
\begin{align}
&\left 
 \| \E_{i \sim [m]}[(X_i - \mu)^{\otimes t}] -  \E_{X \sim D}[(X - \mu)^{\otimes t}] \right \|_{\infty}\leq \frac{M}{\sqrt{d^t}} \, ,
 \label{eq:tensor-bound}
\end{align}
and that the empirical mean and the distribution mean are close, i.e., 
\begin{align}
 &\| \mu - \overline \mu\|_2 \leq M^{1/t} 
 \label{eq:mu-bound}    
\end{align}
with high constant probability.

Let $q(v) := 
\E_{i \sim [m]} [ \langle v , X_i - \mu \rangle^t ]
- 
\E_{X \sim D} [ \langle v , X - \mu \rangle^t ]$.
Combining \Cref{claim:upper-bound} and \Cref{eq:tensor-bound} gives that 
	\begin{align}
    \sststile{t}{v}& \E_{i \sim [m]} \Brac{\iprod{ v,  X_i - {\mu}}^{t}} - \E_{X \sim D}\Brac{\iprod{v, X - \mu}^{t}}
    \notag \\
		&\leq \sqrt{d^{t}} \|v\|_2^{t} \left \| \E_{i \sim [m]}[(X_i -  \mu)^{\otimes t}] -  \E_{X \sim D}[(X - \mu)^{\otimes t}] \right \|_{\infty}
  \leq \|v\|_2^{t} M.
  \label{eqn:moment_bound_1}
\end{align}
Observe that 
\begin{align}
\sststile{t}{v}
\E_{i \sim [m]} \Brac{\iprod{ v,  X_i - {\mu}}^{t}}
		&= \E_{i \sim [m]} \Brac{\iprod{ v,  X_i - {\mu}}^{t}} - \E_{X \sim D}\Brac{\iprod{v, X - \mu}^{t}}  + \E_{X \sim D}\Brac{\iprod{v, X - \mu}^{t}}  \nonumber \\
		&\leq 2 \normt{v}^t M \;,
  \label{eq:miss-center}
	\end{align}
where in the second line we use \Cref{eqn:moment_bound_1} and our assumption that $D$ has certifiably bounded central moments. 

Lastly, to prove bounded central moments of $D'$ (the uniform distribution over the samples in $S$), we note that
\begin{align*}
\sststile{t}{v}
\E_{i \sim [m]} \Brac{\iprod{ v,  X_i - \overline{\mu}}^{t}}
&\leq
2^t \E_{i \sim [m]} \Brac{\iprod{ v,  X_i - {\mu}}^{t}}
+ 
2^t \E_{i \sim [m]} \Brac{\iprod{ v,  {\mu} - \overline{\mu}}^{t}} \\
&\leq
2^{t+1} \E_{i \sim [m]} \Brac{\iprod{ v,  X_i - \overline{\mu}}^{t}}
+ 
2^t \normt{v}^{t} \normt{\mu - \overline \mu }^{t}  \\
&\leq
2^{t+2} \normt{v}^{t} M \, ,
\end{align*}
where in the first line we use the SoS triangle inequality (\Cref{fact:sos-triangle}), in the second line we use SoS Cauchy's inequality (\Cref{fact:sos-holder}), and the last inequality follows from \Cref{eq:mu-bound,eq:miss-center}.
 \end{proof}
 \hide{
 \begin{proof}
 From \Cref{lem:basic_linf_consc-full} and that $m \gg (t d)^{10 t} ( f(2t) / M^2 )
+  d \kappa M^{-2/t}$, we have that the $\ell_{\infty}$ norm of the difference between the expected and empirical $t$-th tensors 
$(X - \mu)^{\otimes t}$
of $D$ and $D'$ is small, i.e.,
\begin{align}
&\left 
 \| \E_{i \sim [m]}[(X_i - \mu)^{\otimes t}] -  \E_{X \sim D}[(X - \mu)^{\otimes t}] \right \|_{\infty}\leq \frac{M}{d^t} \, ,
 \label{eq:tensor-bound}
\end{align}
and that the empirical mean and the distribution mean are close, i.e., 
\begin{align}
 &\| \mu - \overline \mu\|_2 \leq M^{1/t} 
 \label{eq:mu-bound}    
\end{align}
with high constant probability.
Let $q(v_1, \dots, v_d) := \sum_{ T \in [d]^{t}} (\E_{i \sim [m]}[X_i - \mu]_T - \E_{X \sim D} [X - \mu ]_T)  v_T$. An easy corollary of \Cref{claim:upper-bound} is its application to $q(v_1, \dots, v_d)$. Combining \Cref{claim:upper-bound} and \Cref{eq:tensor-bound} gives that 
	\begin{align}
    \sststile{t}{v}& \E_{i \sim [m]} \Brac{\iprod{ v,  X_i - {\mu}}^{t}} - \E_{X \sim D}\Brac{\iprod{v, X - \mu}^{t}}
    \notag \\
		&\leq d^{t} \|v\|_2^{t} \left \| \E_{i \sim [m]}[(X_i -  \mu)^{\otimes t}] -  \E_{X \sim D}[(X - \mu)^{\otimes t}] \right \|_{\infty}
  \leq \|v\|_2^{t} M.
  \label{eqn:moment_bound_1}
\end{align}
Observe that 
\begin{align}
\sststile{t}{v}
\E_{i \sim [m]} \Brac{\iprod{ v,  X_i - {\mu}}^{t}}
		&= \E_{i \sim [m]} \Brac{\iprod{ v,  X_i - {\mu}}^{t}} - \E_{X \sim D}\Brac{\iprod{v, X - \mu}^{t}}  + \E_{X \sim D}\Brac{\iprod{v, X - \mu}^{t}}  \nonumber \\
		&\leq 2 \normt{v}^t M \;,
  \label{eq:miss-center}
	\end{align}
where in the second line we use \Cref{eqn:moment_bound_1} and our assumption that $D$ has certifiably bounded central moments. 

Lastly, to prove bounded central moments of the uniform distribution over the samples $D'$, we note that
\begin{align*}
\E_{i \sim [m]} \Brac{\iprod{ v,  X_i - \overline{\mu}}^{t}}
&\leq
2^t \E_{i \sim [m]} \Brac{\iprod{ v,  X_i - {\mu}}^{t}}
+ 
2^t \E_{i \sim [m]} \Brac{\iprod{ v,  {\mu} - \mu}^{t}} \\
&\leq
2^{t+1} \E_{i \sim [m]} \Brac{\iprod{ v,  X_i - \overline{\mu}}^{t}}
+ 
2^t \normt{v}^{t} \normt{\mu - \overline \mu }^{t}  \\
&\leq
2^{t+2} \normt{v}^{t} M \, ,
\end{align*}
where in the first line we use SoS triangle inequality (\Cref{fact:sos-triangle}), the second line we use SoS Cauchy's inequality (\Cref{fact:sos-holder}), and the last inequality follows from \Cref{eq:mu-bound,eq:miss-center}.
 \end{proof}
 }

\SOSMOMENT*
\begin{proof}
We first prove that the population version of the above inequality has SoS proof. Specifically, we show that
\begin{align}
\label{eq:population-sos}
\constrain \sststile{4k}{v}
\E_{ B \sim D_{\beta^*} } \lp[  \lp( v^{\top} \lp(Z_B - 
\E_{B' \sim T} [ Z_{B'} ]
\rp) \rp)^{2k} \rp] 
\leq  \frac{(2k)^{2k}}{n^{k}} Q \lp( \sigma^{2k} + \normt{\beta^*}^{2k} \rp).
\end{align}
We can rewrite the left hand side as
\begin{align}
& \E_{ (X_i, y_i) \sim P_{\beta^*} \forall i \in [n] } \lp[  \lp(  \lp( \frac{1}{n} \sum_{i=1}^n v^{\top} X_i y_i   - \E_{ X, y \sim P_{\beta^*} } [ v^{\top} X y ]
\rp) \rp)^{2k} \rp] \nonumber \\
&= 
\frac{1}{n^{2k}} \; 
\E_{ (X_i, y_i) \sim P_{\beta^*} \forall i \in [n] } 
\lp[    \lp(  \sum_{i=1}^n \left( v^{\top} X_i y_i   -  \; v^{\top} \beta^* \right)
\rp)^{2k} \rp].
\label{eq:factor-out-n}
\end{align}
We first show that $\E\lp[ \lp( v^\top (X y - \beta^*) \rp)^{2k} \rp]$ is SoS-certifiably bounded.
In particular, we claim that
\begin{align}
\label{eq:one-variable-bound}
\constrain \sststile{4k}{v}
\E_{ (X,y) \sim P_{\beta^*} }    
\lp[ 
\lp( v^\top (X y - \beta^*) \rp)^{2k}
\rp]
\leq (2k)^{k}  \; Q \; \lp( \sigma^{2k} + \normt{\beta^*}^{2k} \rp).
\end{align}
Note that
\begin{align*}
\constrain \sststile{4k}{v}&
\E_{ (X,y) \sim P_{\beta^*} }    
\lp[ 
\lp( v^\top (X y - \beta^*) \rp)^{2k}
\rp]
=
\E_{ (X,y) \sim P_{\beta^*} }    
\lp[ 
\lp( v^\top X X^\top \beta^* + v^\top X \xi - v^\top \beta^* \rp)^{2k}
\rp] \\
\\ 
&\leq
3^{2k}
\E_{ (X,y) \sim P_{\beta^*} }    
\lp[ 
 \lp( v^\top X X^\top \beta^* \rp)^{2k}
 + 
 \lp( v^\top X \xi \rp)^{2k}
 + \lp( v^\top \beta^* \rp)^{2k}
\rp] \, ,
\end{align*}
where in the last line we apply the SoS triangle inequality (\Cref{fact:sos-triangle}).
We then tackle the three terms separately.
For the first term, we note that
\begin{align*}
\constrain \sststile{4k}{v}
\E_{ (X,y) \sim P_{\beta^*} }    
\lp[   \lp( v^\top X X^\top \beta^* \rp)^{2k} \rp]
& \leq 
\frac{ \normt{\beta^*}^{2k}  }{2}
\E_{ (X,y) \sim P_{\beta^*} } 
\lp[  (v^\top X )^{4k} + (X^\top \beta^* / \normt{\beta^*} )^{4k}  \rp] \\
& \leq 
\normt{\beta^*}^{2k} Q.
\end{align*}
where in the first inequality we use the SoS AM-GM inequality (\Cref{fact:sos-am-gm}), and in the second inequality we use the assumption that the degree-$4k$ moments of $X$ are SoS certifiably bounded by $Q$ (\Cref{ass:dist}).
For the second term, note that
\begin{align*}
\constrain \sststile{2k}{v}
\E_{ (X,y) \sim P_{\beta^*} }    
\lp[   \lp( v^\top X \xi \rp)^{2k} \rp]
= 
\E_{ (X,y) \sim P_{\beta^*} }    
\lp[   \lp( v^\top X \rp)^{2k} \rp] \; \E[ \xi^{2k} ]
\leq (2k)^{k} \sigma^{2k} Q \, ,
\end{align*}
where in the first equality we use that $X$ and $\xi$ are independent, and in the second inequality we use again the assumption on the moments of $X$ and that the degree $2k$ moments of $\xi$ is bounded by $(2k)^{k} \sigma^{2k}$.
For the last term, we note that
$
( v^\top \beta^* )^{2k}
\leq \normt{v}^{2k} \; \normt{\beta^*}^{2k}
$ by an application of the SoS Cauchy's inequality (\Cref{fact:sos-holder}).
Combining the above analysis then shows \Cref{eq:one-variable-bound}.

By \Cref{cor:sos-iid-moment}, we then have the SoS proof 
\begin{align}
\constrain \sststile{2k}{v}
\E_{ (X_i, y_i) \sim P_{\beta^*} \forall i \in [n] } 
\lp[   \lp(  \sum_{i=1}^n \left( v^{\top} X_i y_i   -  \; v^{\top} \beta^* \right)
\rp)^{2k} \rp]
\leq n^{k} (2k)^{2k} Q \lp( \sigma^{2k} + \normt{\beta^*}^{2k} \rp)
\end{align}
Combining this with \Cref{eq:factor-out-n} then yields an SoS proof for \Cref{eq:population-sos}.

In order to establish an SoS proof for the empirical moments, we will additionally need to bound the covariance of the empirical distribution over $\{ Z_B \}_{B \in T}$.
Since  an SoS proof on the bound of the covariance is not needed, we can readily apply the $L_2-L_4$ hypercontractivity of $X$.
In particular, this shows that
$
\E[ (u^T X)^4   ] \leq O( 1 ) \lp( \E \lp[ (u^T X)^2 \rp] \rp)^2 \leq O(1)
$
for any unit vector $u$.
With an argument almost identical to the SoS bound on the degree-$2k$ moments, we can show that
$$
\E_{ (X_i, y_i) \sim P_{\beta^*} \forall i \in [n] } 
\lp[   \lp(  
\frac{1}{n} \sum_{i=1}^n \left( v^{\top} X_i y_i   -  \; v^{\top} \beta^* \right)
\rp)^{2} \rp]
\leq  O \lp( \frac{\sigma^{2} + \normt{\beta^*}^{2}}{n} \rp).
$$
This shows property (b) in the lemma.

Let $C$ be a sufficiently large constant.
The SoS proof for the empirical moments then follows
by an application of Lemma~\ref{lem:pop-to-empirical} with 
$t = 2k$, 
$\kappa := C \frac{1}{n}  \lp(\sigma^2 + \normt{\beta^*}^2 \rp)$, 
$ M := \frac{(2k)^{2k}}{n^k} \; Q \; \lp(\sigma^{2k} + \normt{\beta^*}^{2k} \rp)$,
 $F := 
\frac{(4k)^{4k}}{n^{2k}} \; Q \; \lp(\sigma^{4k} + \normt{\beta^*}^{4k} \rp)
\leq 2^{4k} M^2 Q^{-1}
$, and 
\begin{align*}
m \gg (2kd)^{8k} \; 2^{4k}  \; Q^{-1}
+ \frac{d}{n}  \lp(\sigma^2 + \normt{\beta^*}^2 \rp) \;  M^{-1/k} + 1.
\end{align*}
It is not hard to see that
$$
(2kd)^{8k} \; 2^{4k}  \; Q^{-1}
+ \frac{d}{n}  \lp(\sigma^2 + \normt{\beta^*}^2 \rp) \;  M^{-1/k} + 1
\leq 
O(1) \; 
\lp( (4kd)^{8k} Q^{-1} + d Q^{-1/k} + 1 \rp)
\leq
O \; 
\lp( (4kd)^{8k} Q^{-1} \rp) \, ,
$$
where the last inequality can be shown by examining the cases where $d Q^{-1/k} \geq 1$ and $d Q^{-1/k} < 1$ separately.
This concludes the proof of \Cref{lem:sos-moment-bound}.
\end{proof}

\section{Pruning Procedure and its Analysis}
\label{app:pruning}

The main theorem for this subsection is the following: 
\PRUNE*

The \textsf{Pruning} algorithm involves two phases: 
initially, it filters regressors $\beta \in L$ by retaining those matching a certain set of solvable linear inequalities. 
Then, it selects a subset of the remaining regressors, ensuring each pair is adequately distant. 
Lemmas \ref{lem:pruning-list-size-bound} and \ref{lem:pruning-error-bound} respectively prove that the refined list is not excessively large and contains a regressor near the optimal $\beta^*$,
given one exists in the original list $L$.
The proof of Proposition~\ref{prop:prune} follows from the above two lemmas.

For each regressor, we restate the set of linear inequalities $\IE(\beta; L, T, R)$
in the weighting function $\mathcal W$ over the set of batches $T$. 

\begin{align}
&\sum\nolimits_{B \in T}  \mathcal W(B) \geq  0.9 \alpha |T|, \label{Eq2Cond0-app}
\\
     &\forall \beta' \in L \text{ such that }
     \norm{\beta' - \beta} \geq c \lp( \betagap \rp) \text{ for some sufficiently large constant } c
     \, , \nonumber \\
     &\sum_{B \in T}
     \mathbbm 1 \lp\{ 
     \sum_{ (X,y)\in B } \lp( y-X^\top \beta\rp)^2 
     \leq 
     \sum_{ (X,y)\in B } \lp( y-X^\top \beta'\rp)^2 
     \rp \}
     \mathcal W(B)
     \leq \frac{\alpha}{20} \sum_{B \in T}  \mathcal W(B). \label{Eq2Cond1-app}
\end{align}

We now show there cannot be too many regressors whose associated linear inequalities 
are satisfiable subject to the constraint that they are all sufficiently separated.
This mainly comes from the observation that Condition~\ref{Eq2Cond1-app} 
enforces the soft clusters associated with two sufficiently separated candidate regressors must have small intersection.
\LISTSIZE*
\begin{proof}
Let $I$ be a set of weighting functions $\mathcal W: T \mapsto [0, 1]$ over batches.
We first define the union and disjoint operators for weighting functions as follows
\begin{align*}
    \lp(\bigcup_{\mathcal W \in I} \mathcal W \rp)(B) = \max_{\mathcal W \in I} \mathcal W(B) \, , \,
    \lp(\bigcap_{\mathcal W \in I} \mathcal W \rp)(B) = \min_{\mathcal W \in I} \mathcal W(B).  
\end{align*}
Moreover, for a weighting function $\mathcal W: T \mapsto [0, 1]$, we define $\mathcal W(T) = \sum_{B \in T} \mathcal W(B)$.
Let $\beta_1, \beta_2$ be two vectors from the sublist $L'$, and
$\mathcal W_1, \mathcal W_2$ be the solutions of $\IE(\beta_1; L, T, R)$ and $\IE(\beta_2; L, T, R)$ respectively.
We proceed to argue that 
$( \mathcal W_1 \cap \mathcal W_2 ) (T) < 0.1 \alpha \lp( \mathcal W_1(T) + \mathcal W_2(T) \rp)$.
For the sake of contradiction, we assume that
\begin{align}
\label{eq:set-intersection-lower-bound-app}
( \mathcal W_1 \cap \mathcal W_2 ) (T) > 0.1 \alpha \lp( \mathcal W_1(T) + \mathcal W_2(T) \rp).
\end{align}
Define the following two subsets of batches: 
\begin{align}
& \cE_1 := \left\{ B \in T:       
     \sum\nolimits_{ (X,y)\in B } \lp( y-X^\top \beta_1\rp)^2 
     \leq 
     \sum\nolimits_{ (X,y)\in B } \lp( y-X^\top \beta_2\rp)^2  \right\} \, , \nonumber \\
\text{and} \; 
& \cE_2 := \left\{ B \in T:       
     \sum\nolimits_{ (X,y)\in B } \lp( y-X^\top \beta_2\rp)^2 
     \leq 
     \sum\nolimits_{ (X,y)\in B } \lp( y-X^\top \beta_1\rp)^2 
\right\}.\nonumber
\end{align}
Since each batch $B$ belongs to either $\cE_1$ or $\cE_2$, 
we have either $\lp(\mathcal W_1 \cap \mathcal W_2\rp)( \cE_1 )
\geq \lp(\mathcal W_1 \cap \mathcal W_2\rp)( T ) / 2
$
or $\lp(\mathcal W_1 \cap \mathcal W_2\rp)( \cE_2 )
\geq \lp(\mathcal W_1 \cap \mathcal W_2\rp)( T ) / 2
$.
Without loss of generality, assume that we are in the former case.
This then implies that
\begin{align*}
&\sum_{B \in T}
     \mathbbm 1 \lp\{ 
     \sum\nolimits_{ (X,y)\in B } \lp( y-X^\top \beta_1\rp)^2 
     \leq 
     \sum\nolimits_{ (X,y)\in B } \lp( y-X^\top \beta_2\rp)^2 
     \rp \}
     \mathcal W_1(B) \\
     &\geq 0.05 \alpha \lp(  \mathcal W_1(T) + \mathcal W_2(T) \rp)
     > \frac{\alpha}{20} \sum_{B \in T} \mathcal W_1(T) \, ,    
\end{align*}
which contradicts \Cref{Eq2Cond1-app} for $\beta_1$.
This shows the opposite of \Cref{eq:set-intersection-lower-bound-app}.

    Lastly, assume that there are more than $4/\alpha$ many candidate regressors in the sublist $L'$ for the sake of contradiction.
    Arbitrarily pick $\ell = \ceil{4/\alpha}$ many regressors from $L'$, and let $\mathcal W_1,\ldots,\mathcal W_{\ell}$ be the solutions to the linear inequalities associated with the candidate regressors picked. 
    Then,
    \begin{align*}
    |T| &\geq 
        \lp( \bigcup_{i=1}^{\ell}
        \mathcal W_i \rp)(T) \\
        &\geq \sum_{i=1}^\ell \mathcal W_i(T)  - \sum_{i < j \in [\ell]}  (\mathcal W_i \cap \mathcal W_j)(T)  \\
        &\geq \sum_{i=1}^{\ell} \mathcal W_i(T) - 0.1 \alpha \sum_{i < j \in [\ell]}(\mathcal W_i(T) + \mathcal W_j(T)) \\
        &= \left( 1 - 0.1 \alpha (\ell-1)  \right)\sum_{i=1}^\ell \mathcal W_i(T)  \\
        &\geq \left( 1 - 0.1(\ell-1)\alpha \right) \ell (0.9\alpha) |T| \\
        &\geq 2.88 |T| \;,
    \end{align*}
    where in the first line we use the fact that the weights are bounded from above by $1$, 
    in the second line we use the approximate inclusion-exclusion principle, in the third line we use the opposite of \Cref{eq:set-intersection-lower-bound-app}, in the fourth line we use the elementary fact that $\sum_{i \neq j \in [\ell]} (x_i + x_j) = (\ell-1) \sum_{i=1}^\ell x_i$, in the fifth line we use $\mathcal W_i(T) \geq 0.9 \alpha |T|$ as they need to satisfy Condition~\ref{Eq2Cond0}, and in the last line we use the definition of $\ell = \ceil{4/\alpha}$.
    This is clearly a contradiction, and hence concludes the proof of \Cref{lem:pruning-list-size-bound}.
\end{proof}

Next we show that the set of linear inequalities constructed for some $\beta$ admit solutions with high probability as long as $\beta$ is close to $\beta^*$.
\ERROR*

To prove \Cref{lem:pruning-error-bound}, we will make essential use of the following anti-concentration inequalities.

\begin{fact}[Paley–Zygmund Inequality]
 If $Z \geq 0$ is a positive random variable with finite variance, and $\theta \in [0, 1]$, then it holds
 $$
 \Pr[Z \geq \theta \E[Z]]
 \geq (1 - \theta)^2 \frac{ \E[Z]^2 }{ \E[Z^2] }.
 $$
\end{fact}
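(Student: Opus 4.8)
The plan is to establish the Paley–Zygmund inequality by the classical one-line argument: decompose $\E[Z]$ into its contributions from the events $\{Z<\theta\,\E[Z]\}$ and $\{Z\geq\theta\,\E[Z]\}$, bound the low part trivially using $Z\ge 0$, and control the high part via Cauchy–Schwarz, which is exactly where the second moment $\E[Z^2]$ enters.

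Concretely, first I would write
\[
\E[Z] \;=\; \E\!\big[Z\,\mathbbm 1\{Z<\theta\,\E[Z]\}\big] \;+\; \E\!\big[Z\,\mathbbm 1\{Z\geq\theta\,\E[Z]\}\big].
\]
On the event $\{Z<\theta\,\E[Z]\}$ we have $0\le Z<\theta\,\E[Z]$, so the first term is at most $\theta\,\E[Z]$; rearranging gives
\[
(1-\theta)\,\E[Z] \;\leq\; \E\!\big[Z\,\mathbbm 1\{Z\geq\theta\,\E[Z]\}\big].
\]
Next I would apply Cauchy–Schwarz to the right-hand side, using $\mathbbm 1\{\cdot\}^2=\mathbbm 1\{\cdot\}$ and $\E[\mathbbm 1\{Z\geq\theta\,\E[Z]\}]=\Pr[Z\geq\theta\,\E[Z]]$:
\[
\E\!\big[Z\,\mathbbm 1\{Z\geq\theta\,\E[Z]\}\big] \;\leq\; \sqrt{\E[Z^2]}\,\sqrt{\Pr[Z\geq\theta\,\E[Z]]}.
\]
Chaining the two displayed inequalities, squaring, and dividing through by $\E[Z^2]$ then yields exactly
\[
\Pr[Z\geq\theta\,\E[Z]] \;\geq\; (1-\theta)^2\,\frac{\E[Z]^2}{\E[Z^2]},
\]
as claimed.

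There is no substantive obstacle here; the only points deserving a word of care are degenerate cases. If $\E[Z^2]=0$ then $Z=0$ almost surely, so $\E[Z]=0$ and the asserted bound is vacuous, hence one may assume $\E[Z^2]>0$ before dividing. The boundary value $\theta=0$ gives $\Pr[Z\ge 0]=1\ge \E[Z]^2/\E[Z^2]$, which holds since the last ratio is at most $1$ by Cauchy–Schwarz (equivalently Jensen), and $\theta=1$ gives a trivially true $\Pr[Z\ge\E[Z]]\ge 0$. Finiteness of $\Var[Z]$, and hence of $\E[Z^2]$, is precisely what legitimizes the Cauchy–Schwarz step.
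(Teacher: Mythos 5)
Your proof is correct: it is the standard one-line argument for Paley--Zygmund (split $\E[Z]$ over the event $\{Z\geq\theta\,\E[Z]\}$ and its complement, then apply Cauchy--Schwarz to the high part), and your handling of the degenerate case $\E[Z^2]=0$ is appropriate. The paper cites this as a classical fact without proof, so there is nothing to compare against; your argument is exactly the textbook derivation one would supply.
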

Combining the above with our distributional assumption that the clean covariates distribution satisfies $L_2$-$L_4$ hypercontractivity, we obtain the following \emph{weak anti-concentration} property.
\begin{corollary}[Weak Anti-concentration]
\label{cor:weak-anti}
Let $v$ be a unit vector in $\R^d$, and $X$ be a random unit vector satisfying \Cref{ass:dist}. Then it holds
$$
\Pr[ (v X)^2  \geq 1/2 ] \geq \Omega(1).
$$
\end{corollary}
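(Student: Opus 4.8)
The plan is to apply the Paley--Zygmund inequality to the nonnegative random variable $Z := (v^\top X)^2$. The whole argument reduces to controlling the first two moments of $Z$, both of which are furnished directly by \Cref{ass:dist}.

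First I would compute $\E[Z]$. Since $X$ has identity second moment (item 2 of \Cref{ass:dist}) and $\normt{v}=1$, we get
$$
\E[Z] = \E\lp[ (v^\top X)^2 \rp] = v^\top \E\lp[ X X^\top \rp] v = \normt{v}^2 = 1 \,.
$$
Next I would bound $\E[Z^2]$ using $L_4$-$L_2$ hypercontractivity (item 1 of \Cref{ass:dist}):
$$
\E[Z^2] = \E\lp[ (v^\top X)^4 \rp] \leq O(1)\, \E\lp[ (v^\top X)^2 \rp]^2 = O(1) \,,
$$
where the hidden constant is the absolute hypercontractivity constant. In particular $Z$ has finite variance, so the Paley--Zygmund inequality applies.

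Then, invoking Paley--Zygmund with $\theta = 1/2$,
$$
\Pr\lp[ Z \geq \tfrac12 \E[Z] \rp] \geq \lp(1 - \tfrac12\rp)^2 \frac{\E[Z]^2}{\E[Z^2]} = \frac14 \cdot \frac{1}{O(1)} = \Omega(1) \,.
$$
Since $\E[Z] = 1$, the event $\{ Z \geq \tfrac12 \E[Z] \}$ is precisely $\{ (v^\top X)^2 \geq 1/2 \}$, which yields the claim. There is no genuinely hard step here; the only point requiring care is to verify that the constant coming out of the $L_4$-$L_2$ bound (and hence the final $\Omega(1)$) is an absolute constant, independent of $d$ and of the choice of unit vector $v$, which is exactly what the hypercontractivity assumption guarantees.
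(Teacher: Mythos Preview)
Your proof is correct and follows exactly the approach the paper intends: it states the Paley--Zygmund inequality immediately before this corollary and remarks that combining it with the $L_4$-$L_2$ hypercontractivity assumption yields the weak anti-concentration bound. Your computation of $\E[Z]=1$ from the identity second moment and the bound $\E[Z^2]=O(1)$ from hypercontractivity, followed by Paley--Zygmund with $\theta=1/2$, is precisely the argument.
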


We are now ready to give the proof of \Cref{lem:pruning-error-bound}.

\begin{proof}[Proof of \Cref{lem:pruning-error-bound}]
Let $\beta$ be a regressor within the list such that
$ \normt{\beta - \beta^*} < R $.
Our goal is to show that the associated linear inequalities 
$IE(\beta;L,T,R)$ admits solutions.
In particular, we claim that setting $\mathcal W(B) = 1$ for all inlier batch $B$ and $\mathcal W(B) = 0$ for all outlier batch $B$ gives a solution. 
Condition~\ref{Eq2Cond0} is satisfied since in expectation there should be $\alpha$-fraction of inlier batches. Since we take $C \log(\delta/\alpha) / \alpha^2$ many batches, the actual fraction of inlier batches should be at least $0.9 \alpha$ with probability at least $1 - \delta$ when $C$ is sufficiently large by the Chernoff bound.

Next we show Condition~\ref{Eq2Cond1} is satisfied with high probability over the randomness of $T$.
Fix some $\beta'$ satisfying 
$\norm{\beta' - \beta}_2 \gg \betagap$.
We will analyze the random variable 
$$
Z_{\beta'}(B) :=
\sum_{ (X,y) \sim B } 
\lp( y - X^{\top} \beta' \rp)^2
 - 
 \sum_{ (X,y) \sim B } 
\lp( y - X^{\top} \beta \rp)^2,
$$
where $B \sim D_{\beta^*}$.
Recall that we have $y = X^{\top} \beta^* + \xi$, where $\xi \sim \normal(0, \sigma^2)$.
We will rewrite $Z(\beta')$ slightly with the random variables 
$\{(X^{(i)}, \xi^{(i)})\}_{i=1}^n$, where each $X^{(i)}$ is drawn independently from a distribution satisfying \Cref{ass:dist} 
, and each $\xi^{(i)}$ is independently distributed as $\normal(0, \sigma^2)$.
We thus have that
$$
Z_{\beta'}(B)
=
\sum_{i=1}^n
\lp( (\beta' - \beta^*)^{\top} X^{{(i)}}  \rp)^2
- 
\lp( (\beta - \beta^*)^{\top} X^{{(i)}} \rp)^2
+ 
2 \xi^{(i)} \; \lp( \beta - \beta' \rp)^{\top} X^{(i)}.
$$

Denote the three terms in the summation by:
\begin{align*}
Z_1:= \sum_{i=1}^n
\lp( (\beta' - \beta^*)^{\top} X^{{(i)}}  \rp)^2 \, ,
Z_2:= \sum_{i=1}^n \lp( (\beta - \beta^*)^{\top} X^{{(i)}} \rp)^2 \, ,
Z_3:= \sum_{i=1}^n 2 \xi^{(i)} \; \lp( \beta - \beta' \rp)^{\top} X^{(i)}.
\end{align*}
We proceed to argue that $Z_1$ is bounded from below, and $Z_2$, $Z_3$ are bounded from above with high probability.
\footnote{Note that there are correlations between $Z_1,Z_2,Z_3$. Nonetheless, these correlations will not affect our analysis.}.  

For $Z_1$, applying the weak anti-concentraiton property of $X$
(\Cref{cor:weak-anti}) gives that
$$
\Pr\lp[  \left( (\beta' - \beta^*)^\top X^{(i)} \right)^2 
\geq \normt{ \beta' - \beta^* }^2/2
\rp] \geq  \gamma.
$$
for some universal constant $\gamma$.
By the Chernoff bound, given that $n \gg \log(1 / \alpha)$, the fraction of $X^{(i)}$ such that 
$\left( (\beta' - \beta^*)^\top X^{(i)} \right)^2 
\geq \normt{ \beta' - \beta^* }^2/2$ will be at least $\gamma / 2$ with probability at least $1 - \alpha/120$.
It then follows that
\begin{align}
\label{eq:new-A-concentration-app}
\Pr \lp[ Z_1 \leq \frac{ \gamma n }{4} \normt{ \beta' - \beta^* }^2 \rp] \geq 1 - \alpha / 120.
\end{align}
For $Z_2$, since $\E [ X^{(i)} \lp( X^{(i)} \rp)^\dagger ] = I $ by \Cref{ass:dist}, it follows that
$\E[Z_2] = n \normt{ \beta - \beta^* }^2$.
In order to show that $Z_2$ is sufficiently concentrated, we will bound from above the $k$-th central moments of $Z_2$ for some even integer $k \leq \Delta$.
Define $y_i = \lp( ( \beta - \beta^* )^\top X^{(i)} \rp)^2$. 
We note that the $y_i$s are \iid random variables with their degree-$k$ central moments bounded from above by
\begin{align*}
\E \lp[  
\lp( 
\lp( ( \beta - \beta^* )^\top X^{(i)} \rp)^2
- \normt{ \beta - \beta^* }^2
\rp)^{k}
\rp]
&\leq 
2^{k}
\E \lp[  
\lp( ( \beta - \beta^* )^\top X^{(i)} \rp)^{2k}
+ \normt{ \beta - \beta^* }^{2k}
\rp] \\
&\leq 
2^{k+1} \; \normt{ \beta - \beta^* }^{2k} \; Q
\end{align*}
where in the first line we apply the triangle inequality (\Cref{fact:sos-triangle}), 
and in the second line we use the assumption on the degree $2k$ moments of $X$.
Hence, applying \Cref{cor:sos-iid-moment} gives that 
the degree $k$ moment of $Z_2$ is bounded from above by
\begin{align*}
    \E\lp[  \lp(  \sum_{i=1}^n \lp( y_i - \E[y_i] \rp) \rp)^k  \rp]
    \leq  2 \; (4kn)^{k/2}  \; \normt{ \beta - \beta^* }^{2k} \; Q  \;.
\end{align*}
In other words, we have that
\begin{align*}
\left( \E\lp[  \lp(  \sum_{i=1}^n \lp( y_i - \E[y_i] \rp) \rp)^k  \rp] \right)^{1/k}
    \leq  2^{1/k} \sqrt{4kn} \; Q \normt{ \beta - \beta^* }^{2}.
\end{align*}
By Chebyshev's inequality, we thus have that
\begin{align}
\label{eq:new-B-concentration-app}
\Pr\lp[ Z_2 \geq   \normt{ \beta - \beta^* }^{2}
\lp( n +  100 \sqrt{kn} \; Q^{1/k}  \rp)
\rp]  \leq \alpha / 120.
\end{align}

For $Z_3$, note that $\E[Z_3] = 0$ since $\xi^{(i)}$ has mean $0$.
To argue for its concentration, we again proceed to bound its degree-$k$ moment for some even integer $k$.
Similarly, we define $z_i = \xi^{(i)} ( \beta - \beta')^\top X^{(i)}$.
The degree-$k$ central moments of $z_i$ can be bounded from above by
\begin{align*}
\E \lp[  
\lp(
\xi^{(i)} ( \beta - \beta')^\top X^{(i)}
\rp)^{k}
\rp]
&= 
\E \lp[  
  \lp( \xi^{(i)} \rp)^{k}
\rp]
\E \lp[ 
\lp(  ( \beta - \beta')^\top X^{(i)} \rp)^k
\rp].
\end{align*}
We can apply the upper bounds on the degree-$k$ moments of $\xi^{(i)}$ and  $X^{(i)}$ respectively.
This allows us to conclude that
$$
\E \lp[  
\lp(
\xi^{(i)} ( \beta - \beta')^\top X^{(i)}
\rp)^{k}
\rp] 
\leq k^{k/2} Q \sigma^k \normt{ \beta - \beta' }^k.
$$
Applying \Cref{cor:sos-iid-moment} then gives that
\begin{align*}
\E \lp[ 
\lp( \sum_{i=1}^n z_i \rp)^k
\rp]
\leq n^{k/2} \; k^k  Q \sigma^k \normt{ \beta - \beta' }^k.
\end{align*}
In other words, we have that
$$
\lp( 
\E \lp[ 
\lp( \sum_{i=1}^n z_i \rp)^k
\rp] \rp)^{1/k}
\leq k \sqrt{n} \; \sigma  \; \normt{ \beta - \beta' } Q.
$$
By Chebyshev's inequality, it holds that
\begin{align}
\label{eq:new-C-concentration-app} 
\Pr 
\lp[
Z_2 > 10 k \sqrt{n} \; \sigma Q \; \normt{ \beta - \beta' }
\rp] \leq \alpha/120.   
\end{align}

By the union bound, the events in \Cref{eq:new-A-concentration-app}, \Cref{eq:new-B-concentration-app}, and \Cref{eq:new-C-concentration-app} are satisfied simultaneously with probability at least $1 - \alpha/40$. When that happens, $Z_{\beta'}(B)$ will be bounded from below by
\begin{align}
\label{eq:concentration-lb}
\frac{ \gamma  }{4} \; n \; \normt{\beta ' - \beta^*}^2
- \normt{\beta - \beta^*}^2 \; \alpha^{-1/k} \;
\lp( n +  100 \sqrt{kn} \; Q^{1/k}  \rp)
- 10 k \sqrt{n} \; \sigma Q \; \alpha^{-1/k} \; \normt{ \beta - \beta' }.
\end{align}
First, we claim that 
\begin{align}
\label{eq:first-distance}
&\normt{\beta' - \beta^*} 
\gg \normt{\beta - \beta^*} \\
\label{eq:second-distance}
&\normt{\beta' - \beta^*} 
\geq (1 - o(1))  \normt{\beta - \beta'}.
\end{align}
To prove \Cref{eq:second-distance},
we note that
$$\normt{\beta' - \beta^*}  
\geq \normt{\beta' - \beta} - \normt{\beta - \beta^*}
\geq (1 - o(1))  \normt{\beta - \beta'}.
$$ 
where the first inequality is the triangle inequality, and the second inequality is true by our assumption that $\normt{\beta - \beta^*} < R \ll \normt{\beta' - \beta}$.
\Cref{eq:first-distance} then follows immediately as 
$  \normt{\beta - \beta'} \gg  \normt{\beta - \beta^*}$.

With the above inequalities in mind, we proceed to argue that the positive term dominates all the negative terms in \Cref{eq:concentration-lb}.
Since $\gamma$ is a universal constant,
it follows that
$$
\gamma n \; \normt{\beta ' - \beta^*}^2
\gg 
\normt{\beta - \beta^*}^2 n.
$$
Next recall that
$ n \gg \prunebs $ by our assumption on $n$.
It then follows that 
$$
\gamma n \; \normt{\beta ' - \beta^*}^2
\gg 
\normt{\beta - \beta^*}^2 \; 100 \sqrt{kn} \; Q^{1/k} \; \alpha^{-1/k}.
$$
Lastly, recall that we assume $ \normt{\beta - \beta'} \gg 
k \sigma Q \; \alpha^{-1/k} / \sqrt{n}.
$
Combining this with \Cref{eq:second-distance} and \Cref{eq:first-distance} then gives that
$ \normt{\beta' - \beta^*} \geq (1 - o(1)) \; \normt{ \beta - \beta' } \gg k \sigma Q \; \alpha^{-1/k} / \sqrt{n}$, which implies that
$ \normt{\beta' - \beta^*}^2 \gg \; \normt{ \beta - \beta' } k \sigma Q \; \alpha^{-1/k} / \sqrt{n}$.
It then follows that
$$
\gamma n \; \normt{\beta ' - \beta^*}^2
\gg 
10 k \sqrt{n} \; \sigma Q \; \alpha^{-1/k} \; \normt{ \beta - \beta' }
$$
Combining the above gives that 
\begin{align}
\label{eq:population-Z-violation-app}
\Pr_{B \sim \D_{\beta^*}} \lp[ Z_{\beta'}(B) > 0 \rp] > 1 - \alpha / 40 \, ,    
\end{align}
as long as $n \gg \prunebs$ and 
$ \normt{\beta - \beta'} \gg 
\betagap$.

    Since the inlier batches are all drawn independently, it holds
    the faction of inlier batches violating the condition
    is at most $\alpha/20$ with probability at least $1 - \delta / K^2$ when the number of inlier batches drawn are at least $N \gg \log(K /\delta )  \alpha^{-2}$.
    Since the size of $L$ is at most $K$,
    there are at most $K-1$ many $\beta'$ we need to consider.
    Condition~\ref{Eq2Cond1} is therefore satisfied with probability at least $1 - \delta$ by the union bound.

    When we have $\log(K) > d^2$, we will need an alternative argument.
    We note that $Z_1$ and $Z_2$ are both linear functions in the 
    random variables $\sum_{i=1}^n X^{(i)} {X^{(i)}}^T$ of dimension $d^2$, and $Z_3$ is a linear function in the random variables $\sum_{i=1}^n \xi^{(i)} \; X^{(i)} $.
    Thus, overall, for any $\beta' \in \R^d$, $Z_{\beta'}(B)$ is a linear function in $O\lp(d^2\rp)$ many random variables.
    It then follows that, for any $\beta' \in \R^d$, $\mathbbm 1 \{ Z_{\beta'}(B) > 0 \}$ is an $O(d^2)$-dimensional linear threshold function, which has VC-dimension $O(d^2)$.
    Let $G$ be $N' \gg d^2 \alpha^{-2} \log(1/\delta)$ many inlier batches drawn from $D_{\beta^*}$. 
    By the VC-inequality,  we thus have
    \begin{align*}
        \Pr_{G}\lp[  
        \sup_{ \beta' \in \R^d }
        \lp |
        \Pr_{ B \sim G }
        \lp[ Z_B(\beta') > 0 \rp]
        - 
        \Pr_{ B \sim D_{\beta^*} }
        \lp[ Z_B(\beta') > 0 \rp]
        \rp |
        > \alpha / 20
        \rp] \leq \delta.
    \end{align*}
    Combining this with \Cref{eq:population-Z-violation-app} then
    shows that Condition~\ref{Eq2Cond1} is satisfied with probability at least $1 - \delta$.
\end{proof}

\hide{
\section{Removing $R$ dependency from \Cref{thm:main}}
It suffices to show that we can construct an initial hypothesis list of size $\poly(1/\alpha)$ containing some candidate regressor $\hat \beta$ such that $ \norm{ \beta - \beta^* }_2 \leq \poly(d)$.
After that, we can run the pruning routine (\Cref{prop:prune}) to reduce it into a list of size $O(1/\alpha)$.
This then allows us to replace $R$ with $\poly(d)$ in the base case of the inductive argument of \Cref{thm:main}.

Let $\{ (X_i, y_i) \}_{i=1}^m$, where $m = 100 \alpha$, be $m$ \iid samples drawn from the initial linear regression instance.
It is then guaranteed that at least one of the samples drawn will be an inlier.
For an inlier sample $(X,y)$, we note that 
$ X y $ has expectation $\beta^*$, and sub-exponential tail as shown in [REF].
Thus, with high constant probability, we have 
$\normt{X y - \beta^*} \leq \poly(d)$. 
This complete the proof.}

\section{Reduction from the Batch-Setting to the Non-Batch Setting}
\label{app:reduction}

We point out a simple reduction (in \Cref{cl:reduction}), which allows one to solve list-decodable linear regression in the non-batch setting using an algorithm for the batch-setting in a black-box manner. The idea is the trivial observation we can construct our own batches of size $n$ just by collecting together $n$ individual labeled examples. 
Denote by $\alpha$ the probability that an individual example is inlier. Then the probability that a batch made in the aforementioned way consists only of inliers is $\alpha_B = \alpha^n$. Then, running any  algorithm designed for the batch setting should yield guarantees where the corruption rate is being replaced by $\alpha^n$. In particular, if we denote by $m(\alpha_B,d),\ell(\alpha_B)$ and $ \mathrm{error}(\alpha_B)$ the sample complexity, list size and error guarantee of the black-box algorithm (which are functions of the corruption rate $\alpha_B$ and maybe other parameters like the dimension $d$ which do not matter for this discussion), then the resulting algorithm for solving the problem in the non-batch setting will have its sample complexity, lits size and error rate being $m(\alpha^n,d),\ell(\alpha^n)$ and $\mathrm{error}(\alpha^n)$ respectively.

For convenience, throughout this section we will restrict our ourselves to the case $\alpha < 1/2$, which corresponds to more than half of the data being  corrupted. We are interested only in this since this is the truly ``list-decodable setting''. For this reason, we will use $n \ll \log(1/\alpha_B)$ in the claim below (because we have already mentioned that $\alpha=\alpha_B^{1/n}$, thus in order to have $\alpha<1/2$ we need $n \ll \log(1/\alpha_B)$).

\begin{claim}\label{cl:reduction}
Denote by $d$ the ambient dimension and by $\alpha \in (0,1/2)$ the corruption level for the non-batch setting. Let $c>0$ be a sufficiently small absolute constant.
    Suppose that $\cA$ is an algorithm with the guarantee that for any $\alpha_B \in (0,1/2)$  it can draw $m(\alpha_B,d)$ batches of size $n=c \log(1/\alpha_B)$ from the corrupted distribution of \Cref{def:glr} with corruption level $\alpha_B$, and output a list of size $\ell(\alpha_B)$ of vectors which contains a vector $\hat{\beta}$ with $\|\hat{\beta}-\beta^{*}\|_2 \leq \mathrm{error}(\alpha_B)$.
    Then, there exists another algorithm $\cA'$ that draws $m(\alpha^n,d)$ batches of size $1$ from the corrupted distribution of \Cref{def:glr} with rate of corruption $\alpha$, and outputs a list of size $\ell(\alpha^n)$ of vectors which contains a vector $\hat{\beta}$ with $\|\hat{\beta}-\beta\|_2 \leq \mathrm{error}(\alpha^n)$.
\end{claim}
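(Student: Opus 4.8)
The plan is to have $\cA'$ fabricate its own batches out of individual labelled examples and then invoke $\cA$ as a black box. In the non-batch instance of \Cref{def:glr} with corruption level $\alpha$ (``batches of size $1$''), each example is, independently of the rest, a sample from $D_{\beta^{*}}$ with probability $\alpha$ and a sample from some fixed distribution $\cH$ with probability $1-\alpha$. Given such sampling access, $\cA'$ draws $m(\alpha^{n},d)$ examples, partitions them into consecutive blocks of $n$ examples each (for $n$ as in the claim, linked to the effective corruption level by $\alpha_B=\alpha^{n}$), treats each block as one batch, runs $\cA$ on the resulting collection with corruption parameter $\alpha^{n}$, and returns the list $\cA$ outputs.

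The crux is to verify that these blocks are distributed exactly as a legitimate input to the batch model of \Cref{def:glr} with clean probability $\alpha^{n}$. Fix a block $B=(s_{1},\dots,s_{n})$ and condition on the random subset $I\subseteq[n]$ of coordinates drawn from $D_{\beta^{*}}$: the law of $B$ is $\sum_{I\subseteq[n]}\alpha^{|I|}(1-\alpha)^{n-|I|}\mu_{I}$, where $\mu_{I}$ puts $D_{\beta^{*}}$ on the coordinates in $I$ and $\cH$ on the rest. The $I=[n]$ term equals $\alpha^{n}\,D_{\beta^{*}}^{\otimes n}$, and the remaining terms sum to $(1-\alpha^{n})$ times the fixed distribution $\cH_{B}:=\frac{1}{1-\alpha^{n}}\sum_{I\subsetneq[n]}\alpha^{|I|}(1-\alpha)^{n-|I|}\mu_{I}$, so $B$ has law $\alpha^{n}D_{\beta^{*}}^{\otimes n}+(1-\alpha^{n})\cH_{B}$. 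Since the examples are i.i.d.\ and distinct blocks use disjoint examples, the blocks are mutually independent, and each is --- independently, with probability $\alpha^{n}$ --- a clean batch (i.i.d.\ from $D_{\beta^{*}}$) and an arbitrary batch (namely $\cH_{B}$) otherwise, which is precisely the promise of \Cref{def:glr} at corruption level $\alpha^{n}$. Hence the guarantee of $\cA$ transfers verbatim to this input: with the stated probability its output is a list of size $\ell(\alpha^{n})$ containing some $\hat\beta$ with $\|\hat\beta-\beta^{*}\|_{2}\le \mathrm{error}(\alpha^{n})$, drawn from $m(\alpha^{n},d)$ examples, and $\cA'$ inherits all of this.

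The one point that needs care is the coupling of the batch-size parameter: $\cA$ is promised to work with batches of size $c\log(1/\alpha_B)$ at corruption level $\alpha_B$, and since our blocks have size $n$ and induce $\alpha_B=\alpha^{n}$, consistency forces $n$ and $\alpha$ to satisfy $n=c\log(1/\alpha^{n})$ --- exactly the relation $\alpha=\alpha_B^{1/n}$ with $n$ a small multiple of $\log(1/\alpha_B)$ singled out in the discussion preceding the claim, which pins down the range of parameters for which the reduction is meaningful. This bookkeeping, together with the verification that a single block is the mixture displayed above, is the step that needs the most attention; I do not expect it to present a genuine obstacle, since neither the independence of the blocks nor the decomposition of one block requires anything beyond the definition of the contamination model.
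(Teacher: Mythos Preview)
Your proposal is correct and follows exactly the informal argument the paper gives in the paragraph preceding the claim (the paper offers no formal proof beyond that sketch): fabricate batches by grouping $n$ individual examples, observe that a batch is all-clean with probability $\alpha^{n}$, and invoke $\cA$; your explicit mixture decomposition and independence check are already more detailed than what appears there. One small bookkeeping slip: to supply $\cA$ with its $m(\alpha^{n},d)$ batches of size $n$ you must draw $n\cdot m(\alpha^{n},d)$ single examples rather than $m(\alpha^{n},d)$, but the claim as stated is itself loose on this count and the extra factor of $n=c\log(1/\alpha_B)$ is immaterial for the lower-bound discussion that follows.
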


This reduction, in combination with the lower bound of \cite{DiaKPPS21}, can serve as informal evidence that doing  list-decodable linear regression with batch sizes $n \ll \log(1/\alpha_B)$ likely requires exponential time. In particular, Theorem 1.5 in \cite{DiaKPPS21} provides evidence\footnote{By the term ``evidence'' we mean that the lower bound applies to the Statistical Query model. Although this does not imply hardness results for all efficient algorithms, SQ lower bounds have long served as strong indication of computational hardness.} that any algorithm with polynomial sample complexity needs exponential list-size or exponential runtime. Let $n = c\log(1/\alpha_B)$ for some constant $c\ll 1$. If \Cref{thm:main} were to allow for that batch size of  $n = c\log(1/\alpha_B)$ (recall that it right now only works for $n \gg \log(1/\alpha_B)$), 
then, by the reduction above (\Cref{cl:reduction}) we would obtain an algorithm for the non-batch setting, with quasi-polynomial runtime and list size which would contradict the hardness evidence.

\end{document}